\documentclass[11pt,a4paper]{article}

\usepackage[british]{babel}
\usepackage[a4paper,top=2.5cm,bottom=2.5cm,left=2.5cm,right=2.5cm]{geometry}
\usepackage[utf8]{inputenc}
\usepackage[T1]{fontenc}
\usepackage{lmodern}

\usepackage{amsmath,amssymb,amsthm}
\usepackage{amsfonts}
\usepackage{mathrsfs}
\usepackage{bm}
\usepackage{graphicx}
\usepackage{xcolor}
\usepackage{booktabs}
\usepackage{threeparttable}
\usepackage{multirow}
\usepackage{diagbox}
\usepackage{subcaption}
\usepackage{algorithm}
\usepackage{algorithmic}
\usepackage{listings}
\usepackage{enumitem}
\usepackage{chngcntr}
\usepackage{float}
\usepackage{comment}
\usepackage{authblk}
\usepackage[justification=raggedright,singlelinecheck=false]{caption}
\usepackage{setspace}
\usepackage{natbib}
\usepackage[colorlinks=true, allcolors=blue]{hyperref}

\newtheorem{theorem}{Theorem}
\newtheorem{proposition}{Proposition}
\newtheorem{lemma}{Lemma}
\newtheorem{corollary}{Corollary}
\newtheorem{remark}{Remark}
\theoremstyle{definition}
\newtheorem{definition}{Definition}
\newtheorem{assumption}{Assumption}

\newcommand{\E}{\mathbb{E}}
\newcommand{\R}{\mathbb{R}}
\newcommand{\N}{\mathcal{N}}
\newcommand{\loss}{\mathcal{L}}
\newcommand{\enc}{f_\phi}
\newcommand{\dec}{G_\theta}
\newcommand{\disc}{D_\psi}

\newcommand{\Zz}{\mathcal{Z}}
\newcommand{\Uset}{\mathcal{U}}
\DeclareMathOperator{\MMD}{MMD}
\DeclareMathOperator{\diag}{diag}

\newif\ifinformsbuild
\informsbuildfalse

\title{Generative Robust Optimisation}

\author[1]{Yuhui Yin}
\author[1,*]{Vassilis M. Charitopoulos}
\affil[1]{\small Department of Chemical Engineering, Sargent Centre for Process
Systems Engineering, UCL (University College London), Torrington Place,
London WC1E 7JE, UK}
\affil[*]{\small Corresponding author. \texttt{v.charitopoulos@ucl.ac.uk}}
\date{}

\begin{document}
\maketitle

\begin{abstract}
Classical uncertainty sets for robust optimisation impose fixed geometric shapes that cannot represent the complex dependencies present in real-world data.
We propose \emph{Generative Robust Optimisation} (GRO), a framework in which a deep generative model defines the uncertainty set as the image of a neural network decoder over a calibrated latent set, naturally accommodating nonlinear correlations, asymmetry, and multimodality.
A five-point evaluation framework (reconstruction fidelity, distribution matching, latent regularity, robust relevance, and computational tractability) provides systematic, model-agnostic criteria for assessing any neural network-based uncertainty set.
We instantiate this framework with a Wasserstein Adversarial Autoencoder employing Gaussian mixture model-guided training for latent regularity and constraint-consistency regularisation for robust relevance.
Restricting the decoder to ReLU activations enables exact worst-case verification through mixed-integer programming embedding.
Extensive experiments on a production planning problem across six uncertainty distributions and six generative architectures, together with a multi-period facility location study, validate the framework and demonstrate that systematic attention to all five criteria yields uncertainty sets that are simultaneously expressive, well-calibrated, and optimisation-tractable.

\medskip
\noindent\textbf{Keywords:} robust optimisation, data-driven uncertainty sets, deep generative models, five-point evaluation framework, Wasserstein autoencoder, MILP
\end{abstract}


\section{Introduction}\label{sec:introduction}

Robust optimisation (RO) seeks decisions that remain feasible under all realisations of uncertain parameters within a prescribed uncertainty set~\citep{BenTal2009,Bertsimas2011}.
The geometry of this set governs both the conservatism of the resulting decisions and the computational tractability of the robust counterpart.
Box uncertainty sets, introduced by \citet{Soyster1973}, preserve the original mathematical nature of the deterministic problem but ignore correlations among uncertain parameters and hence they rank first in terms of conservatism.
Ellipsoidal sets~\citep{BenTalNemirovski1998} capture second-order correlations and lead to second-order cone programs, yet impose symmetric, unimodal structure.
In order to ameliorate the worst-case orientation of RO problems, polyhedral or budget uncertainty sets were introduced by~\citet{BertsimasSim2004} to control the number of parameters that simultaneously deviate from their nominal values, providing an interpretable approach for adjusting conservatism.
The robust-optimisation literature shows that this geometry choice fundamentally constrains the trade-off between conservatism, tractability, and probabilistic protection~\citep{BenTalNemirovski1998,BertsimasSim2004,Bertsimas2011,Bertsimas2018}.
Conventional uncertainty sets are broadly applicable and yield tractable robust counterparts~\citep{Bertsimas2011}. However, many practical applications involve uncertainty that can deviate substantially from the assumptions embedded in these geometric shapes.
Production costs may show heavy-tailed or multimodal fluctuations, renewable generation varies with weather regimes that induce nonlinear correlations, and demand patterns display asymmetric dependencies across time periods and regions.
When the true uncertainty resides within an irregularly shaped region, any fixed geometric set must either enclose excessive empty space, which may lead to over-conservatism, or fail to cover critical tail scenarios that cause constraint violations.

Data-driven methods have sought to bridge this gap between assumed geometry and observed data.
\citet{Bertsimas2018} proposed a systematic schema for constructing uncertainty sets from data via statistical hypothesis tests (Kolmogorov--Smirnov, Anderson--Darling, $\chi^2$) with finite-sample probabilistic guarantees, representing a major advance in principled set construction while retaining geometric shapes.
Machine learning methods offer further flexibility. Kernel-learning uncertainty sets capture nonlinear decision boundaries in robust combinatorial optimisation~\citep{Loger2024}. Learning-based robust-optimisation procedures provide statistical guarantees for data-trained decisions~\citep{Hong2021}. Neural networks extend uncertainty-set learning to nonconvex geometry~\citep{Goerigk2023}.
Ensemble methods also learn set sizes jointly with forecasting models~\citep{Andrianesis2024,Stratigakos2025}.
Distributionally robust optimisation (DRO) takes a different approach by hedging against distributional ambiguity via Wasserstein balls~\citep{Kuhn2019,Gao2024} or moment-based ambiguity sets~\citep{WiesemannKuhnSim2014}.
These methods impose prescribed shapes (typically divergence balls or moment polytopes) on the set of possible distributions rather than on the parameter space directly.
A growing literature develops tractable reformulations for an expanding range of such ambiguity sets~\citep{KuhnShafieeWiesemann2025}. As in robust optimisation, however, the geometry of the ambiguity set is prescribed a priori rather than learned from the data.
Despite this progress, all existing data-driven methods remain \emph{discriminative}: they learn only the boundary of a region (classification surfaces, kernel contours, or convex hulls).

Advances in generative modelling, including variational autoencoders~\citep{KingmaWelling2014}, generative adversarial networks~\citep{Goodfellow2014}, normalising flows~\citep{Papamakarios2021}, and diffusion models~\citep{Ho2020}, have produced flexible models for learning complex distributions from data.
These models have been applied to \emph{scenario generation} and \emph{distributional modelling} in optimisation: producing samples or approximating distributions for stochastic programming or distributionally robust optimisation~\citep{Xu2024flowdro,Chenreddy2022}.
However, generating scenarios from a learned distribution is fundamentally different from defining an \emph{uncertainty set} for robust optimisation: the former produces samples, while the latter requires a bounded region over which worst-case feasibility must be guaranteed.

We propose \emph{Generative Robust Optimisation} (GRO), which bridges this gap by defining the uncertainty set as the image of a trained decoder:
\begin{equation}\label{eq:intro-gen-set}
    \Uset = \bigl\{\dec(\bm{z}) : \bm{z} \in \Zz\bigr\}
\end{equation}
where $\dec: \R^{n_z} \to \R^{n_{\xi}}$ is a decoder trained so that $\dec(\bm{z}),\; \bm{z} \sim \N(\bm{0}, \bm{I})$ approximates the data distribution, and $\Zz \subset \R^{n_z}$ is a simple set (box or ball) in the latent space.
Concurrent work~\citep{Brenner2025} explores this approach in adaptive robust optimisation, embedding a variational autoencoder (VAE) as a preliminary instantiation. A plain VAE, however, guarantees neither a calibrated latent-to-data mapping nor exact worst-case verification. This paper proposes a framework and architecture that provide both.
By the universal approximation theorem~\citep{LuLu2020}, a sufficiently expressive decoder can in principle represent complex set geometries (including sets with nonlinear correlations, asymmetric tails, and multimodal structure) that no single geometric shape can capture.
The same trained decoder also supports different levels of conservatism by adjusting the latent bounds (box widths, ball radius, or quantile level) without retraining.
This representation also yields unified calibration: training transports heterogeneous data-space distributions to a chosen latent reference distribution, here $\N(\bm{0},\bm{I})$. In that standard-normal space, one set of box, ball, or quantile rules applies across cases. The decoder then maps the calibrated latent set back to data space, recovering the case-specific correlations, skewness, and multimodality.
However, the practical validity of this construction depends on requirements that go well beyond standard generative-model training, and that to date have not been systematically identified or addressed.

Despite the potential of such decoder-based uncertainty sets, no systematic framework exists for evaluating whether a generative model is suitable for deployment in robust optimisation.
A model that achieves low reconstruction error may still produce a poorly calibrated latent space, causing the uncertainty set to miss critical tail scenarios.
A model with excellent distribution matching may employ architectures that are computationally intractable for worst-case verification.
A model satisfying both reconstruction and distributional criteria may still fail to preserve the constraint-violation structure of the uncertainty that matters for the downstream optimisation problem.
Each of these limitations corresponds to a distinct stage of the pipeline from data to robust decisions, and each requires its own evaluation criterion.
We propose a five-point evaluation framework that identifies the jointly necessary conditions a generative model must satisfy to serve as a reliable uncertainty set for robust optimisation: reconstruction fidelity (P1), distribution matching (P2), latent regularity (P3), robust relevance (P4), and computational tractability (P5).
We instantiate this framework with the Wasserstein Adversarial Autoencoder with Gaussian mixture model-guided training (WAAE-GMM-RO), an architecture designed to satisfy all five criteria through targeted training objectives. Any encoder--decoder architecture satisfying all five criteria can serve as the generative uncertainty set.

\paragraph{Contributions.}
The specific contributions of this paper are as follows.
\begin{enumerate}[leftmargin=*]
    \item A \emph{five-point evaluation framework} for generative uncertainty sets in robust optimisation. The framework is model-agnostic and identifies the jointly necessary conditions that any decoder-based uncertainty set must satisfy: reconstruction fidelity (P1), distribution matching (P2), latent regularity (P3), robust relevance (P4), and computational tractability (P5). As part of the framework we also develop three latent set constructions with calibration theory linking latent bounds to data-space coverage, with formal statements and proof details collected in Appendix~\ref{ec:proofs}.

    \item A \emph{neural-network instantiation, WAAE-GMM-RO, that simultaneously satisfies all five points.} Key components include an encoder--decoder pair preprocessed by principal component analysis (PCA), a Wasserstein adversarial data critic for (P1)--(P2), a GMM-guided component moment constraint that provides a constructive sufficient condition for (P3), a constraint-consistency loss that teaches the decoder to preserve the constraint-violation structure of the uncertainty distribution in the tails that govern worst-case feasibility (P4), and a single-pass ReLU decoder with skip connection that is exactly MILP-embeddable (P5).

    \item A \emph{generative robust-optimisation solution pipeline} that turns the trained decoder into a deployable oracle. We export the ReLU-based decoder in its MILP form, followed by an offline optimality-based bound tightening (OBBT) to obtain an exact MIP worst-case verification of the decoded uncertainty set, and run that verification inside a parallel GRO cutting-plane algorithm~\citep{Marousi2025} with a feasibility check that prunes the bulk of MIP solves.

    \item \emph{Computational validation.} On a production planning problem we evaluate six uncertainty distributions (independent / correlated $\times$ normal / skewed / mixture) and five generative architectures (VAE, WAAE, WAAE-GMM-RO, normalising flow, diffusion model), with analytical SOCP ground truth on the independent normal case and empirical worst-case violation matrices on the other five. We further extend the pipeline to combinatorial robust optimisation on a multi-period facility location problem with binary opening decisions and per-constraint MISOCP oracles, demonstrating that the same WAAE-GMM-RO decoder and parallel cutting-plane pipeline carry through to MILP-master settings.
\end{enumerate}

\paragraph{Paper organisation.}
The remainder of this paper is organised as follows.
Section~\ref{sec:gen-set} introduces the generative uncertainty set construction and the latent calibration theory. Section~\ref{sec:five-points} presents the five-point evaluation framework. Section~\ref{sec:waae} describes the proposed WAAE-GMM-RO architecture and Section~\ref{sec:waae-ro} the GMM-guided training with PCA preprocessing and the constraint-consistency (P4) loss. Section~\ref{sec:omlt} details the MILP - OBBT steps and the cutting-plane solving pipeline. Section~\ref{sec:results} reports the production planning case study (Section~\ref{sec:cs-case-study}), with analytical SOCP ground-truth comparison on the independent normal distribution and decoder-versus-empirical evaluation on the other five distributions. Section~\ref{sec:fl-case-study} extends the same pipeline to the multi-period facility location problem.  Finally, in Section~\ref{sec:conclusion} we provide concluding remarks and some open research directions.

\section{Methodology}
\label{sec:methodology}

The generative robust set construction follows a two-layer workflow (Fig.~\ref{fig:two-layer-workflow}) that organises the rest of this section. The offline layer (top, Algorithms~\ref{alg:waae-ro-p1} and~\ref{alg:waae-ro-p2}) produces the trained decoder. It learns the encoder--decoder pair from historical data $\bm{\xi} \sim P^{*}(\bm{\xi})$, preprocessed by PCA and standardisation and regularised in latent space by a $K$-component GMM guide. A second, tail-weighted phase adds robust relevance. Of the whole pipeline, only the trained decoder (yellow box) is reused at inference. The \emph{online} layer (bottom, Algorithm~\ref{alg:ro-cutting-plane}) employs that decoder within a robust cutting planes algorithmic procedure~\citep{Marousi2025}: at each iteration the master proposes a candidate decision $\bm{x}$, and the oracle then takes that $\bm{x}$ as a fixed parameter and searches the calibrated latent ball $\|\bm{z}\|_2 \leq \gamma$ for the worst-case constraint-violating scenario. The oracle runs with a three-phase inner structure: feasibility check, parallel exact-MIP verification, and robust cut generation. Confirmed cuts are added to the master and the algorithm converges when a robust-feasible decision $\bm{x}^{\star}$ is certified or when the computational budget set is exhausted. Sections~\ref{sec:gen-set}--\ref{sec:waae-ro} develop the offline layer and Section~\ref{sec:omlt} the online layer. The five-point framework (Section~\ref{sec:five-points}) is model-agnostic and applies to any encoder--decoder generative model. We instantiate it with the WAAE-GMM-RO architecture (Sections~\ref{sec:waae}--\ref{sec:waae-ro}), and Section~\ref{sec:results} evaluates VAE, normalising-flow, and diffusion alternatives against the same five points. The architecture choice matters most at (P5), where only piecewise-linear decoders admit an exact MILP embedding.

\begin{figure}[tbp]
\centering
\includegraphics[width=\textwidth]{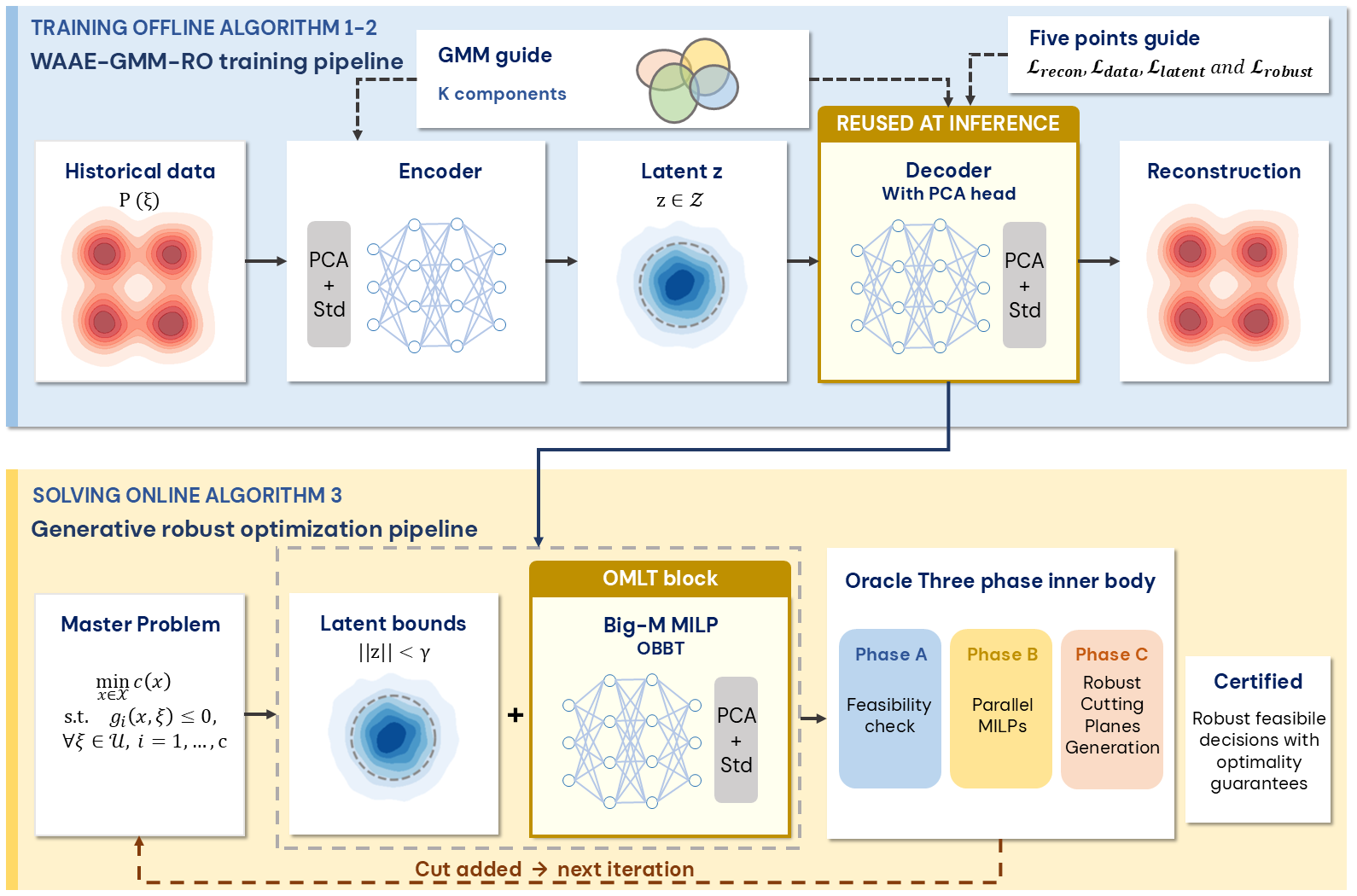}
\caption{Two-layer workflow. Top (offline): WAAE-GMM-RO training pipeline (Algorithms~\ref{alg:waae-ro-p1}--\ref{alg:waae-ro-p2}). Bottom (online): Generative robust optimization pipeline (Algorithm~\ref{alg:ro-cutting-plane}). The trained decoder is the shared component reused at inference.}
\label{fig:two-layer-workflow}
\end{figure}

\subsection{Problem Formulation}
\label{sec:problem}

Consider a decision-maker who must choose $\bm{x} \in \mathcal{X} \subseteq \R^{n_x}$ to minimise a cost function subject to constraints that depend on uncertain parameters $\bm{\xi} \in \R^{n_{\xi}}$. The robust optimisation (RO) formulation hedges against all realisations within an uncertainty set $\Uset$:
\begin{equation}\label{eq:ro}
\begin{aligned}
    \min_{\bm{x} \in \mathcal{X}} \; & c(\bm{x}) \\
    \text{s.t.} \; & g_i(\bm{x}, \bm{\xi}) \leq 0, \quad \forall\, \bm{\xi} \in \Uset,\; i = 1, \dots, m
\end{aligned}
\end{equation}
We assume that the decision-maker has access to historical observations of the uncertain parameters,
\(\mathcal{S} = \{\bm{\xi}^{(i)}\}_{i=1}^{N}\), drawn from an unknown data-generating distribution \(P^*\). The role of \(P^*\) is therefore not to provide an analytic probability model, but to name the distribution represented by the available data.

The generative construction uses an encoder--decoder pair,
\begin{align}
    \enc &: \R^{n_{\xi}} \to \R^{n_z} \label{eq:encoder}\\
    \dec &: \R^{n_z} \to \R^{n_{\xi}} \label{eq:decoder}
\end{align}
Eq.~\eqref{eq:encoder} maps a data-space scenario to latent coordinates, and Eq.~\eqref{eq:decoder} maps latent coordinates back to data space. The latent coordinates are calibrated against a reference distribution. In this paper we choose it to be the standard normal \(\N(\bm{0},\bm{I})\), because its independent, isotropic coordinates give closed-form box and ball quantiles. This choice standardises calibration in latent space.

We replace the classical uncertainty set $\Uset$ with the \emph{image of the trained decoder} over a calibrated latent set:
\begin{equation}\label{eq:gen-set}
    \Uset_{\dec} = \bigl\{\dec(\bm{z}) : \bm{z} \in \Zz \bigr\}
\end{equation}
where $\Zz \subseteq \R^{n_z}$ is a bounded calibrated latent set. Training makes \(\dec(\bm{z})\), with \(\bm{z} \sim \N(\bm{0},\bm{I})\), approximate \(P^*\), while the encoder is used to check and calibrate how historical scenarios occupy latent space.

Our goal is to produce a trained decoder $\dec$ that learns the geometry of $P^*$ from data and yields uncertainty sets that adapt to correlations, skewness, and multimodality without requiring a parametric family. The five-point evaluation framework introduced in Section~\ref{sec:five-points} establishes the conditions under which this construction is reliable.

Under this convention, using the generative set of eq.~\eqref{eq:gen-set}, the constraints of eq.~\eqref{eq:ro} can be recast as eq.~\eqref{eq:ro-gen}.
\begin{equation}\label{eq:ro-gen}
\ifinformsbuild
\begin{aligned}
    g_i\bigl(\bm{x},\, \dec(\bm{z})\bigr) &\leq 0,\\
    &\forall\, \bm{z} \in \Zz,\; i = 1, \dots, m
\end{aligned}
\else
    g_i\bigl(\bm{x},\, \dec(\bm{z})\bigr) \leq 0,
    \quad \forall\, \bm{z} \in \Zz,\; i = 1, \dots, m
\fi
\end{equation}
By reformulating eq.~\eqref{eq:ro} to eq.~\eqref{eq:ro-gen} in the latent space, the worst-case search is carried out over the latent variable $\bm{z}$, and $\Zz$ is a simple set that we construct.

\subsection{Generative Uncertainty Set Construction}
\label{sec:gen-set}

\subsubsection{Latent-Space Calibration and Oracle Bounds}
\label{sec:calibration}

After training, we encode the historical data and calibrate a latent set $\Zz$. The calibration is valid when the encoded distribution $q(\bm{z})$ is close to $\N(\bm{0}, \bm{I})$.

The latent set is a box or a ball of radius $\gamma$,
\begin{align}
    \Zz_{\text{box}} &= \bigl\{\bm{z} \in \R^{n_z} : |z_j| \leq \gamma,\; j=1,\dots,n_z\bigr\}, \label{eq:z-box}\\
    \Zz_{\text{ball}} &= \bigl\{\bm{z} \in \R^{n_z} : \|\bm{z}\|_2 \leq \gamma\bigr\} \label{eq:z-ball}
\end{align}
For the same radius the ball is inscribed in the box, so $\Zz_{\text{ball}} \subsetneq \Zz_{\text{box}}$.

The radius $\gamma$ is fixed by the calibration convention under $\N(\bm{0}, \bm{I})$. A per-coordinate (marginal) level $p \in (0,1)$ gives
\begin{equation}\label{eq:gamma-per-dim}
    \gamma_{\text{marg}} = \Phi^{-1}\!\Bigl(\frac{1+p}{2}\Bigr)
\end{equation}
covering each coordinate with probability $p$. A joint level $q \in (0,1)$ gives the box half-width $\alpha_q$ and the ball radius $\gamma_q$,
\begin{equation}\label{eq:gamma-joint}
    \alpha_q = \Phi^{-1}\!\Bigl(\frac{1 + q^{1/n_z}}{2}\Bigr), \qquad \gamma_q = \sqrt{F^{-1}_{\chi^2_{n_z}}(q)}
\end{equation}
each covering the whole set with probability $q$. The radius is thus $\gamma = \gamma_{\text{marg}}$ under marginal calibration (eq.~\eqref{eq:gamma-per-dim}), and $\gamma = \alpha_q$ (box) or $\gamma = \gamma_q$ (ball) under joint calibration (eq.~\eqref{eq:gamma-joint}). Here $\Phi$ is the standard normal CDF and $F_{\chi^2_{n_z}}$ the chi-squared CDF with $n_z$ degrees of freedom. Table~\ref{tab:radii} lists representative values.

\begin{table}[tbp]
\centering
\footnotesize
\caption{Latent calibration radii ($n_z = 6$). Joint box half-width $\alpha_q$, joint ball radius $\gamma_q$, and marginal radius $\gamma_{\text{marg}}$ at representative coverage levels $q$ / $p$.}
\label{tab:radii}
\begin{tabular}{cccc}
\toprule
$q$ / $p$ & Joint box $\alpha_q$ & Joint ball $\gamma_q$ & Marginal $\gamma_{\text{marg}}$ \\
\midrule
$50\%$ & $1.60$ & $2.31$ & $0.67$ \\
$90\%$ & $2.38$ & $3.26$ & $1.64$ \\
$99\%$ & $3.14$ & $4.10$ & $2.58$ \\
\bottomrule
\end{tabular}
\end{table}

Under marginal calibration ($\gamma = \gamma_{\text{marg}}$), the joint probability mass of the two sets is
\begin{equation}\label{eq:joint-prob}
    \mathbb{P}\bigl(\bm{z} \in \Zz_{\text{box}}\bigr) = p^{\,n_z}, \qquad
    \mathbb{P}\bigl(\bm{z} \in \Zz_{\text{ball}}\bigr) = F_{\chi^2_{n_z}}\!\bigl(\gamma_{\text{marg}}^{\,2}\bigr)
\end{equation}
below the per-coordinate level $p$.

When $q(\bm{z}) \approx \N(\bm{0},\bm{I})$, the decoded set $\Uset_{\dec}$ inherits the coverage of the latent set. This parallels the calibration schema of~\citep{Bertsimas2018}, where $\Gamma$ is set from data to achieve a desired coverage level. The difference is that our set is the image of a neural network rather than a geometric primitive, so it adapts its shape to distributions that boxes and ellipsoids cannot capture.

\begin{remark}[Nested family across coverage levels]
\label{rem:nested}
Applying different coverage levels to the \emph{same trained decoder} produces a family of nested uncertainty sets,
\[
    \Uset_{\dec}^{(0.50)} \subset \Uset_{\dec}^{(0.90)} \subset \Uset_{\dec}^{(0.99)}
\]
giving the practitioner a continuum of conservatism levels \emph{without retraining}.
\end{remark}

\subsubsection{MILP-embedding of decoded uncertainty sets}
\label{sec:decoded-set}

Let $\mathcal{S}_{\text{pre}}$ denote the data standardisation applied during the training phase (zero mean and unit variance per dimension, with any invertible preprocessing admissible, for example the PCA whitening of Section~\ref{sec:five-points}). Its inverse $\mathcal{S}_{\text{pre}}^{-1}$ maps back to raw units. The trained decoder defines the mapping $\bm{\xi} = \mathcal{S}_{\text{pre}}^{-1}(\dec(\bm{z}))$ from latent to data space. When $\bm{z}$ ranges over the whole space $\R^{n_z}$ (sampled $\bm{z} \sim \N(\bm{0},\bm{I})$), this mapping induces a generative distribution $P^{G}$ over the data, i.e. the learned approximation of $P^*$. However, at this stage its support is generally unbounded. Robust optimisation needs a \emph{bounded} set whose worst-case constraint violation is finite, so we restrict the domain to the calibrated set $\bm{z} \in \Zz$ from Section~\ref{sec:calibration}: the mapping is unchanged but its image is now bounded, and that bounded image is the uncertainty set the oracle optimises over.

Formally, $\Uset_{\dec} = \{\mathcal{S}_{\text{pre}}^{-1}(\dec(\bm{z})) : \bm{z} \in \Zz\}$ is the decoder image of the calibrated latent set, a bounded subset of the full, unbounded generative support obtained when $\bm{z}$ ranges over all of $\R^{n_z}$. It inherits its shape from $\Zz$ (marginal box, marginal ball, or joint ball).

For the oracle problem induced by the robust constraints in eq.~\eqref{eq:ro-gen},
\begin{equation}\label{eq:oracle-milp}
    v_{\text{gen}}(\bm{x}) = \max_{\bm{z} \in \Zz} \; g\bigl(\bm{x},\, \mathcal{S}_{\text{pre}}^{-1}(\dec(\bm{z}))\bigr)
\end{equation}
to be tractable as a (mixed-integer) linear program, the decoder must be \emph{MILP-representable}: composed of piecewise-linear activations whose input--output relation admits a finite disjunctive encoding. For the sake of this work we restrict ourselves to ReLU activation functions. Nonetheless, other piecewise-linear units are also admissible and the proposed framework extends seamlessly. The ReLU decoder used here has the form
\begin{equation}\label{eq:relu}
\begin{aligned}
    \dec(\bm{z}) = \bm{W}_L \cdot \sigma_{\text{ReLU}}\!\bigl( & \bm{W}_{L-1} \cdots \\
     \sigma_{\text{ReLU}}(\bm{W}_1 &\bm{z}  + \bm{b}_1) \cdots + \bm{b}_{L-1}\bigr) + \bm{b}_L
\end{aligned}
\end{equation}
where $\sigma_{\text{ReLU}}(x) = \max(0, x)$. Eq.~\eqref{eq:relu} is MILP-representable because each scalar ReLU unit is exactly encoded via the Big-M formulation~\citep{Tjeng2019}:
\begin{equation}\label{eq:bigm}
    y = \max(0, x) \iff
    \begin{cases}
        y \geq x, \quad y \geq 0, \\
        y \leq x - m(1-a), \quad y \leq Ma, \\
        a \in \{0, 1\}
    \end{cases}
\end{equation}
where $m$ and $M$ are pre-computed lower and upper bounds on $x$, and $a \in \{0,1\}$ is an activation indicator. When $\Zz$ is a polytope (marginal box, joint box), the overall oracle is a MILP. When $\Zz$ is a ball, it is a mixed-integer SOCP (convex quadratic constraint on $\bm{z}$, still tractable with modern solvers such as Gurobi~\citep{Gurobi}). To export the MIP-representable oracles we employ OMLT~\citep{Ceccon2022} to generate this encoding from an ONNX export of the trained decoder. Nevertheless, custom implementations can also be used. 

\begin{remark}[NLP export via global optimisation]
\label{rem:nlp-global}
A decoder built from smooth but non-piecewise-linear activations cannot be encoded as a MILP, yet it can be exported as a nonlinear program (NLP) and solved to global optimality. We adopt the MILP route because the exact Big-M encoding above is solved quickly in practice, whereas global NLP solution times scale poorly with decoder depth.
\end{remark}

\subsection{The Five-Point Evaluation Framework}
\label{sec:five-points}

A generative model intended for robust optimisation should satisfy requirements beyond standard generative quality metrics. We identify five criteria that are \emph{jointly necessary}: (P1) reconstruction fidelity, (P2) distribution matching in data space, (P3) latent regularity, (P4) robust relevance, and (P5) computational tractability. Throughout, we distinguish the \emph{prior} $p(\bm{z}) = \N(\bm{0},\bm{I})$ from the \emph{aggregated posterior} $q(\bm{z}) = \frac{1}{N}\sum_i \delta(\bm{z} - \enc(\bm{\xi}^{(i)}))$, the data-marginal distribution of latent codes across the training set, following the Wasserstein autoencoder framework~\citep{Tolstikhin2018}. A model that excels on distribution matching (P2) but has poor latent regularity (P3) will produce miscalibrated uncertainty sets that systematically underestimate worst-case violations. A model that satisfies (P1)--(P4) but is computationally intractable (P5) cannot be verified.

The whole pipeline operates on PCA-preprocessed data. Before training, the data is centred, scaled, and rotated by PCA, giving the concrete preprocessor $\mathcal{S}_{\text{pre}}$ used throughout this section:
\begin{equation}\label{eq:pca-scaler}
    \bm{\xi}_{\text{pca}} = V^{\top}\bigl((\bm{\xi} - \bm{\mu}) \oslash \bm{\sigma}\bigr)
\end{equation}
where $\bm{\mu}, \bm{\sigma} \in \R^{n_{\xi}}$ are the marginal training mean and standard deviation, $V \in \R^{n_{\xi} \times d_{\text{pca}}}$ has orthonormal columns (the PCA loading matrix, so $V^{\top} V = I_{d_{\text{pca}}}$), and $\oslash$ denotes elementwise division. The inverse map back to raw units is the linear head
\begin{equation}\label{eq:dec-head}
    h(\bm{y}) = \mathrm{diag}(\bm{\sigma})\, V\, \bm{y} + \bm{\mu}
\end{equation}
which is the exact inverse of eq.~\eqref{eq:pca-scaler} since $V^{\top} V = I_{d_{\text{pca}}}$. The head $h$ of eq.~\eqref{eq:dec-head} is reused as the final linear layer of the decoder.

\subsubsection{Point 1: Reconstruction Fidelity}

The encoder--decoder pair must faithfully reconstruct individual observations:
\begin{equation}\label{eq:p1}
    \loss_{\text{recon}} = \frac{1}{N}\sum_{i=1}^{N} \bigl\|\bm{\xi}^{(i)} - \dec(\enc(\bm{\xi}^{(i)}))\bigr\|^2
\end{equation}
This is essential because the latent calibration (Section~\ref{sec:calibration}) relies on the encoder mapping: if $\enc(\bm{\xi})$ does not map to a region whose decoder image is close to $\bm{\xi}$, the calibrated set $\Zz$ will not correspond to the intended data-space set. Low reconstruction error ensures the $\enc \to \Zz \to \dec$ pipeline is geometrically faithful. This is a per-sample fidelity requirement on training observations. (P2) and (P3) below prevent overfitting by matching the \emph{distribution} of generated samples to $P^*$.

Consider a scenario $\bm{\xi}$ that causes a constraint violation in the downstream optimisation. If $\dec(\enc(\bm{\xi})) \neq \bm{\xi}$, then the constraint value $g(\bm{x}, \dec(\enc(\bm{\xi})))$ may differ from the true value $g(\bm{x}, \bm{\xi})$, meaning the model may not identify this scenario as infeasibility-inducing during the worst-case search of eq.~\eqref{eq:oracle-milp}.

\subsubsection{Point 2: Distribution Matching in Data Space}

Generated samples $\hat{\bm{\xi}} = \dec(\bm{z})$, $\bm{z} \sim \N(\bm{0}, \bm{I})$, must approximate the data distribution $P^*$. We match it with an adversarial Wasserstein critic and a maximum mean discrepancy, and optionally a moment-matching term on the leading marginal statistics.

The first mechanism is adversarial. We train two WGAN-GP critics in parallel, one in PCA-standardised coordinates and one in raw standardised coordinates. The overall data-critic loss is the sum of the two, as shown by eq.~\eqref{eq:p2-critic}.
\begin{equation}\label{eq:p2-critic}
    \loss_{\text{critic}} = \loss_{\text{critic}}^{\text{pca}} + \loss_{\text{critic}}^{\text{std}}
\end{equation}
where each term has the standard WGAN-GP form
\begin{equation}\label{eq:p2-critic-single}
\begin{aligned}
    \loss_{\text{critic}}^{\star} = {} & \E_{\hat{\bm{\xi}}^{\star}}[\disc^{\star}(\hat{\bm{\xi}}^{\star})] - \E_{\bm{\xi}^{\star}}[\disc^{\star}(\bm{\xi}^{\star})] \\
    & + \lambda_{\text{gp}}\, \E_{\tilde{\bm{\xi}}^{\star}}\!\bigl[(\|\nabla_{\tilde{\bm{\xi}}^{\star}}\disc^{\star}(\tilde{\bm{\xi}}^{\star})\|_2 - 1)^2\bigr]
\end{aligned}
\end{equation}
In eq.~\eqref{eq:p2-critic-single}, the index $\star \in \{\text{pca}, \text{std}\}$ selects the space, where $\bm{\xi}^{\text{pca}}$ are samples in PCA coordinates obtained via eq.~\eqref{eq:pca-scaler} and $\bm{\xi}^{\text{std}} = (\bm{\xi} - \bm{\mu}) \oslash \bm{\sigma}$ are samples in raw standardised coordinates. The coefficient $\lambda_{\text{gp}}$ weights the gradient penalty and is held fixed at the standard value $\lambda_{\text{gp}} = 10$ throughout training~\citep{Gulrajani2017}. The point $\tilde{\bm{\xi}}^{\star} = \alpha \bm{\xi}^{\star} + (1-\alpha)\hat{\bm{\xi}}^{\star}$, with mixing weight $\alpha \sim U[0,1]$, is a random interpolation on the segment joining a real sample and a generated sample, and the penalty term pushes the norm of the critic gradient at these interpolated points towards $1$. The gradient penalty replaces weight clipping by soft-constraining each critic to be $1$-Lipschitz along these interpolated samples, which stabilises training and prevents mode collapse~\citep{Gulrajani2017}.

The adversarial critic gives a strong distributional signal but a noisy gradient, because it depends on a separately optimised network. We therefore add a maximum mean discrepancy (MMD), a kernel two-sample statistic that compares the real and generated samples directly, with no auxiliary network, and that vanishes exactly when the two distributions coincide for a characteristic kernel. In empirical form it is the sum of the within-sample and between-sample kernel averages, as shown by eq.~\eqref{eq:p2-mmd}.
\begin{equation}\label{eq:p2-mmd}
\begin{aligned}
    \loss_{\text{data-MMD}} &= \MMD^2_k\!\bigl(\{\bm{\xi}^{(i)}\}, \{\hat{\bm{\xi}}^{(j)}\}\bigr) \\
    &= \tfrac{1}{n^2}\sum_{i,i'} k(\bm{\xi}^{(i)}, \bm{\xi}^{(i')})
    + \tfrac{1}{n^2}\sum_{j,j'} k(\hat{\bm{\xi}}^{(j)}, \hat{\bm{\xi}}^{(j')}) \\
    &\quad - \tfrac{2}{n^2}\sum_{i,j} k(\bm{\xi}^{(i)}, \hat{\bm{\xi}}^{(j)})
\end{aligned}
\end{equation}
where $\{\bm{\xi}^{(i)}\}$ and $\{\hat{\bm{\xi}}^{(j)}\}$ are the $n$ real and $n$ generated samples in a minibatch and $k$ is an inverse multiquadric (IMQ) or RBF kernel. Both are characteristic kernels, making the MMD a proper distance between distributions~\citep{Tolstikhin2018}. We default to the IMQ for its heavier tails, which preserve a usable gradient when the two empirical distributions are far apart.

We optionally add a moment-matching term on the first four marginal moments between real and generated samples, as shown by eq.~\eqref{eq:p2-mom}.
\begin{equation}\label{eq:p2-mom}
\begin{aligned}
    \loss_{\text{data-mom}} = {} & \|\bm{\mu}_{\bm{\xi}} - \bm{\mu}_{\hat{\bm{\xi}}}\|^2 + \|\bm{\sigma}^2_{\bm{\xi}} - \bm{\sigma}^2_{\hat{\bm{\xi}}}\|^2 \\
    & + \|\bm{s}_{\bm{\xi}} - \bm{s}_{\hat{\bm{\xi}}}\|^2 + \|\bm{\kappa}_{\bm{\xi}} - \bm{\kappa}_{\hat{\bm{\xi}}}\|^2
\end{aligned}
\end{equation}
where $\bm{\mu}$, $\bm{\sigma}^2$, $\bm{s}$, and $\bm{\kappa}$ are the marginal mean, variance, skewness, and kurtosis vectors, respectively.

\subsubsection{Point 3: Latent Regularity}
\label{sec:p3}

The calibration of $\Zz$ (Section~\ref{sec:calibration}) assumes the encoded data distribution $q(\bm{z})$ is close to the prior $p(\bm{z}) = \N(\bm{0}, \bm{I})$. This assumption is critical: if it is violated, the quantile-based bounds in eqs.~\eqref{eq:z-box}--\eqref{eq:z-ball} do not have their intended coverage, and the uncertainty set $\Uset_{\dec}$ is miscalibrated.

It is important to distinguish the \emph{per-sample} posterior $q(\bm{z}|\bm{\xi})$ from the \emph{aggregated} posterior $q(\bm{z})$ introduced above. The per-sample KL divergence $\mathrm{KL}(q(\bm{z}|\bm{\xi}^{(i)}) \| p(\bm{z}))$, used in variational autoencoders, regularises each individual encoding to be close to the prior $\N(\bm{0},\bm{I})$.

However, even when every per-sample KL is small, the aggregated posterior can deviate substantially from $\N(\bm{0},\bm{I})$, so the quantile-based bounds in eqs.~\eqref{eq:z-box}--\eqref{eq:z-ball} underestimate or overestimate along the directions of deviation.

We regularise the aggregated posterior towards $\N(\bm{0}, \bm{I})$ with two losses of the same form as their data-space counterparts (eqs.~\eqref{eq:p2-mmd}--\eqref{eq:p2-mom}): a latent MMD loss, as defined by eq.~\eqref{eq:p3-mmd},
\begin{equation}\label{eq:p3-mmd}
    \loss_{\text{lat-MMD}} = \MMD^2_k\bigl( \{\enc(\bm{\xi}^{(i)})\},\, \{\bm{z}_j\}\bigr), \quad \bm{z}_j \sim \N(\bm{0}, \bm{I})
\end{equation}
and a latent moment-matching loss, as defined by eq.~\eqref{eq:p3-mom}.
\begin{equation}\label{eq:p3-mom}
\begin{aligned}
    \loss_{\text{lat-mom}} = \tfrac{1}{n_z}\bigl(\,
        & \|\bm{\mu}_z\|^2 + \|\diag(\bm{\Sigma}_z) - \bm{1}\|^2 \\
        & {} + \|\bm{s}_z\|^2 + \|\bm{\kappa}_z - 3 \cdot \bm{1}\|^2 \,\bigr)
\end{aligned}
\end{equation}
A stronger alternative that decomposes latent matching into $K$ per-component sub-conditions via a data-space Gaussian mixture is presented in Section~\ref{sec:waae-gmm}.

To evaluate the latent regularity, we employ marginal normality tests. Specifically, we report:
\begin{itemize}[nosep]
    \item $n_{\text{normal}}$: number of latent dimensions passing the Anderson-Darling and Shapiro-Wilk normality tests at $\alpha = 0.01$, a strict significance level chosen so that $n_{\text{normal}}$ only counts dimensions both tests confidently accept as Gaussian
    \item Eigenvalue spread: $\lambda_{\max}/\lambda_{\min}$ of $\mathrm{Cov}(\bm{z})$ (target: $1.0$, attained when the latent covariance is isotropic so $\mathrm{Cov}(\bm{z}) = \bm{I}$ matches the prior)
    \item Latent MMD: $\MMD(\{\enc(\bm{\xi}^{(i)})\}, \N(\bm{0},\bm{I}))$.
\end{itemize}
A model with $n_{\text{normal}} = n_z$ (every latent marginal passes both normality tests) and eigenvalue spread near $1$ (covariance approximately isotropic) is jointly close to $\N(\bm{0}, \bm{I})$, so the quantile-based calibration of Section~\ref{sec:calibration} is statistically meaningful.

\subsubsection{Point 4: Robust Relevance}
\label{sec:p4-intro}

(P1)--(P3) ensure the generative model produces high-quality samples and a well-calibrated latent space. In theory, a model with perfect (P1)--(P3) would capture the full distribution including tails, making (P4) unnecessary. In practice, however, finite data and the limits of a given architecture mean that few generative models simultaneously shape the aggregated latent into a true $\N(\bm{0},\bm{I})$ and reconstruct the data space faithfully, so (P1)--(P3) alone may not capture the specific scenarios that govern worst-case feasibility in the downstream optimisation. For a decision $\bm{x}$, define the worst-case violation:
\begin{align}
    v_{\text{gen}}(\bm{x}) &= \max_{\bm{z} \in \Zz}\; g\bigl(\bm{x}, \dec(\bm{z})\bigr),
    \label{eq:v-gen} \\
    v_{\text{ref}}(\bm{x}) &= \max_{n = 1,\dots,N}\; g(\bm{x}, \bm{\xi}^{(n)})
    \label{eq:v-ref}
\end{align}
The gap between the decoded worst-case of eq.~\eqref{eq:v-gen} and the reference worst-case of eq.~\eqref{eq:v-ref}, expressed as a pool-aggregated relative error, quantifies the discrepancy:
\begin{equation}\label{eq:delta-wc}
    \Delta_{\text{wc}} = \frac{\sum_{k=1}^{K_{\text{pool}}} \bigl|\,v_{\text{gen}}(\bm{x}_k) - v_{\text{ref}}(\bm{x}_k)\,\bigr|}{\sum_{k=1}^{K_{\text{pool}}} \bigl|\,v_{\text{ref}}(\bm{x}_k)\,\bigr|}
\end{equation}
Eq.~\eqref{eq:delta-wc} measures the pool-averaged relative error between generated and reference worst-case violations, a dimensionless ratio of total discrepancy over total reference violation magnitude.

In combination with the \emph{conservatism ratio}, abbreviated \emph{conserv} and defined as $\bar{v}_{\text{gen}}/\bar{v}_{\text{ref}}$ (bars denote averages over the $K_{\text{pool}}$ pool decisions), $\Delta_{\text{wc}}$ characterises the magnitude of the gap once the decision is on the safe side. The conservatism ratio is what determines that safe side: \emph{conserv} $\ge 1$ means the decoded worst-case meets or exceeds the reference, so the resulting decision remains robust-feasible. A value of \emph{conserv} $<1$ is the unsafe case, since the decoder underestimates the true worst case and may certify infeasible decisions as feasible.

(P4) addresses this gap by directly measuring and training for constraint-violation accuracy in the distributional regions that govern worst-case violations. These regions may correspond to tails in specific dimensions rather than the joint tail of all dimensions, depending on the structure of the constraint function $g$. Reducing $\Delta_{\text{wc}}$ towards zero while keeping \emph{conserv} $\ge 1$ (the safe regime for RO) is the primary objective of (P4) training.

(P4) is problem-driven: it requires knowledge of the constraint function $g$ and a pool of candidate decisions. We detail the (P4) training procedure in Section~\ref{sec:waae-ro}.

\subsubsection{Point 5: Computational Tractability}
\label{sec:p5}

Even with perfect (P1)--(P4) scores, a generative model is not useful for robust optimisation unless its decoder can be embedded in the worst-case problem~\eqref{eq:oracle-milp}. (P5) requires:
\begin{itemize}[nosep]
    \item The decoder $\dec$ can be encoded \emph{exactly} as algebraic constraints, not merely approximated.
    \item The resulting formulation is a tractable class (MILP), rather than a mixed-integer nonlinear program (MINLP).
    \item The model size yields practical solution times (number of binary variables, LP relaxation tightness).
\end{itemize}

(P5) concerns the existence of an \emph{exact}, solver-friendly embedding, not worst-case complexity. A mixed-integer linear program is NP-hard in general, but for the decoder sizes used here a branch-and-bound solver verifies the worst-case oracle to global optimality in practical time. An MINLP or a non-exact (approximate) encoding provides neither this exact verification nor reliable global solves, so a model that forces one fails (P5) even when it scores well on (P1)--(P4).

\subsubsection{Cross-Model Assessment}

Table~\ref{tab:five-point} evaluates representative generative model families against the five-point framework. Each family fails on at least one point, which motivates the Wasserstein Adversarial Autoencoder (WAAE) design (Section~\ref{sec:waae}) that targets all five simultaneously. Each row highlights the model's primary failure mode, and a dash in the (P4) column indicates no built-in (P4) mechanism. The MIP embedding column gives the number of binary variables and the MIP type (MILP or MINLP) of each decoder's embedding. NF and DDIM attain strong distribution matching (P2) through their multi-step structure, which is also what makes (P5) intractable.

\begin{table}[tbp]
\centering
\caption{Qualitative five-point assessment. In the MIP-embedding column, the integer is the number of binary variables in the model. VAE = variational autoencoder, GAN = generative adversarial network, NF = normalising flow (RealNVP), DDIM = denoising diffusion implicit model.}
\label{tab:five-point}
\scriptsize
\begin{tabular}{lccccccc}
\toprule
\textbf{Model} & \textbf{(P1)} & \textbf{(P2)} & \textbf{(P3)} & \textbf{(P4)} & \textbf{(P5)} & \textbf{MIP embedding} & \textbf{Primary failure} \\
\midrule
VAE (KL) & \checkmark & $\sim$ & $\sim$ & --- & \checkmark & 128 (MILP) &
    (P3): KL $\not\Rightarrow q(\bm{z}) \approx \N$ \\[2pt]
GAN & --- & \checkmark & $\times$ & --- & \checkmark & 128 (MILP) &
    (P3): no encoder \\[2pt]
NF (RealNVP) & \checkmark & \checkmark & \checkmark & --- & $\times$ & 256/layer (MINLP) &
    (P5): multistep, MINLP \\[2pt]
DDIM & $\sim$ & \checkmark & $\sim$ & --- & $\times$ & 128/step (MILP) &
    (P5): multistep \\[2pt]
\midrule
WAAE-RO (ours) & \checkmark & \checkmark & \checkmark & \checkmark & \checkmark & 128 (MILP) &
    Satisfies all five \\[2pt]
WAAE-GMM-RO (ours) & \checkmark & \checkmark & \checkmark & \checkmark & \checkmark & 128 (MILP) &
    GMM-guided (P3) \\
\bottomrule
\end{tabular}
\end{table}

We identify the operational requirements for which the five-point framework is necessary. They come from the robust counterpart of Section~\ref{sec:problem}, stated independently of the five points.

\begin{definition}[Deployable generative uncertainty set]
\label{def:deployable}
Let $\Uset_{\dec} = \dec(\Zz)$ be the decoded uncertainty set, with encoder $\enc$, aggregated posterior $q(\bm{z})$, calibrated latent set $\Zz$ at marginal level $p$, and decision pool $\{\bm{x}_k\}_{k=1}^{K_{\text{pool}}}$. The set $\Uset_{\dec}$ is \emph{deployable at level $p$} if:
\begin{itemize}[nosep]
    \item[\textbf{(I)}] \emph{Coverage.} On each latent dimension $j$ the encoded data attains marginal coverage at least $p$ of the calibrated interval, and the canonical encoder--decoder path is faithful: with $\hat{\bm{\xi}} = \dec(\enc(\bm{\xi}))$, a $p$-fraction of $P^{*}$ on each marginal is both captured by $\Zz$ through $\enc$ and returned by $\dec$ to within reconstruction tolerance.
    \item[\textbf{(II)}] \emph{Tractability.} The worst-case oracle $\max_{\bm{z} \in \Zz}\, g(\bm{x}, \dec(\bm{z}))$ is an exact MILP (box latent) or MISOCP (ball latent), solvable per query within the regime reported in Section~\ref{sec:results}.
    \item[\textbf{(III)}] \emph{Robust safety.} The decoder oracle does not underestimate the empirical worst case on the pool: $v_{\text{gen}}(\bm{x}_k) \ge v_{\text{ref}}(\bm{x}_k)$ for every $\bm{x}_k$.
\end{itemize}
\end{definition}

The necessity result holds under three standing assumptions.

\begin{assumption}[Canonical-path coverage]
\label{ass:canonical}
A scenario $\bm{\xi}$ counts as covered by $\Uset_{\dec}$ through the single encoder map $\bm{z} = \enc(\bm{\xi})$, the same map used to calibrate $\Zz$ in Section~\ref{sec:calibration}.
\end{assumption}

\begin{assumption}[Marginal-coverage convention]
\label{ass:marginal}
Coverage in (I) is marginal, matching the calibration radius $\gamma_{\text{marg}} = \Phi^{-1}((1+p)/2)$ of eq.~\eqref{eq:gamma-per-dim}.
\end{assumption}

\begin{assumption}[Violation-exposed pool]
\label{ass:pool}
The decision pool is sampled near the robust boundary, so $v_{\text{ref}}(\bm{x}_k) > 0$ for each $\bm{x}_k$, and the safety comparison in (III) is then well-posed.
\end{assumption}

Each assumption plays a role in the necessity result below. Assumption~\ref{ass:canonical} makes coverage an encoder test, so the argument never needs to invert the decoder. Assumption~\ref{ass:marginal} reads requirement~(I) as marginal rather than joint: even with a perfectly Gaussian aggregated posterior $q(\bm{z}) = \N(\bm{0}, \bm{I})$ the calibrated box holds only the joint mass $p^{n_z} < p$ (eq.~\eqref{eq:joint-prob}), so a joint-$p$ reading would be unattainable by construction.

\begin{proposition}[Necessity of (P1)--(P5)]
\label{prop:necessity}
Under Assumptions~\ref{ass:canonical}--\ref{ass:pool}, the decoded set $\Uset_{\dec} = \dec(\Zz)$ is deployable at level $p$ on the pool only if the underlying model $(\enc, \dec)$ satisfies all five points (P1)--(P5). Points (P1) and (P2) are each necessary for (I) Coverage, as is (P3) against variance-inflation deviations of the latent law. (P4) is necessary for (III) Robust safety, and (P5) is necessary for (II) Tractability.
\end{proposition}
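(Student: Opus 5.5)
The plan is to prove each necessity claim in contrapositive form. For each point, assume the model fails that point, in the operational sense fixed by the corresponding loss or metric, and show that the matching deployability requirement of Definition~\ref{def:deployable} fails. The five arguments are independent, so I will order them by the requirement each one attacks: (I) via (P1), (P2), (P3); then (III) via (P4); then (II) via (P5). First I will sharpen each point into a quantitative failure condition. (P1) fails when reconstruction error exceeds the tolerance on a set of positive $P^*$-mass. (P2) fails when the decoder pushforward $P^G$ differs from $P^*$ on some marginal. (P3) fails, in the variance-inflation sense allowed by the statement, when some latent coordinate of $q(\bm z)$ has variance $\sigma_j^2>1$. (P4) fails when some pool decision has $v_{\text{gen}}(\bm x_k)<v_{\text{ref}}(\bm x_k)$, equivalently $\Delta_{\text{wc}}>0$ with a deficit on the unsafe side. (P5) fails when $\dec$ admits no exact finite disjunctive encoding.

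\textbf{Coverage (I).} By Assumption~\ref{ass:canonical}, coverage is tested only through $\bm z=\enc(\bm\xi)$ followed by $\dec$.

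\emph{(P1).} If (P1) fails, there is a set $A$ with $P^*(A)>0$ on which $\|\bm\xi-\dec(\enc(\bm\xi))\|$ exceeds the tolerance. I will choose $A$ large enough that it removes more than $1-p$ of mass from some marginal. Then the clause ``returned by $\dec$ to within reconstruction tolerance'' fails.

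\emph{(P3).} With marginal calibration (Assumption~\ref{ass:marginal}), the encoded mass in $[-\gamma_{\text{marg}},\gamma_{\text{marg}}]$ along coordinate $j$ is $2\Phi(\gamma_{\text{marg}}/\sigma_j)-1$. Because $\Phi$ is strictly increasing, this is strictly less than $p$ whenever $\sigma_j>1$. So (I) fails in that dimension. This is exactly why the statement restricts (P3) to variance-inflation deviations: under deflation, coverage only improves, and the argument breaks.

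\emph{(P2).} Given (P1), decoded encodings reproduce the data. Coverage of a $p$-fraction of each data marginal then forces the decoder image of $\Zz$ under the latent law to carry the correct marginal mass. So a marginal mismatch between $P^G$ and $P^*$ contradicts (I) on that marginal. Making this step rigorous is the delicate part. It mixes the data-side statement ($\enc$ pushes $P^*$ into $\Zz$) with the generative statement ($\dec$ pushes $\N(\bm0,\bm I)$ to $P^G$). I would first try to phrase (P2) failure so that it directly yields a marginal quantile of $P^*$ not attained by $\dec(\Zz)$. For example, some $p$-central interval of a data marginal contains points whose encodings leave $\Zz$, or whose decodings leave the interval. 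This reduces the (P2) case to the (P1)/(P3) mechanism. I expect this (P2) step to be the main obstacle.

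\textbf{Safety (III) and Tractability (II).}

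\emph{(P4).} Assumption~\ref{ass:pool} gives $v_{\text{ref}}(\bm x_k)>0$, so the ratio $\Delta_{\text{wc}}$ in eq.~\eqref{eq:delta-wc} and the conservatism ratio are well-defined. Failure of (P4) means some $\bm x_k$ has $v_{\text{gen}}(\bm x_k)<v_{\text{ref}}(\bm x_k)$. That is literally the negation of (III).

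\emph{(P5).} Requirement (II) asks that the oracle eq.~\eqref{eq:oracle-milp} be an exact MILP or MISOCP. If $\dec$ is not exactly encodable by piecewise-linear constraints as in eq.~\eqref{eq:bigm}, the oracle is at best an MINLP or an approximation, and (II) fails. Conversely, the ReLU form eq.~\eqref{eq:relu} shows the condition is attainable.

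Combining the five contrapositives gives the proposition.
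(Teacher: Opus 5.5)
Your case-by-case contrapositive plan is the same as the paper's, and the (P4) case goes through exactly as you describe. The (P2) step, however, cannot be completed along your route. Requirement (I) only concerns the encoder path applied to samples of $P^*$, whereas $P^G=\dec\#\N$ is the pushforward of the prior. The two are decoupled whenever $q(\bm z)\neq\N(\bm 0,\bm I)$.

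Concretely, take $n_z=n_\xi=1$, let $\enc$ and $\dec$ both be the identity, and let $P^*=\N(0,\tfrac14)$. Reconstruction is exact, and the encoded coverage is $2\Phi(2\gamma_{\text{marg}})-1>p$, so (I) holds. Yet $P^G=\N(0,1)\neq P^*$. So no argument, even one assuming (P1), can show that a mismatch between $P^G$ and $P^*$ contradicts realised coverage. Your planned reduction also fails here: the $p$-central interval $[-\gamma_{\text{marg}}/2,\gamma_{\text{marg}}/2]$ of $P^*$ lies inside $\dec(\Zz)=[-\gamma_{\text{marg}},\gamma_{\text{marg}}]$.

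The paper does not attempt this. It proves only a certificate-level statement. The calibration certifies level-$p$ coverage under $\dec\#\N$, and any region receives masses under $P^*$ and $\dec\#\N$ that differ by at most $\mathrm{TV}(P^*,\dec\#\N)$. Hence a (P2) failure with $\mathrm{TV}=\delta_2>0$ degrades the guaranteed $P^*$-coverage by up to $\delta_2$ and removes the level-$p$ certificate. The paper says explicitly that this does not force realised coverage below $p$. To close your proof you must adopt this weaker reading of ``necessary for (I)'' rather than a realised-coverage reduction to the (P1)/(P3) mechanism.

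Three smaller repairs are also needed.
\begin{itemize}
\item In (P1), you cannot ``choose $A$ large enough'': the failure set is given. If its mass is positive but below $1-p$ on every marginal, (I) still holds. Your vector-norm condition is also not per-marginal, as (I) is. The paper instead defines (P1) failure as $\Pr_{P^*}(R_j)<p$ for some $j$, with $R_j=\{\bm\xi:|\xi_j-\dec(\enc(\bm\xi))_j|\le\tau\}$. With that definition the step is immediate.
\item In (P3), ``variance $\sigma_j^2>1$'' does not yield the formula $2\Phi(\gamma_{\text{marg}}/\sigma_j)-1$. Consider a latent marginal with mass $1-\epsilon>p$ at $0$ and mass $\epsilon/2$ at each of $\pm a$, where $\epsilon a^2>1$. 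Its variance exceeds $1$, yet its coverage is at least $1-\epsilon>p$. You need the Gaussian scale regime $q_j=\N(0,\sigma_j^2)$, which the paper uses via Corollary~\ref{cor:shift-scale}.
\item In (P5), you treat only non-encodability, which is the paper's Sub-case~A via Lemma~\ref{lem:activation}. But (P5) also demands practical model size, and (II) demands solvability within the reported per-query regime. The paper therefore adds Sub-case~B: an unrolled multi-step decoder $f_T\circ\cdots\circ f_1$ is exactly MILP-encodable, but its $\sum_t b_t$ binaries place it outside that regime.
\end{itemize}
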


The proof (Appendix~\ref{ec:proof-necessity}) argues five contrapositives. (P1) failure breaks the canonical encoder--decoder coverage path. (P2) failure opens a total-variation gap between $P^{*}$ and $\dec\#\N$ that removes the level-$p$ coverage certificate (rather than forcing a realised coverage drop). (P3) failure inherits the calibration gap of Theorem~\ref{thm:cal-gap} in the variance-inflation regime. (P4) failure produces a pool decision whose decoder oracle underestimates the empirical worst case. (P5) failure puts the oracle outside the exact-MILP class (smooth activations, Lemma~\ref{lem:activation}) or outside the per-query regime (depth scaling).

\begin{theorem}[Calibration gap]
\label{thm:cal-gap}
Let $j \in \{1, \ldots, n_z\}$ and $p \in (0, 1)$. Let $q_j$ denote the marginal of the aggregated posterior on dimension $j$, with CDF $F_j$, and let $\gamma_{\text{marg}} = \Phi^{-1}((1 + p)/2)$ be the marginal-mode calibration radius. The realised marginal coverage $C_j(p) = \Pr_{z_j \sim q_j}(|z_j| \le \gamma_{\text{marg}}) = F_j(\gamma_{\text{marg}}) - F_j(-\gamma_{\text{marg}})$ satisfies
\begin{equation}\label{eq:cal-gap}
    |C_j(p) - p| \;\le\; 2\, d_{\mathrm{KS}}(q_j,\,\N(0,1))
\end{equation}
where $d_{\mathrm{KS}}$ is the Kolmogorov--Smirnov distance. When $q_j = \N(0, 1)$, $C_j(p) = p$ for every $p$.
\end{theorem}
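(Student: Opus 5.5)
The plan is to compare the realised coverage with its Gaussian counterpart, treating each of the two CDF evaluations separately.

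\emph{Step 1: identify the target.} By the choice of radius in eq.~\eqref{eq:gamma-per-dim}, the standard normal gives
\[
\Phi(\gamma_{\text{marg}})-\Phi(-\gamma_{\text{marg}}) = 2\Phi(\gamma_{\text{marg}})-1 = p .
\]
So $p$ is exactly the Gaussian version of $C_j(p)$. Writing the difference term by term gives
\[
C_j(p)-p=\bigl[F_j(\gamma_{\text{marg}})-\Phi(\gamma_{\text{marg}})\bigr]-\bigl[F_j(-\gamma_{\text{marg}})-\Phi(-\gamma_{\text{marg}})\bigr].
\]

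\emph{Step 2: bound each bracket.} By definition, $d_{\mathrm{KS}}(q_j,\N(0,1))=\sup_t|F_j(t)-\Phi(t)|$, so each bracket is at most this quantity in absolute value. The triangle inequality then gives eq.~\eqref{eq:cal-gap}.

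\emph{Step 3: the main subtlety.} The coverage event $\{|z_j|\le\gamma_{\text{marg}}\}$ is a closed interval, so it is really $F_j(\gamma_{\text{marg}})-F_j(-\gamma_{\text{marg}}^-)$ that is needed when $q_j$ has an atom at $-\gamma_{\text{marg}}$. This matters here, since the aggregated posterior is an empirical measure of Dirac masses. Two fixes are available:
\begin{itemize}
\item Adopt the statement's own formula $C_j=F_j(\gamma)-F_j(-\gamma)$ as the definition, which is exact for continuous $q_j$.
\item Note that the left limit also satisfies $|F_j(t^-)-\Phi(t)|\le d_{\mathrm{KS}}$. This holds because $\Phi$ is continuous, so $F_j(t^-)-\Phi(t)=\lim_{s\uparrow t}\bigl(F_j(s)-\Phi(s)\bigr)$, and the sup bound passes to the limit.
\end{itemize}
I would use the second remark so that the bound holds for atomic $q_j$ as well.

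\emph{Step 4: the equality case.} If $q_j=\N(0,1)$, then $F_j=\Phi$. Hence $d_{\mathrm{KS}}=0$, and the bound forces $C_j(p)=p$ for every $p$. Alternatively, this follows directly from Step 1.

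No step is a serious obstacle. The only care needed is the atom and left-limit point in Step 3.
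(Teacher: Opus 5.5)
Your proposal is correct and matches the paper's proof: the paper uses the same two-bracket decomposition of $C_j(p)-p$, bounds each bracket by $d_{\mathrm{KS}}$, and obtains the identity $2\Phi(\gamma_{\text{marg}})-1=p$ directly. Your left-limit remark in Step 3 is a small but genuine refinement. The paper's proof uses $F_j(-\gamma_{\text{marg}})$ without comment, even though the aggregated posterior is an empirical measure and could place an atom at $-\gamma_{\text{marg}}$.
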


The proof (Appendix~\ref{ec:proof-calgap}) follows the conformal-prediction calibration setup~\citep{Angelopoulos2023}. The bound is marginal and tight up to a factor of two. The converse $C_j(p) = p \Rightarrow q_j = \N(0,1)$ does not hold, since even-perturbation deviations preserve $C_j(p) = p$ at every $p$. Empirically, $\Delta_{\text{wc}}$ increases when (P3) is not satisfied (Table~\ref{tab:full-comparison}).

\begin{lemma}[Activation requirement]
\label{lem:activation}
Let $f : \R^n \to \R^m$ be the function computed by a feedforward neural network. Call a real-analytic activation a \emph{non-degenerate component} of $f$ when its nonlinearity affects $f$ on a set of positive Lebesgue measure. An exact, finite mixed-integer linear (MILP) encoding of the input--output graph of $f$ exists only if $f$ is piecewise-affine on a finite polyhedral subdivision of $\R^n$. Hence, if any non-degenerate component is a real-analytic non-affine activation ($\tanh$, $\sigma$, $\exp$, $\sin$, $\cos$), no exact MILP encoding exists, and an exact mixed-integer encoding must instead be a mixed-integer nonlinear program (MINLP).
\end{lemma}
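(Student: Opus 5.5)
The plan is to prove the first claim (an exact MILP encoding forces $f$ to be piecewise-affine on a finite polyhedral subdivision) by projection, and then derive the second claim by a contrapositive argument using real-analytic rigidity.

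\textbf{Step 1: an MILP encoding forces a finite polyhedral graph.} An exact, finite MILP encoding of the graph of $f$ means that
\[
\mathrm{gr}(f)=\{(\bm{u},\bm{y}) : \exists\,\bm{w}\in\R^{k},\ \bm{a}\in\{0,1\}^{r},\ A\bm{u}+B\bm{y}+C\bm{w}+D\bm{a}\le \bm{e}\}.
\]
For each fixed $\bm{a}$ the lifted set is a polyhedron. Its projection onto $(\bm{u},\bm{y})$ is again a polyhedron $P_{\bm{a}}$, by Fourier--Motzkin elimination. Hence $\mathrm{gr}(f)=\bigcup_{\bm{a}}P_{\bm{a}}$ is a finite union of polyhedra.

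\textbf{Step 2: each piece is the graph of an affine map.} Since $\mathrm{gr}(f)$ is a graph, each $P_{\bm{a}}$ contains at most one $\bm{y}$ for every $\bm{u}$. A polyhedron with this property has a recession cone and lineality containing no nonzero vector of the form $(\bm{0},\bm{d})$. Its projection $Q_{\bm{a}}$ onto $\bm{u}$-space is a polyhedron. The fibre map $Q_{\bm{a}}\ni\bm{u}\mapsto\bm{y}$ is then affine: it is the unique solution of a linear system. I would verify this by noting that a polyhedron whose fibres are singletons is the image of $Q_{\bm{a}}$ under a linear-plus-constant map, since convex combinations of graph points remain in $P_{\bm{a}}$. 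The sets $Q_{\bm{a}}$ cover $\R^n$ and $f|_{Q_{\bm{a}}}$ is affine. Refining by the common hyperplane arrangement of all facet hyperplanes gives a finite polyhedral subdivision on which $f$ is piecewise-affine. This proves the ``only if'' claim.

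\textbf{Step 3: real-analytic rigidity.} Suppose some non-degenerate component is a real-analytic non-affine activation $\rho$, and suppose for contradiction that an exact MILP encoding exists. By Step 2, $f$ is affine on each full-dimensional cell of a finite subdivision, so all second derivatives of $f$ vanish almost everywhere. On the other hand, non-degeneracy says that the nonlinearity of $\rho$ affects $f$ on a set of positive measure. I would formalise this as follows: on some open set $U$ the pre-activation $s(\bm{u})$ feeding $\rho$ is non-constant, and $f$ depends on $\rho(s(\bm{u}))$ non-affinely. Because $\rho$ is real-analytic and non-affine, $\rho''$ has only isolated zeros. Composing with a non-constant, locally affine $s$ therefore produces nonzero curvature of $f$ on an open subset of $U$. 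This contradicts piecewise affinity. Consequently any exact mixed-integer encoding must contain nonlinear constraints, i.e.\ it is an MINLP.

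\textbf{Main obstacle.} I expect the hardest step to be the last: turning ``non-degenerate component'' into genuine non-zero curvature of $f$. Downstream layers could in principle cancel the curvature, and earlier layers could be non-affine as well. I would handle this by taking the paper's definition literally, reading ``affects $f$ non-affinely on positive measure'' as the statement that $f$ is not affine on that set. Steps 1--2 then finish the argument immediately, since $f$ would have to be affine almost everywhere on each cell. The analyticity of $\rho$ serves only to rule out activations that are themselves piecewise-affine.
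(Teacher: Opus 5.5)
Your proposal is correct, but it proves the second claim by a different route from the paper.

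\textbf{First claim.} You follow the paper's outline: fix the binaries, project out the continuous auxiliaries, and obtain $\mathrm{gr}(f)$ as a finite union of polyhedra. Your proof that $f$ is affine on each piece is cleaner than the paper's. Uniqueness of fibres plus convexity of $P_{\bm{a}}$ forces $f$ to preserve convex combinations on $Q_{\bm{a}}$, hence to be affine there. The paper instead refines the projections by active-constraint sets and reads $\bm{\eta}$ off the binding equalities. Add one line before you refine: the lower-dimensional $Q_{\bm{a}}$ are closed null sets, so the full-dimensional ones already cover $\R^n$.

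\textbf{Second claim.} The paper makes analyticity do the work. Since its activations are real-analytic, $f$ is real-analytic on $\R^n$, so affinity on a single full-dimensional cell propagates to global affinity by the identity theorem. Non-degeneracy is then needed only in the weak form ``$f$ is not globally affine''. The cancellation problem you flag as your main obstacle never arises.

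You instead read non-degeneracy strongly and conclude by measure alone. This has one advantage: it covers mixed networks, e.g.\ ReLU layers plus a single $\tanh$. There $f$ is not globally analytic, so the paper's Step~5 is unavailable. The costs are twofold. Real-analyticity of $\rho$ plays no role at all, and the second claim becomes an immediate corollary of the first. And under the paper's weaker reading your argument proves nothing; you would need the identity-theorem step.

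\textbf{Two points to tighten.} First, state your reading precisely as ``the set of inputs at which $f$ is affine on no neighbourhood has positive Lebesgue measure''. The phrase ``$f$ restricted to that set is not affine'' is too weak, because a ReLU network is non-affine on any positive-measure set straddling a kink, and no contradiction follows. Second, drop the $\rho''$ curvature sketch from the proof. As you already suspect, it does not survive downstream cancellation or non-affine upstream layers.
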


The proof (Appendix~\ref{ec:proof-activation}) builds on the mixed-integer-formulation framework of \citet[Definition~1]{Anderson2020}. Piecewise-affine activations such as ReLU admit exact Big-M encodings~\citep[Eq.~6]{Tjeng2019}.
Empirical support is given by the MILP-size column of Tables~\ref{tab:five-point} and~\ref{tab:full-comparison}: normalising flows with $\exp\circ\tanh$ coupling layers produce a nonconvex MINLP rather than a MILP, and DDIM-style diffusion models unroll their denoiser $T$ times so that the binary-variable count grows linearly in $T$ (e.g.\ ${\sim}6400$ binaries at $T = 50$ versus $128$ for a single-pass ReLU decoder).

This intractability is a structural property of eq.~\eqref{eq:oracle-milp}, not a solver limitation, and it compounds when multiple uncertain constraints are present: each requires its own MILP oracle per cutting-plane iteration, so the per-iteration cost scales linearly in the number of uncertain constraints and can dominate even when any single oracle is fast.

Figure~\ref{fig:latent-data-grid} shows the five-point assessment on the six PP distributions (rows) across all five model families (columns). Both the data and latent spaces are six-dimensional ($n_{\xi} = n_z = 6$). Each sub-panel shows a 2D projection onto $(\xi_1, \xi_2)$ for data sub-panels and $(z_1, z_2)$ for latent sub-panels. Each cell pairs a latent-space sub-panel with a data-space sub-panel. Four concentric marginal quantile levels $p \in \{0.3, 0.5, 0.7, 0.9\}$ ($\gamma_{\text{marg}} \in \{0.39, 0.67, 1.04, 1.64\}$) are drawn as same-coloured rings in the latent sub-panel. The corresponding data-space contour is the image of that ring under the decoder $\dec$.

(P3) failure is visible on the VAE for the independent mixture case, where the latent is multimodal rather than $\mathcal{N}(\bm{0}, \bm{I})$. The other baselines show latent or data-space distortion across the six rows. WAAE-GMM-RO maintains a stable latent space and an accurate data space.

\begin{figure}[tbp]
\centering
\includegraphics[width=\textwidth]{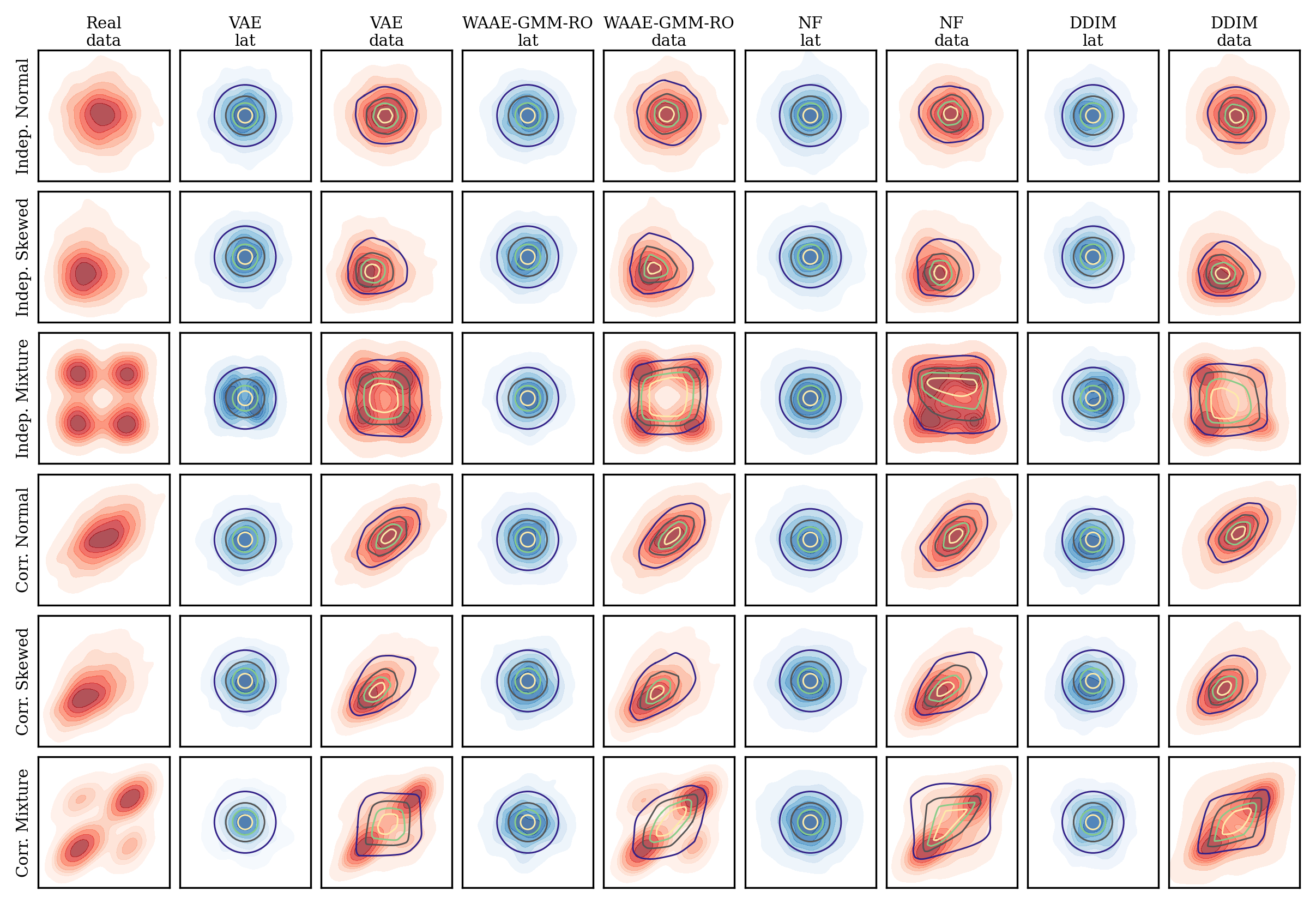}
\caption{Latent and data-space distribution fits. The four trained generative models against the ground-truth distribution $P^{*}$ across the six PP uncertainty distributions (rows). Rings at the marginal-quantile levels $p \in \{0.3,\, 0.5,\, 0.7,\, 0.9\}$ trace through each model's decoder.}
\label{fig:latent-data-grid}
\end{figure}

\subsection{WAAE: Wasserstein Adversarial Autoencoder}
\label{sec:waae}

The WAAE backbone is configured so that each component addresses one or more of the five evaluation points. It is not a novel generative architecture per se.

Although only the decoder $\dec$ appears in the MILP oracle at optimisation time, the architecture includes a jointly trained encoder $\enc$. A decoder-only generator (e.g.\ GAN~\citep{Goodfellow2014}) cannot guarantee that the latent geometry reflects the data geometry, which is what the calibration of Section~\ref{sec:calibration} requires: if $\Zz_q$ does not correspond to a meaningful data-space region, the decoded set $\Uset_{\dec}$ cannot be calibrated to a desired coverage level. The encoder supplies the data-anchored inverse mapping that resolves this, provided Points~1 and~3 both hold: $\dec(\enc(\bm{\xi})) \approx \bm{\xi}$ (P1) ensures local consistency between latent and data space, and $\enc$ mapping $P^*$ close to $\N(\bm{0},\bm{I})$ (P3) ensures that a $q$-quantile bound computed from encoded training data corresponds to a $q$-quantile region of $P^*$. The calibrated $\Zz_q$ is therefore \emph{data-supported}: its boundaries come from actual encoded observations, not an assumed prior. Encoder--decoder pairings of this kind trace back to VAEs~\citep{KingmaWelling2014}, adversarial autoencoders~\citep{Makhzani2015}, and Wasserstein autoencoders~\citep{Tolstikhin2018}. Our WAAE reuses their structure and adds the ReLU-only, PCA-warm-started decoder required for Points~4 and~5.

\subsubsection{Architecture}

The WAAE consists of four neural networks, all multilayer perceptrons (MLPs):
\begin{itemize}[nosep]
    \item \textbf{Encoder} $\enc: \R^{n_{\xi}} \to \R^{n_z}$, deterministic, ReLU activations.
    \item \textbf{Decoder} $\dec: \R^{n_z} \to \R^{n_{\xi}}$, composed of a ReLU MLP body $b: \R^{n_z} \to \R^{d_{\text{pca}}}$ (with optional skip connection) followed by a linear head $h: \R^{d_{\text{pca}}} \to \R^{n_{\xi}}$ (see PCA warm-start below), so $\dec = h \circ b$.
    \item \textbf{PCA-space critic} $\disc^{\text{pca}}: \R^{d_{\text{pca}}} \to \R$, LeakyReLU, WGAN-GP critic operating on PCA-standardised samples.
    \item \textbf{Std-space critic} $\disc^{\text{std}}: \R^{n_{\xi}} \to \R$, LeakyReLU, WGAN-GP critic operating on standardised raw samples.
\end{itemize}

When $n_z = d_{\text{pca}}$, the decoder body includes a residual (skip) connection,
\begin{equation}\label{eq:skip}
    b(\bm{z}) = \text{NN}(\bm{z}) + \bm{z}
\end{equation}
Since the encoder is trained to produce $\enc(\bm{\xi}) \approx \bm{z}$ close to the standardised data (P3), the identity mapping $\bm{z} \mapsto \bm{z}$ is already a reasonable reconstruction, and the network $\text{NN}(\bm{z})$ learns only the \emph{correction}. In the MILP formulation, the skip adds only $d_{\text{pca}}$ linear constraints.

The decoder also carries a PCA warm-start. The body $b$ produces outputs in the PCA-whitened coordinate system of eq.~\eqref{eq:pca-scaler}, and the linear head $h$ of eq.~\eqref{eq:dec-head} maps them back to raw units. The head's parameters $\{V, \bm{\sigma}, \bm{\mu}\}$ are initialised from the fitted PCA scaler so that at $t = 0$, $h(b(\enc(\bm{\xi}_{\text{pca}}))) = \bm{\xi}$ whenever the body reconstructs the identity, making the warm-start reconstruction-trivial.

The head serves two roles: (i) it lets the body train in PCA-whitened coordinates, so the loss scales are balanced across coordinates, and (ii) it gives the final linear layer a principled initialisation. The head is held fixed during (P1)--(P3) training.

\subsubsection{Training Objective}

The total loss decomposes as:
\begin{equation}\label{eq:total-loss}
    \loss = \lambda_{\text{recon}} \loss_{\text{recon}}
    + \lambda_{\text{data}} \loss_{\text{data}}
    + \lambda_{\text{latent}} \loss_{\text{latent}}
\end{equation}
where $\loss_{\text{recon}}$ targets (P1) via eq.~\eqref{eq:p1}, $\loss_{\text{data}}$ targets (P2) via eqs.~\eqref{eq:p2-critic}--\eqref{eq:p2-mom}, and $\loss_{\text{latent}}$ targets (P3) via eqs.~\eqref{eq:p3-mmd}--\eqref{eq:p3-mom}.

The WAAE addresses (P1)--(P3) and (P5) (ReLU decoder) by construction. The complete training procedure, including (P4) extensions, is given in Algorithms~\ref{alg:waae-ro-p1} and~\ref{alg:waae-ro-p2}. (P4) requires the extension described in Section~\ref{sec:waae-ro}.

\subsection{GMM-Guided Training (WAAE-GMM)}
\label{sec:waae-gmm}
The WAAE presented in Section~\ref{sec:waae} satisfies (P1)--(P3) empirically through MMD and moment penalties. However, these penalties only penalise deviations in the aggregated posterior: they do not decompose the matching problem into verifiable sub-conditions, so the encoder receives no structural guidance about the data distribution and must learn to transport a potentially multimodal distribution to $\N(\bm{0},\bm{I})$ in a single pass. Combined with the \emph{hole problem} of latent variable models with a fixed unimodal prior~\citep{Rezende2018Taming}, where $q(\bm{z})$ may not cover all regions of high prior density (so samples from those regions decode to unrealistic outputs), this motivates the GMM-guided approach below.

\subsubsection{Data-Space Gaussian Mixture}

By the universal approximation property of Gaussian mixtures, any continuous data distribution can be written as
\begin{equation}\label{eq:gmm-data}
  p(\bm{\xi}) = \sum_{k=1}^{K} \pi_k\,
    \N\!\bigl(\bm{\xi}\mid\bm{\mu}_k,\,\diag(\bm{\sigma}_k^{2})\bigr)
\end{equation}
for sufficiently large $K$. This mixture (eq.~\eqref{eq:gmm-data}) is explicitly modelled in \emph{data space} with trainable parameters $\{\bm{\sigma}_k, \pi_k\}_{k=1}^K$ and component means $\{\bm{\mu}_k\}$ held fixed at their warm-start values. Freezing the means is necessary here, because without it the components drift during adversarial training and can collapse onto a single mode~\citep{Lu2025GMM}. The encoder is trained to transport the $K$ data-space components into $K$ latent sub-distributions whose $\pi$-weighted moments aggregate to those of $\N(\bm{0},\bm{I})$ on every dimension, an explicit condition that is given in eq.~\eqref{eq:gmm-moments} below.

The GMM is a \emph{training-time guide only}. At inference, generation proceeds as $\bm{z}\sim\N(\bm{0},\bm{I}),\;\bm{\xi}=\dec(\bm{z})$. The latent prior remains $\N(\bm{0},\bm{I})$, so calibration and the ball constraint $\|\bm{z}\|_2\le\gamma$ are unchanged, and the decoder MILP is identical to that of the standard WAAE.

This contrasts with the latent-prior mixtures used by VaDE~\citep{Jiang2017VaDE}, VampPrior~\citep{Tomczak2018VampPrior}, GMVAE~\citep{Dilokthanakul2016GMVAE}, and AAE+GMM~\citep{Makhzani2015}, which require two-stage sampling at inference, $k\sim\mathrm{Cat}(\bm{\pi})$ followed by $\bm{z}\sim p_k$: the latent prior is no longer $\N(\bm{0},\bm{I})$, so calibration of $\Zz_q$ breaks (P3), and MILP embedding requires one-hot binaries to multiplex between the $K$ component branches (P5).

\subsubsection{Training Losses}\label{sec:waae-gmm-losses}

The GMM parameters $\{\bm{\sigma}_k, \pi_k\}_{k=1}^K$ are fitted to the data distribution by minimising the negative log-likelihood:
\begin{equation}\label{eq:gmm-nll}
  \loss_{\text{GMM}} = -\frac{1}{N}\sum_{n=1}^{N}\log \sum_{k=1}^{K}
  \pi_k\,\N\bigl(\bm{\xi}^{(n)}\mid\bm{\mu}_k,\,\diag(\bm{\sigma}_k^{2})\bigr)
\end{equation}
Eq.~\eqref{eq:gmm-nll} is computed using the log-sum-exp trick for numerical stability. Gradients flow only to $\{\bm{\sigma}_k, \pi_k\}$ (the input $\bm{\xi}^{(n)}$ is real data, not encoder output). The GMM is warm-started before adversarial training, to avoid component collapse, by either expectation--maximisation with fixed $K$ or variational inference with a Dirichlet-process prior that auto-prunes redundant components. On correlated mixture data the variational variant typically prunes $K{=}12$ down to $\approx 9$ effective components.

We next construct the component moment constraint. Given a data point $\bm{\xi}$ and the current GMM parameters, the \emph{responsibility} of component $k$ \citep{Dempster1977EM} is
\begin{equation}\label{eq:gmm-resp}
  r_k(\bm{\xi}) = \frac{\pi_k\,p_k(\bm{\xi})}
    {\sum_{j=1}^K \pi_j\,p_j(\bm{\xi})}
\end{equation}
with $p_k(\bm{\xi}) = \N(\bm{\xi}\mid\bm{\mu}_k,\,\diag(\bm{\sigma}_k^{2}))$. Responsibilities are treated as constants in subsequent gradient computations: the encoder loss does not update the GMM parameters, which are trained only by the negative log-likelihood loss above. Using the responsibilities of eq.~\eqref{eq:gmm-resp}, we compute per-component latent statistics. Let $\bm{z}_n=\enc(\bm{\xi}^{(n)})$ and $r_{nk}=r_k(\bm{\xi}^{(n)})$. The weighted mean and variance per component $k$ and latent dimension $j$ are:
\begin{align}
  \mu_{k,j}^z &= \frac{\sum_n r_{nk}\,z_{nj}}{\sum_n r_{nk}},
    \label{eq:gmm-mu-kz}\\
  v_{k,j}^z   &= \frac{\sum_n r_{nk}\,(z_{nj}-\mu_{k,j}^z)^2}
                       {\sum_n r_{nk}}
    \label{eq:gmm-var-kz}
\end{align}

The component moment loss ensures that the $K$ latent sub-distributions aggregate to $\N(\bm{0},\bm{I})$ up to four marginal moments. Since the $m$-th moment of a mixture is the $\pi$-weighted sum of its components' $m$-th moments, and the moments of $\N(\mu_k, v_k)$ are known in closed form, a 1-D mixture with component placeholders $(\mu_k, v_k)$ matches the first four moments of $\N(0,1)$ if and only if:
\begin{equation}\label{eq:gmm-moments}
\begin{aligned}
  \sum_k \pi_k\,\mu_k &= 0, & \sum_k \pi_k\,(v_k + \mu_k^2) &= 1, \\
  \sum_k \pi_k\,(\mu_k^3 + 3\mu_k\,v_k) &= 0, & \sum_k \pi_k\,(\mu_k^4 + 6\mu_k^2\,v_k + 3v_k^2) &= 3
\end{aligned}
\end{equation}
The component moment loss penalises deviations from these targets, applied to each latent dimension $j$ with the placeholders $(\mu_k, v_k)$ instantiated as the per-component latent statistics $(\mu_{k,j}^z, v_{k,j}^z)$ from eqs.~\eqref{eq:gmm-mu-kz}--\eqref{eq:gmm-var-kz}:
\begin{equation}\label{eq:gmm-comp-loss}
  \loss_{\text{comp}} = \frac{1}{n_z}\sum_{j=1}^{n_z}\Bigl( M_{1,j}^2 + (M_{2,j}-1)^2
    + M_{3,j}^2 + (M_{4,j}-3)^2 \Bigr)
\end{equation}
where $M_{m,j}$ is the left-hand side of the $m$-th line of eq.~\eqref{eq:gmm-moments} evaluated with $\mu_k \leftarrow \mu_{k,j}^z$ and $v_k \leftarrow v_{k,j}^z$.

\begin{corollary}[GMM-decomposed moment condition for (P3)]
\label{cor:gmm-p3}
Suppose (i) the per-component conditional latent distributions are univariate Gaussian, $q(z_j \mid \text{component } k) = \N(\mu_{k,j}^z, v_{k,j}^z)$ for each $k \in \{1,\ldots,K\}$ and $j \in \{1,\ldots,n_z\}$, and (ii) the aggregated-posterior mass of each component equals its GMM weight, $N^{-1}\sum_n r_{nk} = \pi_k$ (exact at an expectation--maximisation fixed point). If the component moment loss $\loss_{\text{comp}} = 0$ (eq.~\eqref{eq:gmm-comp-loss}), then for every dimension $j$ the aggregated marginal $q(z_j)$ matches $\N(0,1)$ in moments $1$ through $4$. The GMM thus decomposes the global matching problem into $K$ per-component conditions on the arrangement of latent sub-distributions.
\end{corollary}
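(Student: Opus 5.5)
The argument treats the aggregated marginal $q(z_j)$ as a finite mixture and uses that moments of a mixture are weighted averages of component moments. Fix a latent dimension $j$. Write the aggregated posterior marginal as $q(z_j) = \sum_{k} w_k\, q(z_j \mid \text{component } k)$, with $w_k = N^{-1}\sum_n r_{nk}$. This holds because the responsibilities $r_{nk}$ sum to one over $k$ for each $n$. Hence the empirical measure $N^{-1}\sum_n \delta_{z_{nj}}$ splits as $\sum_k w_k \cdot \bigl(\sum_n r_{nk}\delta_{z_{nj}} / \sum_n r_{nk}\bigr)$. The $k$-th normalised piece is exactly the component conditional whose mean and variance are $\mu^z_{k,j}$ and $v^z_{k,j}$ by eqs.~\eqref{eq:gmm-mu-kz}--\eqref{eq:gmm-var-kz}. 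By hypothesis (ii), $w_k = \pi_k$, so the mixture weights coincide with the $\pi_k$ appearing in eq.~\eqref{eq:gmm-moments}.

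Next I would use linearity of expectation. For $m = 1,\dots,4$, $\E_{q}[z_j^m] = \sum_k \pi_k\, \E[z_j^m \mid k]$. By hypothesis (i), each conditional is $\N(\mu^z_{k,j}, v^z_{k,j})$, so its raw moments are the standard Gaussian ones:
\[
\E[z^m \mid k] = \mu,\quad \mu^2+v,\quad \mu^3+3\mu v,\quad \mu^4+6\mu^2 v+3v^2
\]
for $m=1,2,3,4$, with $(\mu,v)=(\mu^z_{k,j},v^z_{k,j})$. I would verify these by expanding $\E[(\mu+\sqrt{v}\,Z)^m]$ with $Z\sim\N(0,1)$ and using $\E Z=\E Z^3=0$, $\E Z^2=1$, $\E Z^4=3$. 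Therefore the $m$-th raw moment of $q(z_j)$ equals $M_{m,j}$ as defined after eq.~\eqref{eq:gmm-comp-loss}.

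Finally, $\loss_{\text{comp}}$ is a sum of nonnegative squares averaged over $j$. So $\loss_{\text{comp}}=0$ forces $M_{1,j}=0$, $M_{2,j}=1$, $M_{3,j}=0$ and $M_{4,j}=3$ for every $j$. These are precisely the raw moments $0,1,0,3$ of $\N(0,1)$, so $q(z_j)$ matches $\N(0,1)$ in moments one through four. The decomposition claim then follows by reading eq.~\eqref{eq:gmm-moments} as $K$ per-component parameter pairs constrained only through the four aggregate identities.

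The main obstacle is the tension between hypothesis (i) and the empirical nature of $q$. The conditional $\sum_n r_{nk}\delta_{z_{nj}}/\sum_n r_{nk}$ is discrete, so it cannot literally be Gaussian. I would resolve this by reading (i) as stating that its third and fourth raw moments agree with those of $\N(\mu^z_{k,j},v^z_{k,j})$. Its first two moments agree automatically by definition. That is all the argument uses, so the conclusion is unaffected. Similarly, (ii) is needed only to replace $w_k$ by $\pi_k$. Without it the same computation gives the moment identities with weights $w_k$ in place of $\pi_k$.
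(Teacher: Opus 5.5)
Your proof is correct and follows the same route as the paper's: write $q(z_j)$ as the $\pi$-weighted mixture of the component conditionals, compute its raw moments by linearity using the Gaussian moment formulas, and conclude that $M_{1,j}=0$, $M_{2,j}=1$, $M_{3,j}=0$, $M_{4,j}=3$ because $\loss_{\text{comp}}$ is a sum of nonnegative squares. You also derive the mixture weights explicitly from $\sum_k r_{nk}=1$, and you note that (i) is needed only for the third and fourth moments, since the first two hold by construction of $\mu^z_{k,j}$ and $v^z_{k,j}$; the paper's proof simply asserts both of these steps.
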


The proof is in Appendix~\ref{ec:proof-gmm}. The match is marginal. The joint dependence across dimensions is not controlled, and agreement in four moments is not distributional equality. Hypotheses (i) and (ii) are not enforced by any training loss.

\begin{remark}[Applicability to arbitrary distributions]
By the universal-approximation property invoked above, the component moment constraint applies regardless of the true data distribution (multimodal, skewed, and heavy-tailed distributions included).
\end{remark}

Combining these terms with the WAAE loss of eq.~\eqref{eq:total-loss} gives the total WAAE-GMM objective:
\begin{equation}\label{eq:gmm-total}
  \loss_{\text{WAAE-GMM}} = \loss
    + \lambda_{\text{gmm}}\,\loss_{\text{GMM}}
    + \lambda_{\text{comp}}\,\loss_{\text{comp}}
\end{equation}
The GMM parameters are optimised with a separate learning rate ($\eta_{\text{gmm}}=10^{-4}$, lower than $\eta_{\text{ae}}=10^{-3}$).

Figure~\ref{fig:gmm-components} visualises the GMM-guided latent decomposition on independent mixture distribution ($K = 12$). The left panel shows the raw data space $(\xi_1, \xi_2)$ with points coloured by their dominant GMM component. The right panel shows how those same components cover the latent space $(z_1, z_2)$ without holes after training. Figure~\ref{fig:gmm-components} illustrates Corollary~\ref{cor:gmm-p3}: the $K$ per-component moment conditions of eq.~\eqref{eq:gmm-comp-loss} jointly recover the target prior.

\begin{figure}[tbp]
\centering
\includegraphics[width=\textwidth]{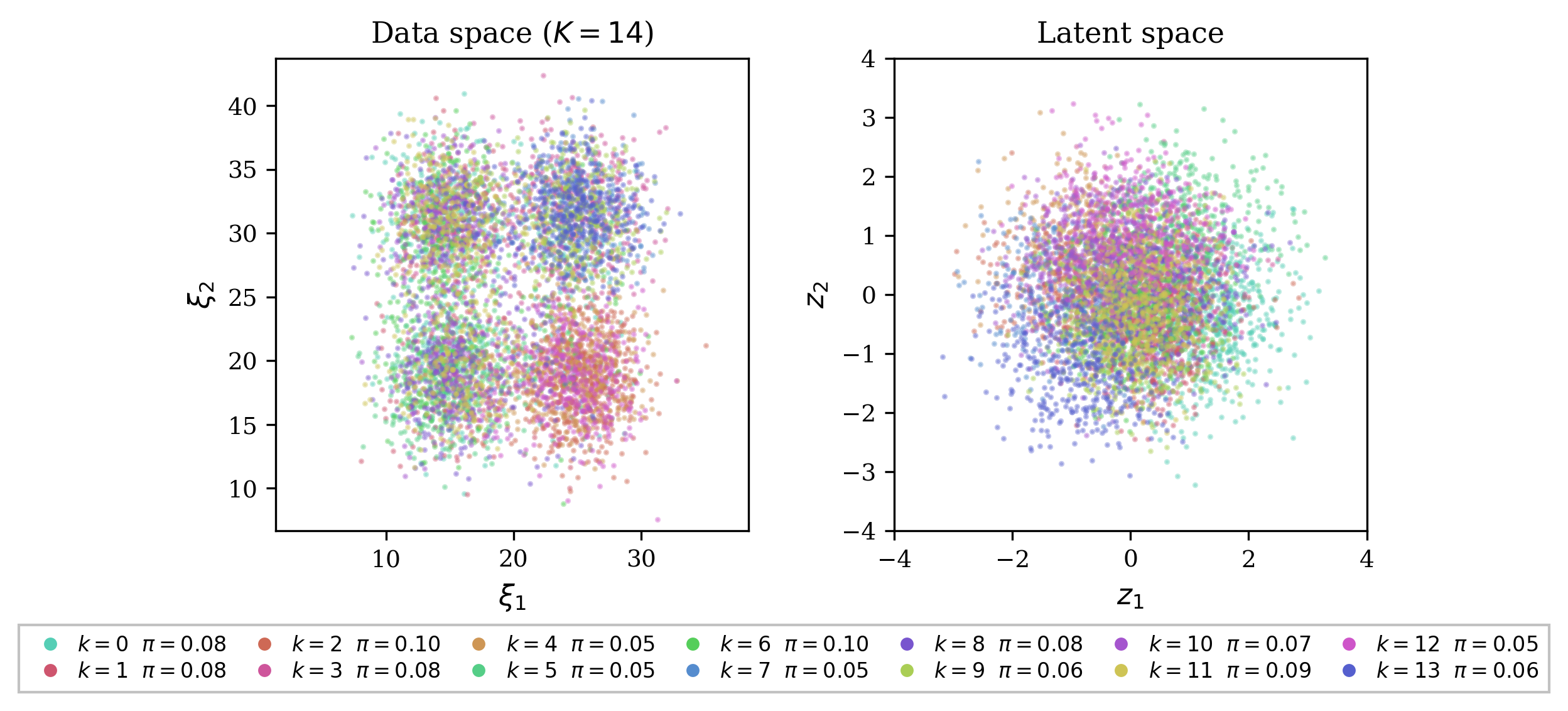}
\caption{WAAE-GMM component membership ($K{=}12$) on independent mixture distribution: component colouring in data space (left) and latent space (right).}
\label{fig:gmm-components}
\end{figure}

Having addressed (P3) through the component moment constraint, we now turn to Point~4: ensuring the trained decoder captures the constraint-violation structure needed for robust decisions.

\subsection{Robust Optimisation-Aware Training}
\label{sec:waae-ro}

Standard WAAE training optimises (P1)--(P3) but is agnostic to the downstream optimisation problem. To address (P4), we extend the training with robust-optimisation-aware (RO-aware) objectives. The scenario-selection step below shares motivation with \emph{problem-driven scenario generation} in stochastic programming~\citep{Fairbrother2022}: both let the downstream problem inform the data side, but the mechanism and output are different. SP scenario generation chooses scenarios to best approximate the optimisation \emph{objective}, producing a discrete weighted scenario set that is then used as a sample-average. Here a pool of feasible decisions is sampled, the \emph{constraint function} is evaluated over the resulting (decision, scenario) grid to identify infeasibility-inducing scenarios, and these scenarios upweight the training loss of a continuous generative model whose decoder is later embedded in a MIP oracle for worst-case verification. No auxiliary optimisation problem is solved during this step.

\subsubsection{Problem Setting}

The (P4) procedure operates per constraint of the robust optimisation~\eqref{eq:ro}: for a fixed index $i \in \{1, \ldots, m\}$, $g_i(\bm{x}, \bm{\xi})$ is the violation signal. A pool of feasible decisions $\{\bm{x}_k\}_{k=1}^{K_{\text{pool}}}$ is sampled from the decision space and $N$ historical scenarios $\{\bm{\xi}^{(n)}\}_{n=1}^{N}$ are drawn i.i.d.\ from $P^*$. Training then upweights pairs $(\bm{x}_k, \bm{\xi}^{(n)})$ with $g_i > 0$ so the decoded uncertainty set covers the infeasibility-inducing tail of $P^*$. Case-study specifics (decision dimensions, sampling procedure, tolerance value) are presented with the numerical results.

\subsubsection{Violation-Aware Training Signal}

Given a decision pool of size $K_{\text{pool}}$ and $N$ historical scenarios, the \emph{violation matrix} is precomputed as
\begin{equation}\label{eq:violation-matrix}
    M_{kn} = g_{i}(\bm{x}_k, \bm{\xi}^{(n)}),
    \quad k = 1, \dots, K_{\text{pool}}, \quad n = 1, \dots, N
\end{equation}
Each entry $M_{kn}$ is the value of constraint $i$ at scenario $\bm{\xi}^{(n)}$ under decision $\bm{x}_k$, with $M_{kn} > 0$ marking an infeasibility and $M_{kn} \le 0$ marking feasibility. Let $\varepsilon_{\text{vio}} > 0$ denote a small violation tolerance below which a constraint value is treated as numerically zero.

For each scenario $\bm{\xi}^{(n)}$, the worst violation it causes across the decision pool, clipped at the tolerance, defines the tail score
\begin{equation}\label{eq:tail-score}
    \rho_n = \max_{k}\; \max\!\bigl(M_{kn} - \varepsilon_{\text{vio}},\; 0\bigr)
\end{equation}
A positive tail score from eq.~\eqref{eq:tail-score}, $\rho_n > 0$, identifies $\bm{\xi}^{(n)}$ as infeasibility-inducing for at least one decision in the pool, while $\rho_n = 0$ leaves it benign.

The tail score is mapped to a per-scenario training weight by a sigmoid:
\begin{equation}\label{eq:tail-weight}
    w_n = 1 + \alpha \cdot \sigma\!\bigl(\kappa(\rho_n - \rho_0)\bigr)
\end{equation}
where $\sigma$ is the sigmoid function, $\alpha$ controls the maximum upweighting, $\kappa$ controls sharpness, and $\rho_0$ is the activation threshold. Each scenario $n$ receives a weight $w_n \in [1, 1{+}\alpha]$ that multiplies its reconstruction-loss contribution during Phase-2 training: benign scenarios ($\rho_n \leq \rho_0$) keep $w_n \approx 1$ and contribute as usual, while infeasibility-inducing scenarios ($\rho_n > \rho_0$) are upweighted by up to a factor of $(1{+}\alpha)$, so the decoder's gradient signal emphasises violation-critical regions of $P^*$ without altering the distributional properties of the training batch (Remark~\ref{rem:mixed-batch}).

The decision--scenario pairs whose constraint value exceeds the tolerance form the \emph{dangerous-pair set} $\mathcal{H} = \{(k, n) : M_{kn} > \varepsilon_{\text{vio}},\ k \in \{1, \dots, K_{\text{pool}}\},\ n \in \{1, \dots, N\}\}$.
Only pairs in $\mathcal{H}$ contribute to the constraint-consistency loss below, so the decoder is corrected only at decision--scenario pairs where infeasibility is triggered.

(P4) training may slightly affect (P1)--(P3) performance. To make this trade-off controllable, training data and scenario data are mixed in each batch:
\begin{equation}\label{eq:mixing}
\begin{aligned}
    \mathcal{B}_{\text{mix}} &= \underbrace{\{\bm{\xi}^{(i)}\}_{i=1}^{N_t}}_{\text{training data}} \cup \underbrace{\{\bm{\xi}^{(n)}\}_{n=1}^{N_s}}_{\text{scenarios}}, \\
    N_t &= \lfloor(1-\beta)B\rfloor, \quad N_s = B - N_t
\end{aligned}
\end{equation}

\begin{remark}[Mixed-batch input distribution]
\label{rem:mixed-batch}
Since the training data $\{\bm{\xi}^{(i)}\}$ and scenario data $\{\bm{\xi}^{(n)}\}$ are both drawn i.i.d.\ from $P^*$, their concatenation $\mathcal{B}_{\text{mix}}$ is itself an i.i.d.\ sample from $P^*$ for any mixing ratio $\beta \in [0, 1]$. The violation weights $w_n$ enter the loss as gradient coefficients only. The unweighted (P2) and (P3) estimators are therefore unbiased on $\mathcal{B}_{\text{mix}}$. Training under the reweighted loss converges to a different stationary point, by design. That shift injects the (P4) signal.
\end{remark}

Training is split into two phases. Phase~1 ($E_1$ epochs) optimises the (P1)--(P3) losses together with the GMM guide under a frozen decoder head. (P4) losses are inactive. Once Phase~1 ends the decoder learning rate drops to $\text{lr}_2$ (Table~\ref{tab:hyperparams}) to prevent gradient conflicts between (P1)--(P3) and (P4) objectives, while the encoder retains the Phase-1 learning rate. Phase~2 ($E_2$ epochs) introduces the (P4) losses through a linear annealing schedule~\citep{Bowman2016} over $E_a$ epochs, with anneal factor $a \in [0,1]$ of eq.~\eqref{eq:weighted-recon}, letting the network adapt to the new objective without disrupting the existing (P1)--(P3) fit.

\subsubsection{Point 4 Loss Components}

The (P4) signal first enters reconstruction through a weighted loss that unifies (P1) and (P4):
\begin{equation}\label{eq:weighted-recon}
    \loss_{\text{w-recon}} = \frac{1}{W} \Bigl( \sum_{i \in \text{train}} \|\bm{\xi}^{(i)} - \hat{\bm{\xi}}^{(i)}\|^2
    + a \sum_{n \in \text{scen}} w_n \|\bm{\xi}^{(n)} - \hat{\bm{\xi}}^{(n)}\|^2 \Bigr)
\end{equation}
where $W = N_t + a \sum_n w_n$ is the total sample weight and $a \in [0,1]$ is the anneal factor ($a = 0$ during Phase~1 so only the training-data term is active, and $a$ ramps to $1$ linearly over the first $E_a$ epochs of Phase~2).

The second (P4) component is a constraint-consistency regularisation. While (P1) reconstruction ensures $\dec(\enc(\bm{\xi})) \approx \bm{\xi}$ in data space, it does not guarantee that the \emph{constraint function value} is preserved.
Two scenarios $\bm{\xi}$ and $\hat{\bm{\xi}}$ may have small data-space distance $\|\bm{\xi} - \hat{\bm{\xi}}\|$ but produce different violation values $g_{i}(\bm{x}, \bm{\xi}) \neq g_{i}(\bm{x}, \hat{\bm{\xi}})$ if they differ along the direction aligned with the decision $\bm{x}$.

The constraint-consistency loss penalises violation mismatches on infeasibility-inducing decision--scenario pairs $\mathcal{H}$, using the constraint function $g$ as the training signal:
\begin{equation}\label{eq:cc}
    \loss_{\text{cc}} = \frac{1}{|\mathcal{H}_{\text{sub}}|} \sum_{(k,n) \in \mathcal{H}_{\text{sub}}} \ell\bigl( g_{i}(\bm{x}_k, \bm{\xi}^{(n)}), g_{i}(\bm{x}_k, \hat{\bm{\xi}}^{(n)})\bigr)
\end{equation}
where $\hat{\bm{\xi}}^{(n)} = \mathcal{S}_{\text{pre}}^{-1}\!\bigl(\dec(\enc(\bm{\xi}^{(n)}))\bigr)$ and $\mathcal{H}_{\text{sub}} \subset \mathcal{H}$ is a random subsample per batch.
This can be viewed as a form of \emph{physics-informed} regularisation: the physical constraint function $g_{i}(\bm{x}, \bm{\xi})$ serves as the loss signal, ensuring the decoder preserves the constraint values across the uncertainty distribution. For problems with multiple constraints ($m > 1$), the same procedure is applied independently to each $g_i$.

Three loss functions $\ell(\cdot, \cdot)$ are considered:
\begin{itemize}[nosep]
    \item \textbf{L1}: $\ell(g, \hat{g}) = |g - \hat{g}|$, symmetric, penalising both over- and underestimation.
    \item \textbf{Hinge}: $\ell(g, \hat{g}) = \max(g - \hat{g}, 0)$, penalising only underestimation (conservative reconstruction incurs zero loss).
    \item \textbf{MMD}: $\ell = \text{MMD}_{1\text{D}}\!\bigl(\{g_{i}(\bm{x}_k, \bm{\xi}^{(n)})\}_n, \{g_{i}(\bm{x}_k, \hat{\bm{\xi}}^{(n)})\}_n\bigr)$, matching the per-decision \emph{distribution} of violations rather than individual values. This variant targets the distributional quantity that the MIP oracle optimises.
\end{itemize}

A \emph{latent-violation} loss reinforces the coupling between infeasibility-inducing scenarios and the latent space by combining (i) a weighted latent MMD between encoded infeasibility-inducing scenarios and the prior, with weights $w_n$ from eq.~\eqref{eq:tail-weight}, and (ii) an encode-decode roundtrip penalty $\|\bm{z}_n - \enc(\dec(\bm{z}_n))\|^2$ on the infeasibility-inducing subsample:
\begin{equation}\label{eq:lat-vio}
    \loss_{\text{lat-vio}} = \MMD^2_k\!\bigl(\{\enc(\bm{\xi}^{(n)})\}, \{\bm{z}_j\}; w_n\bigr)
        + \mathbb{E}_{\bm{z} \sim \enc(\bm{\Xi}_{\text{dang}})}\,\bigl\|\bm{z} - \enc(\dec(\bm{z}))\bigr\|^2
\end{equation}
with $\bm{\Xi}_{\text{dang}}$ the scenarios appearing in $\mathcal{H}_{\text{sub}}$ and $\{\bm{z}_j\} \sim \N(\bm{0}, \bm{I})$. This loss keeps infeasibility-inducing scenarios well-represented in latent space (the weighted MMD) and ensures the decoder is inference-consistent on them (the roundtrip), with weight $\lambda_{\text{lat-vio}}$.

The decoder head can optionally be unfrozen in Phase 2. The decoder head $h$ is frozen during Phase 1 (P1--P3 training), serving only as a fixed inverse of the PCA standardisation. When Phase 2 begins we optionally allow the head to adapt. Three modes are supported: (i) keep the head frozen throughout, (ii) unfreeze only the decoder head $h$, and (iii) unfreeze both $h$ and the encoder pre-head $V$ (a linear pre-encoder with the same structure as the PCA preprocessor of Eq.~\eqref{eq:pca-scaler}, with $\bm{\mu}, \bm{\sigma}$ held fixed at their PCA-fitted values and $V$ made trainable), so that $V$ can co-adapt on both sides of the encoder--decoder pipeline. The weighted-reconstruction loss of eq.~\eqref{eq:weighted-recon} and the constraint-consistency loss of eq.~\eqref{eq:cc} continue to provide the training signal through the now partially trainable decoder. No new reconstruction term is introduced. Allowing $V$ to drift only in Phase 2 lets the head align with the constraint-violation geometry captured by $\loss_{\text{cc}}$ without disturbing the distribution fit learned under the frozen warm-start.
\subsubsection{Unified Training Algorithm}

Algorithms~\ref{alg:waae-ro-p1} and~\ref{alg:waae-ro-p2} give the complete WAAE-GMM-RO training procedure. Phase~1 (Algorithm~\ref{alg:waae-ro-p1}) establishes (P1)--(P3) together with the GMM guide under a frozen decoder head. When Phase~2 begins the decoder learning rate drops to $\text{lr}_2$, and the head (optionally together with the encoder pre-head $V$) becomes trainable. Phase~2 (Algorithm~\ref{alg:waae-ro-p2}) continues all Phase-1 losses while adding the tail-weighted mixed batch, constraint-consistency, and latent-violation losses.

\begin{algorithm}[tbp]
\small
\caption{WAAE-GMM-RO Training: Phase 1 (P1--P3 with GMM guide, frozen head)}\label{alg:waae-ro-p1}
\begin{algorithmic}[1]
\REQUIRE Training data $\bm{X} \in \R^{N \times n_{\xi}}$; phase-1 epochs $E_1$.
\ENSURE Warm-started $(\enc, b, h, \text{GMM})$ with $h$ frozen.
\STATE Fit PCA on $\bm{X}$ to obtain $(\bm{\mu}, \bm{\sigma}, V)$; set $\bm{X}_{\text{pca}} \gets V^{\top}(\bm{X} - \bm{\mu}) \oslash \bm{\sigma}$.
\STATE Initialise head $h$ with weight $\mathrm{diag}(\bm{\sigma})\,V$, bias $\bm{\mu}$; freeze $h$.
\STATE Warm-start GMM $\{(\pi_k, \bm{\mu}_k, \bm{\sigma}_k)\}_{k=1}^{K_{\text{gmm}}}$ via expectation--maximisation (or variational inference with a Dirichlet-process prior) on $\bm{X}$.
\FOR{$e = 1, \dots, E_1$}
    \FOR{each mini-batch $\bm{x} \subset \bm{X}_{\text{pca}}$}
        \STATE $\loss_1 \gets \lambda_{\text{recon}}\loss_{\text{recon}} + \lambda_{\text{data}}\loss_{\text{data}} + \lambda_{\text{latent}}\loss_{\text{latent}} + \lambda_{\text{gmm}}\loss_{\text{GMM}} + \lambda_{\text{comp}}\loss_{\text{comp}}$, i.e.\ $\loss_{\text{WAAE-GMM}}$ of eq.~\eqref{eq:gmm-total}; update $\enc, b, \{\disc^{\text{pca}}, \disc^{\text{std}}\}$, GMM.
    \ENDFOR
\ENDFOR
\RETURN $(\enc, b, h, \text{GMM})$.
\end{algorithmic}
\end{algorithm}

\begin{algorithm}[tbp]
\small
\caption{WAAE-GMM-RO Training: Phase 2 (P4 on tail-weighted mixed batch)}\label{alg:waae-ro-p2}
\begin{algorithmic}[1]
\REQUIRE Warm-started $(\enc, b, h, \text{GMM})$ from Algorithm~\ref{alg:waae-ro-p1}; scenarios $\bm{\Xi}$; decision pool $\mathcal{D}$ with $|\mathcal{D}| = K_{\text{pool}}$; phase-2 epochs $E_2$; anneal period $E_a$; mixing ratio $\beta$; unfreeze mode $\in \{\text{none}, \text{head only}, \text{head}+\text{pre-head}\}$.
\ENSURE Trained $(\enc, b, h)$.
\STATE Precompute violation matrix $M$ (eq.~\eqref{eq:violation-matrix}) and tail weights $\bm{w}$ (eq.~\eqref{eq:tail-weight}).
\STATE Drop decoder learning rate to $\text{lr}_2$. If unfreeze mode $\neq$ none, unfreeze $h$ (and encoder pre-head $V$ when mode $=$ head$+$pre-head) and add their parameters to the AE optimiser.
\FOR{$e = 1, \dots, E_2$}
    \STATE $a \gets \min(1,\; e / E_a)$ \quad\COMMENT{anneal factor}
    \FOR{each mini-batch}
        \STATE Build mixed batch $\mathcal{B}_{\text{mix}}$ (eq.~\eqref{eq:mixing}) and attach tail weights $\bm{w}_{\text{mix}}$.
        \STATE $\loss_2 \gets \lambda_{\text{recon}}\loss_{\text{w-recon}} + \lambda_{\text{data}}\loss_{\text{data}} + \lambda_{\text{latent}}\loss_{\text{latent}} + \lambda_{\text{gmm}}\loss_{\text{GMM}} + 0.1\,\lambda_{\text{comp}}\loss_{\text{comp}} + a\,\lambda_{\text{cc}}\loss_{\text{cc}} + \lambda_{\text{lat-vio}}\loss_{\text{lat-vio}}$ (eqs.~\eqref{eq:weighted-recon},~\eqref{eq:cc},~\eqref{eq:lat-vio}; $\loss_{\text{data}}, \loss_{\text{latent}}, \loss_{\text{GMM}}, \loss_{\text{comp}}$ as in Phase 1; $\lambda_{\text{comp}}$ is attenuated by $0.1$ so that (P4) terms dominate late in training); update $\enc, b, h$ ($+V$ if unfrozen), GMM.
    \ENDFOR
\ENDFOR
\RETURN $(\enc, b, h)$.
\end{algorithmic}
\end{algorithm}


\begin{table}[tbp]
\centering
\footnotesize
\caption{WAAE-GMM-RO hyperparameters.}
\label{tab:hyperparams}
\begin{tabular}{llcl}
\toprule
\textbf{Parameter} & \textbf{Symbol} & \textbf{Default} & \textbf{Description} \\
\midrule
Mixing ratio & $\beta$ & 0.2 & Fraction of scenarios in mixed batch \\
Phase-1 epochs & $E_1$ & 1000 & Epochs of (P1)--(P3) + GMM training (frozen head) \\
Anneal period & $E_a$ & 100 & Linear annealing duration \\
Phase-2 epochs & $E_2$ & 500 & Epochs of (P1)--(P4) training with mixed batch \\
Phase-2 learning rate & $\text{lr}_2$ & $5{\times}10^{-4}$ & Decoder LR during Phase 2 \\
GMM components & $K_{\text{gmm}}$ & 12 & Mixture components in the data-space GMM guide \\
Tail upweighting & $\alpha$ & 3.0 & Maximum tail weight factor \\
Sigmoid sharpness & $\kappa$ & 10.0 & Sigmoid steepness \\
Activation threshold & $\rho_0$ & 0.1 & Violation threshold for upweighting \\
Reconstruction weight & $\lambda_{\text{recon}}$ & 2.0 & (P1) loss weight \\
Constraint consistency & $\lambda_{\text{cc}}$ & 1.0 & (P4) constraint-consistency weight \\
CC loss type & $\ell$ & hinge & Constraint-consistency loss ($\ell\in\{\text{L1, hinge, MMD}\}$) \\
Latent violation weight & $\lambda_{\text{lat-vio}}$ & 1.0 & (P4) latent loss weight \\
Critic steps & $n_{\text{critic}}$ & 5 & Discriminator steps per generator step \\
Gradient penalty & $\lambda_{\text{gp}}$ & 10.0 & WGAN-GP penalty weight \\
CC subsample size & $|\mathcal{H}_{\text{sub}}|$ & 256 & Dangerous pairs per batch \\
Gradient clipping & --- & 1.0 & Max gradient norm (optional) \\
Refresh interval & $R$ & 50 & Epochs between (P4) data refresh \\
Head unfreeze mode & --- & none & \{none, head only, head + pre-head\} in Phase 2 \\
\bottomrule
\end{tabular}
\end{table}

With the encoder--decoder trained to satisfy (P1)--(P4), we now embed the ReLU decoder in a mixed-integer linear program for exact worst-case verification (P5).

\subsection{MILP Integration and Worst-Case Oracle}
\label{sec:omlt}

The central premise of using a ReLU decoder is exact worst-case verification: given a candidate decision~$\bm{x}$, we can certify whether the generative uncertainty set contains any scenario that violates a given constraint.
Building on the MILP representation of Section~\ref{sec:decoded-set}, the oracle is embedded in a cutting-plane outer loop that solves the GRO problem.

\subsubsection{Decoder Export and MILP Encoding}
\label{sec:omlt-export}
After training, the decoder is exported via ONNX and lifted to a Pyomo MILP block using the OMLT library~\citep{Ceccon2022}. Each hidden ReLU is encoded by eq.~\eqref{eq:bigm}, yielding one binary variable per hidden neuron ($128$ in our $[6, 32, 64, 32, 6]$ decoder) and reproducing eq.~\eqref{eq:oracle-milp} exactly. Two operational choices complete the encoding. First, once the decoder is exported we tighten its Big-M relaxations offline with optimality-based bound tightening (OBBT), which accelerates the subsequent branch-and-bound. Second, the inverse data-scaling is folded into the block itself: under the PCA warm-start pipeline (Section~\ref{sec:waae-gmm}) the decoder head $h$ implements the PCA inverse $h(\bm{y}) = \diag(\bm{\sigma})\,V\,\bm{y} + \bm{\mu}$, so the composed network $h \circ b$ outputs raw-unit scenarios directly, with no separate inverse-scaling step after the MILP block. The decoder body also carries an input-output skip, $b(\bm{z}) = \text{net}(\bm{z}) + \bm{z}$ (eq.~\eqref{eq:skip}). Because the skip is linear in $\bm{z}$, it is included in the ONNX export and appears as a standard feedforward operation in the MILP block, adding $n_z$ linear terms with no new binary variables.

\subsubsection{Worst-Case Oracle}
\label{sec:oracle}
Given a candidate decision $\bm{x}$ and an uncertain constraint $g(\bm{x}, \bm{\xi})$ from eq.~\eqref{eq:ro}, the worst-case violation is computed by solving
\begin{equation}\label{eq:oracle-full}
    v^{\star}(\bm{x}) = \max_{\bm{z},\,\bm{\xi}} \; g(\bm{x}, \bm{\xi}) \quad \text{s.t.} \quad \bm{\xi} = \dec(\bm{z}), \; \bm{z} \in \Zz,
\end{equation}
where $\Zz$ is the calibrated latent set (eq.~\eqref{eq:z-box} or eq.~\eqref{eq:z-ball}).
The oracle returns:
\begin{itemize}[nosep]
    \item the worst-case violation $v^{\star}$,
    \item the optimal latent vector $\bm{z}^{\star}$,
    \item the worst-case scenario $\bm{\xi}^{\star} = \dec(\bm{z}^{\star})$ in original-unit space,
    \item the MIP optimality gap and solution time.
\end{itemize}

A convex feasibility check precedes the exact oracle. Solving the full oracle of eq.~\eqref{eq:oracle-full} requires branch-and-bound over the 128 ReLU binaries and is the dominant cost of the cutting-plane loop. The oracle is a MILP under the box latent constraint of eq.~\eqref{eq:z-box} and a MISOCP under the ball constraint of eq.~\eqref{eq:z-ball}. In either case its continuous relaxation (ReLU binaries relaxed to $[0,1]$, latent constraint retained as-is) is convex (an LP or an SOCP respectively) and provides an upper bound $\bar v$ on the worst-case violation. The feasibility check uses this relaxation with the OBBT-tightened Big-M bounds in place. When $\bar v \le \varepsilon_{\text{vio}}$, feasibility is certified. Otherwise the full oracle is invoked.

When the check does not certify feasibility, the exact oracle returns $v^{\star}$ and $\bm{z}^{\star}$ at global optimality. If $v^{\star} \leq \varepsilon_{\text{vio}}$, the decision is certified robust-feasible within the calibrated latent set. Otherwise the worst-case scenario $\bm{\xi}^{\star} = \dec(\bm{z}^{\star})$ is a constructive certificate of infeasibility.

This exact verification is what distinguishes the oracle from sampling. Monte Carlo sampling of $\bm{z}$ from the calibrated set $\Zz$ and evaluating $g(\bm{x}, \dec(\bm{z}))$ provides only a \emph{lower bound} on the worst-case violation: if the worst-case latent lies in a narrow region missed by the sample, the bound is loose. The oracle of eq.~\eqref{eq:oracle-full} guarantees global optimality (gap = 0), providing the \emph{exact} worst-case, which gradient-based heuristics such as PGA~\citep{Brenner2025} do not provide.

\subsubsection{GRO Cutting-Plane Algorithm}
\label{sec:ro-cutting-plane}

The trained decoder and the oracle of Section~\ref{sec:oracle} close the GRO loop. The master problem inherits the form of the application's deterministic counterpart: an LP for the production-planning case study, a MILP for facility location where binary opening decisions make the master integer. Algorithm~\ref{alg:ro-cutting-plane} treats this master as a black-box and is agnostic to its type. The master starts with only the deterministic constraints. Each iteration solves the current master, queries the oracle (with the feasibility check of Section~\ref{sec:oracle} in front of the exact oracle) on every uncertain constraint, and either certifies the current decision as robust-feasible or adds one robust cut per violated constraint to the master. The preprocessing (MILP export and OBBT) from Section~\ref{sec:omlt-export} is done once, before the loop, and is reused across all iterations.
\begin{algorithm}[tbp]
\small
\caption{GRO Cutting-Plane Algorithm}\label{alg:ro-cutting-plane}
\begin{algorithmic}[1]
\REQUIRE Trained decoder pipeline $\dec: \Zz \to \R^{n_{\xi}}$ mapping latent $\bm{z}$ to original-unit outputs (Section~\ref{sec:omlt-export}); calibrated latent set $\Zz$; nominal master-LP builder (no uncertainty constraints); set of uncertain constraints $\mathcal{C}$ with per-constraint violation function $g_c(\bm{x}, \bm{\xi})$; violation tolerance $\varepsilon_{\text{vio}}$; maximum master iterations $k_{\max}$.
\ENSURE Robust-feasible decision $\bm{x}^{\star}$, or max-iter with best-so-far $\bm{x}$.
\STATE \textbf{[Preprocessing]}
\STATE Export $\dec$ to ONNX; build the OMLT MILP block once (Section~\ref{sec:omlt-export}).
\STATE Run OBBT once (one LP per hidden variable) to tighten Big-M constants.
\STATE \textbf{[Outer master loop]}
\STATE Build master LP; $k \gets 0$.
\WHILE{$k < k_{\max}$}
    \STATE Solve master LP $\to$ candidate decision $\bm{x}$.
    \STATE \textbf{[Phase A]:} feasibility screen on every constraint
    \FOR{\textbf{each} $c \in \mathcal{C}$}
        \STATE \textbf{Feasibility check} (Section~\ref{sec:oracle}): solve the convex relaxation of the oracle for $c$ at $\bm{x}$ $\to$ screen value $\bar v_c$.
    \ENDFOR
    \STATE $\mathcal{C}_{\text{cand}} \gets \{\, c \in \mathcal{C} : \bar v_c > \varepsilon_{\text{vio}} \,\}$ \quad // MIP candidates
    \IF{$\mathcal{C}_{\text{cand}} = \emptyset$}
        \RETURN $\bm{x}^{\star} \gets \bm{x}$ \quad // every constraint certified by the screen
    \ENDIF
    \STATE \textbf{[Phase B]:} exact MIPs on the candidate list
    \FOR{\textbf{each} $c \in \mathcal{C}_{\text{cand}}$ \textbf{in parallel}}
        \STATE \textbf{Exact oracle}: solve the full MIP~\eqref{eq:oracle-full} for $c$ at $\bm{x}$ $\to (v^{*}_c, \bm{\xi}^{*}_c)$.
    \ENDFOR
    \STATE $\mathcal{C}_{\text{vio}} \gets \{\, c \in \mathcal{C}_{\text{cand}} : v^{*}_c > \varepsilon_{\text{vio}} \,\}$ \quad // confirmed violators
    \IF{$\mathcal{C}_{\text{vio}} = \emptyset$}
        \RETURN $\bm{x}^{\star} \gets \bm{x}$
    \ENDIF
    \STATE \textbf{[Phase C]:} one robust cut per confirmed violator
    \FOR{\textbf{each} $c \in \mathcal{C}_{\text{vio}}$}
        \STATE Add robust cut $g_c(\cdot, \bm{\xi}^{*}_c) \leq 0$ to the master LP.
    \ENDFOR
    \STATE $k \gets k + 1$.
\ENDWHILE
\RETURN max-iter reached; current $\bm{x}$ with worst violation $\max_{c \in \mathcal{C}_{\text{vio}}} v^{*}_c$.
\end{algorithmic}
\end{algorithm}

Two properties justify this scheme. First, finite convergence: the master polytope is bounded and the exact oracle returns an optimal $\bm{\xi}^{\star}$ each iteration. The loop either terminates by certification or produces a monotonically tightening sequence of master feasibility sets until $k_{\max}$. Second, the guarantee carries through: if the loop returns $\bm{x}^{\star}$ via the certification branch, $\bm{x}^{\star}$ is feasible against every scenario in the decoded uncertainty set $\Uset_{\dec}$. The three-phase inner structure of Algorithm~\ref{alg:ro-cutting-plane} accelerates the loop without compromising this guarantee. Phase~A applies the feasibility check (Section~\ref{sec:oracle}) to each constraint and prunes to the candidate list $\mathcal{C}_{\text{cand}}$. Phase~B solves the exact oracle on each candidate in parallel. The candidates share the master decision $\bm{x}$, the MILP block, and the OBBT-tightened Big-M bounds, differing only in the constraint $g_c$. Phase~C adds one robust cut per confirmed violator to the master, then the loop advances.

\section{Case Studies}
\label{sec:results}

\subsection{Multi-Period Production Planning}
\label{sec:cs-case-study}

We evaluate the GRO framework on a 6-period production planning (PP) problem.

\subsubsection{Problem formulation}
\label{sec:cs-pp-formulation}

A company plans production, storage, and sales over $T=6$ periods of 2~months each to maximise sales revenue subject to a cost budget.
For each period $t = 1, \ldots, 6$, the decision variables are
$x_t$ (production, tons),
$y_t$ (end-of-period inventory, tons), and
$z_t$ (sales, tons).
The deterministic formulation is:
\begin{align}
    \max_{x, y, z} \quad & \sum_{t=1}^{6} P_t \, z_t \label{eq:cs-obj} \\
    \text{s.t.} \quad & \sum_{t=1}^{6} \tilde{C}_t \, x_t + \sum_{t=1}^{6} V_t \, y_t \leq B \label{eq:cs-budget} \\
    & y_{t-1} + x_t = y_t + z_t, \quad t = 1, \ldots, 6, \;\; y_0 := I_0 \label{eq:cs-inv} \\
    & y_6 = I_{\text{final}} \label{eq:cs-invfinal} \\
    & 0 \leq x_t \leq U_t, \quad 0 \leq z_t \leq D_t, \quad y_t \geq 0 \label{eq:cs-bounds}
\end{align}
where $P_t$ are selling prices, $\tilde{C}_t$ are uncertain production costs, $V_t$ are storage costs, $B = 400{,}000$ is the budget, and $I_0 = I_{\text{final}} = 500$~tons.
The uncertain costs $\tilde{C}_t$ are the only source of uncertainty. All other parameters are known.
Table~\ref{tab:cs-data} lists the nominal parameters.

\begin{table}[tbp]
\centering
\footnotesize
\caption{Production planning problem data (\citealp{Ravindran2008}; robust cost-uncertainty version after \citealp{Li2012}).}
\label{tab:cs-data}
\begin{tabular}{ccccccc}
\toprule
Period $t$ & 1 & 2 & 3 & 4 & 5 & 6 \\
\midrule
Price $P_t$ (\$/ton) & 180 & 180 & 250 & 270 & 300 & 320 \\
Nominal cost $C_t$ (\$/ton) & 20 & 25 & 30 & 40 & 50 & 60 \\
Storage $V_t$ (\$/ton) & 2 & 2 & 2 & 2 & 2 & 2 \\
Capacity $U_t$ (tons) & 1500 & 2000 & 2200 & 3000 & 2700 & 2500 \\
Demand $D_t$ (tons) & 1100 & 1500 & 1800 & 1600 & 2300 & 2500 \\
\bottomrule
\end{tabular}
\end{table}

Training data consist of $N = 6{,}000$ scenarios drawn from six synthetic cost distributions: three independent variants (normal, skewed, mixture) and three correlated variants with shared latent factor ($\rho = 0.7$). Each marginal is centred at the nominal cost $C_t$ of Table~\ref{tab:cs-data} with a fixed coefficient of variation $\sigma_t / C_t = 0.25$, so the per-period standard deviations $\sigma_t$ range from $5.00$ to $15.00$ \$/ton across the six periods. The mixture variants additionally use an inner-component standard deviation of $\sigma_t / 2$ around each of two component means.

This problem instantiates the abstract robust formulation eq.~\eqref{eq:ro} of Section~\ref{sec:problem} as follows. The decision vector is $\bm{x} = (\bm{x}^{\text{prod}},\, \bm{y},\, \bm{z}^{\text{sale}}) \in \R^{18}$ with $\bm{x}^{\text{prod}} = (x_1,\ldots,x_6)$, $\bm{y} = (y_1,\ldots,y_6)$, and $\bm{z}^{\text{sale}} = (z_1,\ldots,z_6)$. The uncertainty vector is $\bm{\xi} = (\tilde{C}_1,\ldots,\tilde{C}_6) \in \R^6$. The budget eq.~\eqref{eq:cs-budget} is the sole uncertain constraint and takes the form $g(\bm{x}, \bm{\xi}) = \bm{\xi}^{\top}\bm{x}^{\text{prod}} + \bm{V}^{\top}\bm{y} - B \leq 0$, where $\bm{V} = (V_1,\ldots,V_6)$. The remaining constraints, eqs.~\eqref{eq:cs-inv}--\eqref{eq:cs-bounds}, are deterministic and define the feasible set $\mathcal{X}$ of eq.~\eqref{eq:ro}. Under the uncertain costs, the robust counterpart maximises the sales revenue eq.~\eqref{eq:cs-obj} subject to the budget eq.~\eqref{eq:cs-budget} holding for every cost realisation in the calibrated uncertainty set. Its optimal value is the robust objective $\tau$ (in M\$).

\subsubsection{Trained decoder evaluation}
\label{sec:cs-pp-decoder-eval}

The trained decoders are evaluated against the empirical production-planning distribution before deployment in the cutting-plane algorithm of Section~\ref{sec:ro-cutting-plane}. Table~\ref{tab:full-comparison} reports the five-point cross-model comparison. Figures~\ref{fig:worst-case-fixed-d} and~\ref{fig:oracle-validation} report the worst-case oracle accuracy at a representative decision and across a pool of $K_{\text{pool}} = 2000$ feasible decisions, respectively.

Table columns implement the five-point metric definitions of Section~\ref{sec:five-points} (cf.\ eqs.~\eqref{eq:p1}, \eqref{eq:p2-mmd}, \eqref{eq:delta-wc}). The Point-3 cells count latent dimensions passing the Anderson--Darling and Shapiro--Wilk normality tests at $\alpha = 0.01$. The Point-4 columns ($\Delta_{\text{wc}}$, conserv, corr) are evaluated on each model's encoder--decoder reconstruction of the $2000$ empirical reference scenarios, which probes whether the decoder preserves the violation geometry of the training distribution. NF is bijective, so its reconstruction is exact and the Point-4 cells reduce to a trivial identity (reported as ``---'').

Point-4 training is essential for producing a deployable decoder, irrespective of architecture: the tail-weighted (P4) loss keeps WAAE-GMM-RO's conservatism ratio within a tight band around the safe-but-tight target ($1.002$--$1.023$ across the six distributions, all on the safe side), the best of any model in the table and the only model with conserv ${\geq} 1.00$ everywhere.

\begin{table}[tbp]
\scriptsize
\centering
\caption{Cross-model comparison across six distribution families ($d = 6$, $N = 6000$). Metric definitions are given in the body. Bold marks the best entry per distribution and metric, excluding NF for the Point-4 columns. }
\label{tab:full-comparison}
\scriptsize
\setlength{\tabcolsep}{2pt}
\begin{tabular}{llcccccccl}
\toprule
& & \textbf{(P1)} & \textbf{(P2)} & \multicolumn{2}{c}{\textbf{(P3)}} & \textbf{(P4)} & & & \textbf{(P5)} \\
\cmidrule(lr){3-3} \cmidrule(lr){4-4} \cmidrule(lr){5-6} \cmidrule(lr){7-7} \cmidrule(lr){8-9} \cmidrule(lr){10-10}
\textbf{Dist.} & \textbf{Model} & $r_{\text{mse}}$ & MMD & AD & SW & $\Delta_{\text{wc}}$ & conserv & corr & MILP \\
\midrule
\multirow{5}{*}{\rotatebox{90}{\scriptsize\shortstack{Indep.\\ Normal}}}
& VAE        & 0.0210 & \textbf{0.0002} & \textbf{6} & 6; 57/60 & 0.083 & 0.917 & 0.999 & 128 \\
& WAAE       & 0.0020 & \textbf{0.0002} & \textbf{6} & \textbf{6; 59/60} & \textbf{0.020} & 1.012 & \textbf{1.000} & 128 \\
& WAAE-GMM-RO & \textbf{0.0013} & 0.0006 & \textbf{6} & \textbf{6; 59/60} & \textbf{0.020} & \textbf{1.008} & 0.999 & 128 \\
& NF         & --- & \textbf{0.0002} & 5 & \textbf{6; 59/60} & --- & --- & --- & MINLP \\
& DDIM       & 0.0282 & 0.0067 & 5 & \textbf{6; 60/60} & 0.138 & 0.862 & \textbf{1.000} & ${\sim}$6400 \\
\midrule
\multirow{5}{*}{\rotatebox{90}{\scriptsize\shortstack{Indep.\\ Skewed}}}
& VAE        & 0.0241 & \textbf{0.0004} & 0 & 1; 25/60 & 0.044 & 1.041 & 0.998 & 128 \\
& WAAE       & 0.0084 & \textbf{0.0004} & 2 & 6; 51/60 & 0.042 & 0.958 & 0.997 & 128 \\
& WAAE-GMM-RO & \textbf{0.0043} & 0.0009 & \textbf{6} & \textbf{6; 58/60} & \textbf{0.016} & \textbf{1.015} & \textbf{1.000} & 128 \\
& NF         & --- & \textbf{0.0002} & \textbf{6} & \textbf{6; 59/60} & --- & --- & --- & MINLP \\
& DDIM       & 0.0309 & 0.0078 & 5 & 6; 57/60 & 0.041 & 0.959 & 0.999 & ${\sim}$6400 \\
\midrule
\multirow{5}{*}{\rotatebox{90}{\scriptsize\shortstack{Indep.\\ Mixture}}}
& VAE        & 0.0375 & 0.0017 & 1 & 1; 24/60 & 0.127 & 0.873 & \textbf{1.000} & 128 \\
& WAAE       & 0.0167 & 0.0012 & 3 & 5; 52/60 & \textbf{0.035} & 0.979 & 0.999 & 128 \\
& WAAE-GMM-RO & 0.0044 & \textbf{0.0006} & \textbf{6} & \textbf{6; 59/60} & \textbf{0.037} & \textbf{1.023} & 0.999 & 128 \\
& NF         & --- & 0.0003 & 0 & 6; 51/60 & --- & --- & --- & MINLP \\
& DDIM       & 0.0202 & 0.0041 & 0 & 3; 34/60 & 0.135 & 0.865 & \textbf{1.000} & ${\sim}$6400 \\
\midrule
\multirow{5}{*}{\rotatebox{90}{\scriptsize\shortstack{Corr.\\ Normal}}}
& VAE        & 0.0324 & 0.0007 & \textbf{6} & \textbf{6; 58/60} & 0.104 & 0.896 & 0.998 & 128 \\
& WAAE       & 0.0193 & 0.0018 & \textbf{6} & \textbf{6; 58/60} & \textbf{0.007} & 0.994 & \textbf{1.000} & 128 \\
& WAAE-GMM-RO & \textbf{0.0022} & 0.0012 & \textbf{6} & \textbf{6; 58/60} & \textbf{0.010} & \textbf{1.005} & 0.999 & 128 \\
& NF         & --- & \textbf{0.0001} & \textbf{6} & \textbf{6; 58/60} & --- & --- & --- & MINLP \\
& DDIM       & 0.0303 & 0.0057 & \textbf{6} & \textbf{6; 58/60} & 0.034 & 0.966 & \textbf{1.000} & ${\sim}$6400 \\
\midrule
\multirow{5}{*}{\rotatebox{90}{\scriptsize\shortstack{Corr.\\ Skewed}}}
& VAE        & 0.0399 & 0.0007 & 4 & \textbf{6; 56/60} & 0.009 & 0.994 & 0.999 & 128 \\
& WAAE       & 0.0180 & 0.0025 & 1 & 5; 48/60 & 0.023 & 1.005 & 0.994 & 128 \\
& WAAE-GMM-RO & \textbf{0.0050} & \textbf{0.0008} & \textbf{6} & \textbf{6; 58/60} & \textbf{0.005} & \textbf{1.002} & \textbf{1.000} & 128 \\
& NF         & --- & \textbf{0.0002} & \textbf{6} & \textbf{6; 56/60} & --- & --- & --- & MINLP \\
& DDIM       & 0.0298 & 0.0055 & 3 & \textbf{6; 53/60} & 0.051 & 0.949 & \textbf{1.000} & ${\sim}$6400 \\
\midrule
\multirow{5}{*}{\rotatebox{90}{\scriptsize\shortstack{Corr.\\ Mixture}}}
& VAE        & 0.0859 & 0.0020 & 0 & 1; 33/60 & 0.063 & 0.938 & 0.999 & 128 \\
& WAAE       & 0.0137 & 0.0021 & 0 & 0; 23/60 & 0.115 & 0.888 & 0.986 & 128 \\
& WAAE-GMM-RO & \textbf{0.0035} & \textbf{0.0008} & \textbf{5} & \textbf{6; 57/60} & \textbf{0.029} & \textbf{1.018} & 0.998 & 128 \\
& NF         & --- & 0.0006 & 4 & \textbf{6; 60/60} & --- & --- & --- & MINLP \\
& DDIM       & 0.0257 & 0.0125 & 0 & 1; 22/60 & 0.071 & 0.929 & \textbf{1.000} & ${\sim}$6400 \\
\bottomrule
\end{tabular}
\end{table}

Several conclusions emerge from Table~\ref{tab:full-comparison}:
\begin{itemize}[nosep]
\item \textbf{VAE} fails (P3) on non-Gaussian data: KL regularisation alone does not deliver a well-calibrated aggregated posterior, which propagates into a (P4) underestimation on the skewed and mixture distributions.

\item \textbf{WAAE} matches the data distribution well (P1)--(P3) but underestimates the worst-case violation on the harder skewed and mixture cells, since the loss carries no (P4) signal.

\item \textbf{NF} matches the data distribution well, but its multi-step structure (stacked coupling layers with $\exp(\tanh(\cdot))$ activations) pushes the verification problem to MINLP rather than MILP, ruling out exact oracle solves: a (P5) failure.

\item \textbf{DDIM} is also a multi-step model: unrolling its $T$-step denoising chain inflates the binary-variable count an order of magnitude beyond the single-pass decoders, so (P5) fails. It additionally underestimates the worst-case (conserv ${<} 1$ on all six distributions), failing (P4) as well.

\item \textbf{WAAE-GMM-RO} is the only model that satisfies all five points: an accurate distribution match, a regular latent, a (P4)-safe oracle, and a tractable MILP-embeddable decoder.
\end{itemize}

Figure~\ref{fig:worst-case-fixed-d} provides a deployment-side snapshot at a single decision $\bm{x}_{\text{med}}$ (median worst-case violation in the pool) on the independent mixture distribution. The top row plots the $(\xi_2, \xi_5)$ data-space slice: the ground-truth column shows the empirical scatter coloured by $g(\bm{x}_{\text{med}}, \bm{\xi})/B$, and each model column overlays the $2000$ encoder--decoder reconstructions of those reference scenarios on the same rainbow scale. The bottom row plots the latent $(z_1, z_2)$ density of the encoded reference scenarios, with a dashed reference circle at the calibrated radius $\gamma_{\text{marg}}(p = 0.9) \approx 1.64$. Black $\bigstar$ marks the empirical worst-case scenario $\bm{\xi}^{*}_{\text{ref}}$ (shared across columns). Model-coloured $\bigstar$ marks the decoder-side worst-case reconstruction $\bm{\xi}^{*}_{\text{gen}}$ on top and the matching latent $\bm{z}^{*}_{\text{gen}}$ on bottom. A well-calibrated decoder places its worst-case reconstruction close to the empirical reference and its latent-space worst-case inside the calibrated ball. The VAE column shows both failure modes simultaneously: its worst-case falls outside the calibrated ball of the latent space, reflecting the Point-3 failure of Section~\ref{sec:p3}, and its data-space reconstruction underestimates the empirical worst-case violation, leaving the resulting oracle on the unsafe side of (P4). A standalone VAE is therefore structurally insufficient as a generative uncertainty set.

\begin{figure}[tbp]
\centering
\includegraphics[width=\textwidth]{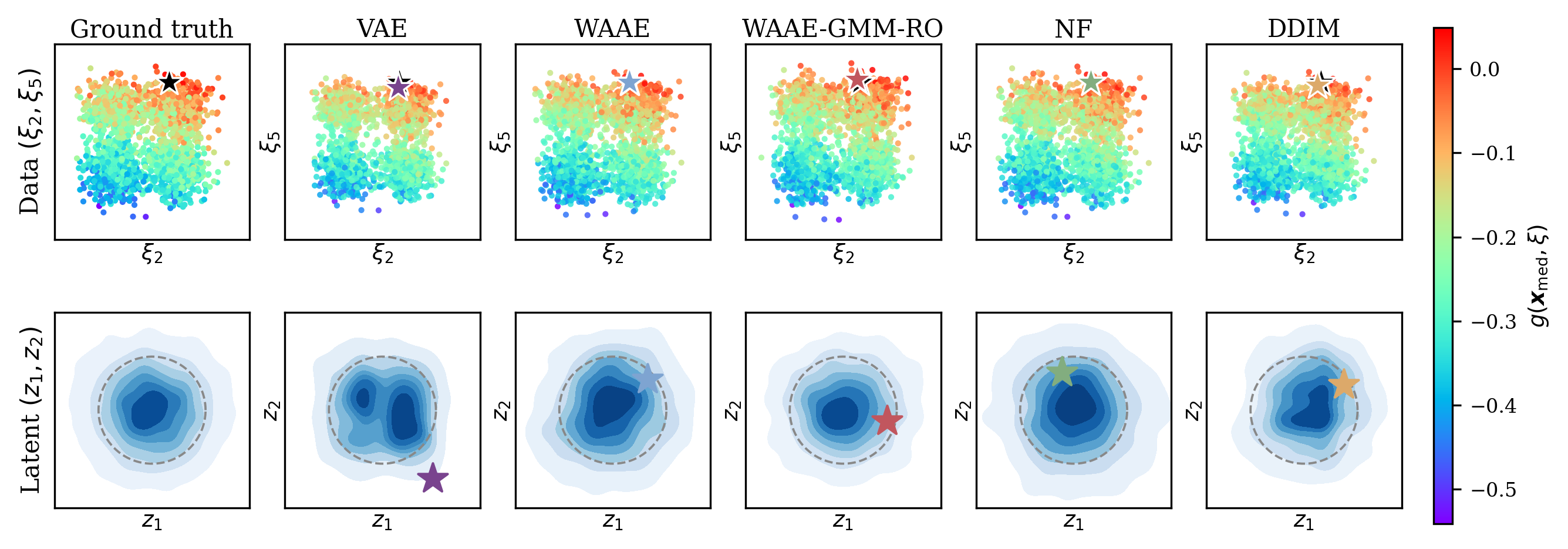}
\caption{Worst-case oracle at a fixed median decision $\bm{x}_{\text{med}}$ on the independent mixture distribution. Top row: data-space slice $(\xi_2, \xi_5)$. Bottom row: latent slice $(z_1, z_2)$.}
\label{fig:worst-case-fixed-d}
\end{figure}

Figure~\ref{fig:oracle-validation} provides a per-decision view of the oracle across the six PP distributions for VAE, WAAE, WAAE-GMM-RO, and DDIM. NF is omitted, since its bijective decoder makes $v_{\text{gen}} \equiv v_{\text{ref}}$ by construction. Each point is one of $K_{\text{pool}} = 2000$ pool decisions $\bm{x}_k$: the $x$-axis is the empirical worst-case violation $v_{\text{ref}}(\bm{x}_k)$, the $y$-axis is its encoder--decoder counterpart $v_{\text{gen}}(\bm{x}_k)$, both as defined in Section~\ref{sec:p4-intro} and normalised here by the budget $B$. Per-panel annotations report the Pearson correlation \emph{corr}, the pool-averaged relative error $\Delta_{\text{wc}}$ of eq.~\eqref{eq:delta-wc}, and the conservatism ratio \emph{conserv}. Across the six distributions WAAE-GMM-RO keeps its points on or above $y = x$ (\emph{conserv} $\ge 1$ everywhere), whereas the VAE and DDIM points fall below it on the unsafe side for most distributions. The per-decision view therefore confirms the aggregate safety reported in Table~\ref{tab:full-comparison}.

\begin{figure}[tbp]
\centering
\includegraphics[width=0.98\textwidth]{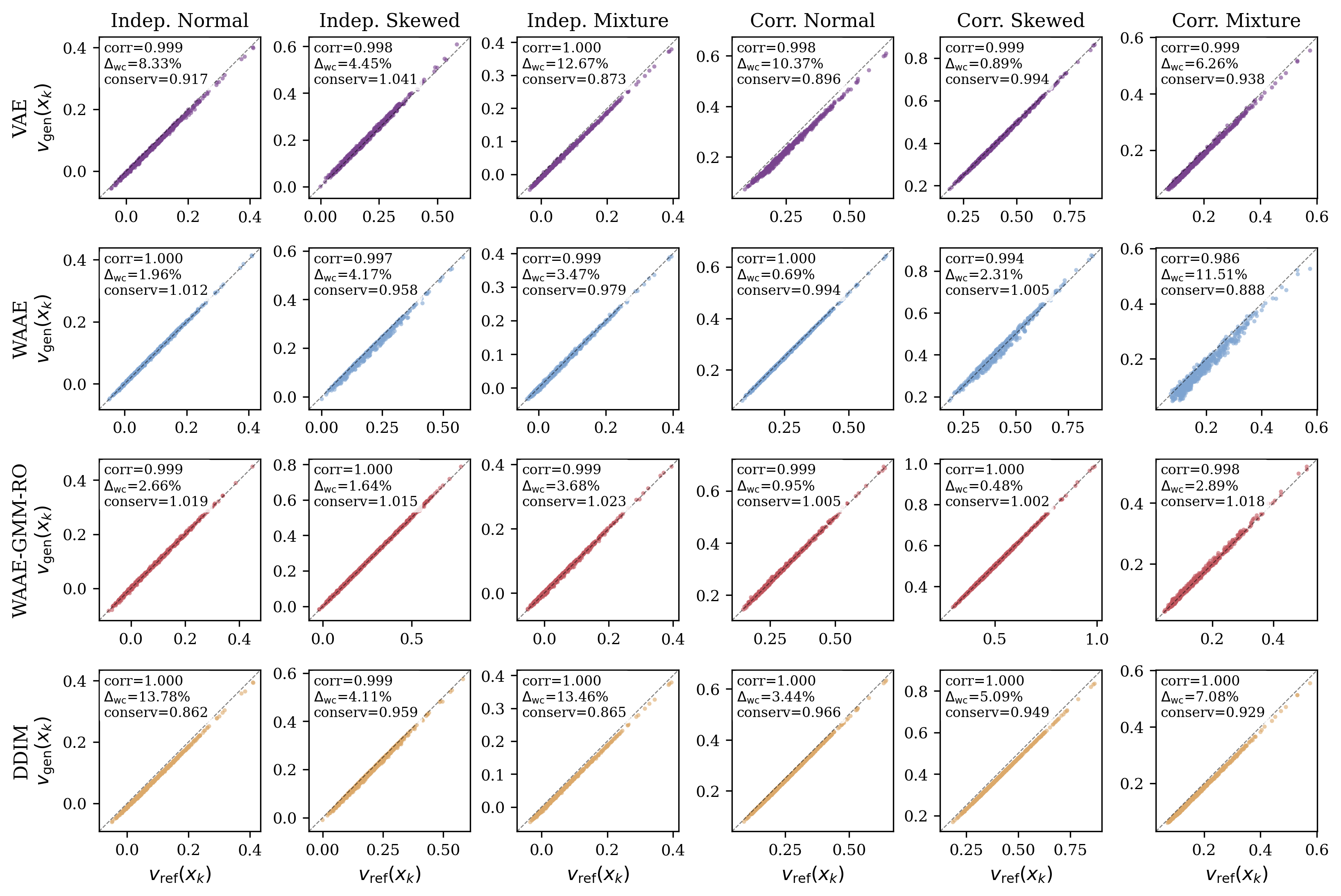}
\caption{Per-decision oracle validation across the six distributions (columns) and four generative models (rows). Each point is one of $K_{\text{pool}} = 2000$ pool decisions $\bm{x}_k$. The dashed $y = x$ marks a perfectly calibrated oracle. A safe oracle keeps every point on or above this line and as close to the line as possible to limit over-conservatism.}
\label{fig:oracle-validation}
\end{figure}

\subsubsection{Algorithm~\ref{alg:ro-cutting-plane} specialisation and comparison baselines}
\label{sec:cs-pp-algorithm}

We compare seven uncertainty set approaches at the same marginal coverage $p$ (eq.~\eqref{eq:gamma-per-dim}): two classical robust baselines, three distributionally-robust baselines, and two decoder-based methods. The WAAE and WAAE-RO ablation rows are reported in Table~\ref{tab:full-comparison} above and are not repeated here.
\begin{itemize}[nosep]
    \item \textbf{Box LP.} Marginal empirical quantile box $[\xi^{\text{lo}}_t, \xi^{\text{hi}}_t] = [Q_{(1-p)/2},\, Q_{(1+p)/2}]$, with the worst case at the upper bound since $x_t \geq 0$, solved as LP.
    \item \textbf{IE SOCP.} Interval+ellipsoidal counterpart~\citep{BenTalNemirovski1999} with $\Omega = \Phi^{-1}((1+p)/2)$, solved as SOCP.
    \item \textbf{DDRO SOCP.} Data-driven distributionally robust counterpart~\citep{Bertsimas2018} using the empirical covariance $\hat{\Sigma}$ and a coverage-$p$ confidence set, solved as SOCP.
    \item \textbf{W-DRO.} Wasserstein-1 ambiguity ball about the empirical cost distribution ($L_2$ ground metric)~\citep[Thm~4.2]{MohajerinEsfahani2018}. For the affine budget, its SOCP counterpart adds one cone constraint $\widehat{\bm{\xi}}^{\top}\bm{x} + \sum_t V_t y_t + \varepsilon\,\|\bm{x}\|_2 \leq B$ to the master, with $\widehat{\bm{\xi}}$ the empirical mean cost and $\varepsilon$ the radius.
    \item \textbf{OT-GMM-DRO.} Optimal-transport DRO over the weight simplex of a fixed $K = 8$ Gaussian mixture fit by EM~\citep[Model~20]{Kammammettu2024}. The inner supremum over weight transports, bounded in squared Bures distance, $\sum_{i,j} \pi_{ij} d_{ij}^2 \leq \varepsilon^2$, admits an LP dual added to the master.
    \item \textbf{VAE.} VAE decoder with the MIP oracle of Section~\ref{sec:oracle}.
    \item \textbf{WAAE-GMM-RO.} WAAE-GMM-RO decoder with the same MIP oracle as VAE.
\end{itemize}
Computations use Gurobi~12 with OBBT preprocessing on an NVIDIA RTX~2000~Ada GPU.

The PP case specialises Algorithm~\ref{alg:ro-cutting-plane} as follows: $|\mathcal{C}| = 1$ (the budget eq.~\eqref{eq:cs-budget} is the sole uncertain constraint), and the decision vector $\bm{x} \in \R^{18}$ is the consolidated stack of Section~\ref{sec:cs-pp-formulation}. Each outer iteration therefore queries one MIP oracle.

\subsubsection{Numerical validation and quantile sweep}
\label{sec:cs-pp-evidence}

We first validate the oracle on independent normal costs, the one case with an analytical reference. For costs $\xi_t \sim \N(\mu_t, \sigma_t^2)$, the analytical $q$-quantile robust counterpart is
\begin{equation}\label{eq:true-var-socp}
    \sum_t \mu_t x_t + \Phi^{-1}(q) \sqrt{\sum_t \sigma_t^2 x_t^2} + \sum_t V_t y_t \leq B
\end{equation}
which an affine decoder on $\|\bm{z}\|_2 \leq \gamma_{\text{marg}}$ reproduces exactly under the calibration of eq.~\eqref{eq:gamma-per-dim}. On Gaussian data the decoder therefore cannot outperform the analytical set, and we measure how closely the \emph{trained} decoder and MIP oracle approximate it.

Across the coverage sweep $p \in \{0.1, 0.3, 0.5, 0.7, 0.9\}$ under the ball latent constraint, the two decoder objective curves overlap the analytical SOCP, while the Box~LP, IE~SOCP and DDRO~SOCP baselines lie progressively below, reflecting set-shape conservatism (Fig.~\ref{fig:validation}, left panel). The middle panel resolves the decoder gap on a $\pm 8\%$ scale: WAAE-GMM-RO remains on the safe side throughout, with the gap growing from $0$ at $p = 0.1$ to approximately $-0.44\%$ at $p \geq 0.7$, since the trained-decoder nonlinearity admits a slightly larger image than the analytical ball. VAE sits on the unsafe side, with the gap reaching $+0.28\%$ at $p = 0.9$, consistent with its (P3) failure (Section~\ref{sec:p3}). Both gaps remain within half a percent across the full $p$ range. The right panel verifies decision-level agreement at $p = 0.9$: both decoder schedules coincide with SOCP period by period. On data where an exact reference exists, the trained decoder and MIP oracle therefore reproduce the analytical robust counterpart to within half a percent, validating the pipeline before the non-Gaussian distributions where no closed form is available.

\begin{figure}[tbp]
\centering
\includegraphics[width=0.98\textwidth]{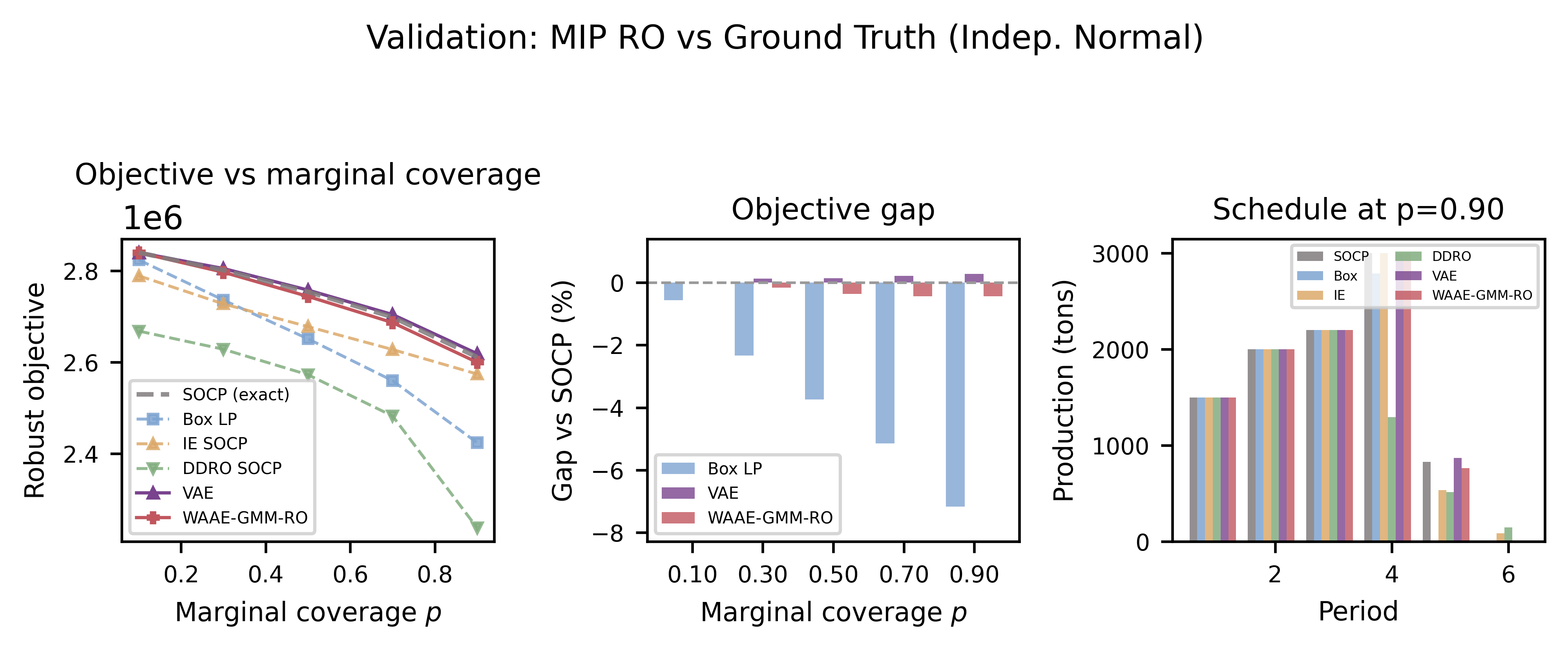}
\caption{Validation on the independent normal distribution under the ball latent constraint. Left: $\tau$ vs $p$. Middle: relative gap to SOCP. Right: 6-period schedule at $p = 0.9$. Lines: SOCP-exact (gray dashed), Box~LP (blue), IE~SOCP (tan), DDRO~SOCP (sage), VAE~MIP (purple), WAAE-GMM-RO~MIP (red).}
\label{fig:validation}
\end{figure}

The $(\xi_1, \xi_2)$ slice at $p = 0.8$ shows the same comparison geometrically (Fig.~\ref{fig:uncertainty-regions}, independent normal on top, independent mixture on the bottom). On Gaussian data all methods nearly coincide, and both decoders reduce to a single Gaussian peak at the data mean. On bimodal data the two decoders separate: the WAAE-GMM-RO decoded density splits into four separated mode peaks that track the empirical mixture, whereas VAE recovers the four modes but places noticeable mass between them. The Box~LP and IE~SOCP regions cover the same area but flat-fill the interior, carrying no information about where the data concentrates, and the two distributionally-robust sets are no better here: W-DRO reduces, for the affine budget, to an isotropic ball about the empirical mean, and OT-GMM-DRO to a frozen set of Gaussian components. Neither adapts to the bimodal structure that the decoders capture above.

\begin{figure}[tbp]
\centering
\includegraphics[width=0.98\textwidth]{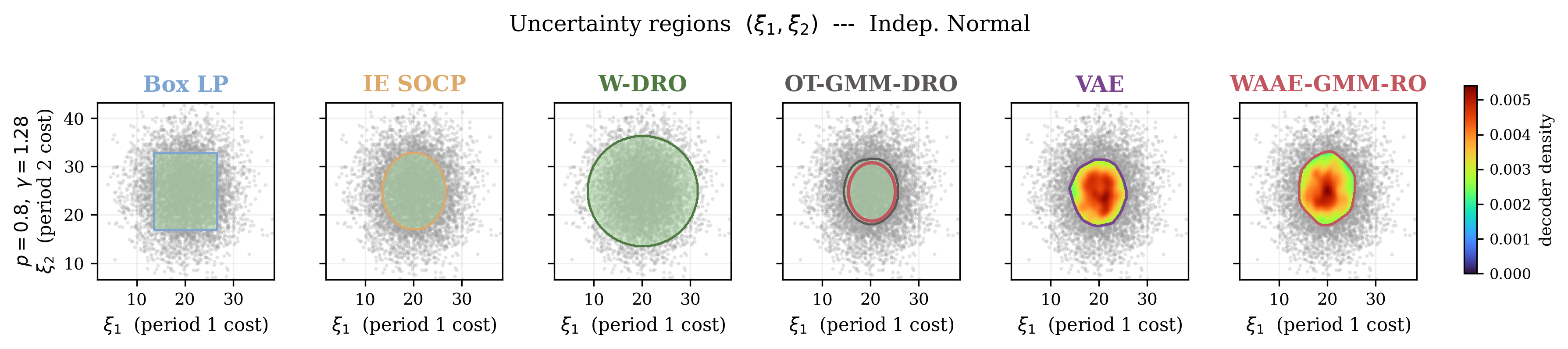}\\[3pt]
\includegraphics[width=0.98\textwidth]{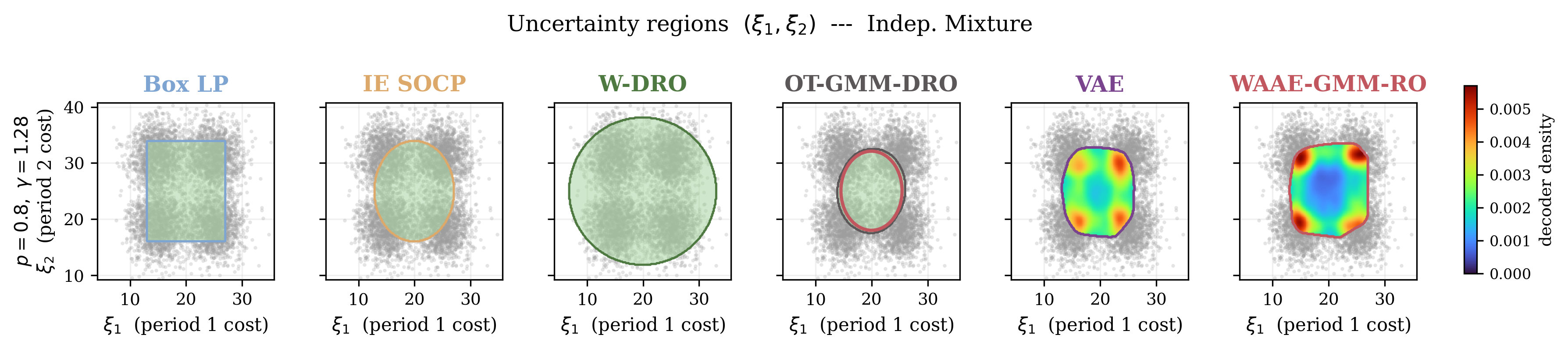}
\caption{Per-method uncertainty regions in the slice $(\xi_1, \xi_2)$ at $p = 0.8$. W-DRO is an isotropic ball about the empirical mean. OT-GMM-DRO is the $K = 8$ fixed mixture components, with the worst-cost component outlined. Upper: independent normal. Lower: independent mixture.}
\label{fig:uncertainty-regions}
\end{figure}

The robust objective $\tau$ is swept over marginal coverage $q \in [0.1, 0.9]$ for the two MILP-embeddable decoders on all six distributions (Fig.~\ref{fig:cs-sweep}). Each decoder is plotted in the two latent-shape variants (the ``box'' eq.~\eqref{eq:z-box} and the inscribed ``ball'' eq.~\eqref{eq:z-ball}), at the calibrated marginal radius $\gamma_{\text{marg}}$ of eq.~\eqref{eq:gamma-per-dim}. Throughout, lower $\tau$ corresponds to a larger uncertainty set (more conservative robust decision) and higher $\tau$ to a smaller set. The two analytical baselines (Box~LP and IE~SOCP) are independent of the decoder's latent shape.

On the non-Gaussian distributions the two analytical baselines, Box~LP and IE~SOCP, have no closed-form reference and hold a fixed geometry. Figure~\ref{fig:cs-sweep} also overlays two distributionally-robust baselines, W-DRO and OT-GMM-DRO, each calibrated to the same one-sided feasibility $(1+p)/2$ as the decoders, so a higher $\tau$ marks a less conservative decision at equal safety. Both are shape-blind (Fig.~\ref{fig:uncertainty-regions}): W-DRO reaches the requested feasibility on every distribution at a profit cost that grows with coverage, and OT-GMM-DRO saturates at a distribution-dependent coverage ceiling that under-covers at high $p$ (the hollow-marker segments).

The decoder adapts. Under the box latent both decoders track Box~LP within a few percent and sit ${\sim}6\%$ below SOCP-exact on independent normal, since the decoded box exceeds the inscribed ball (WAAE-GMM-RO runs slightly more conservative than VAE on the multimodal and correlated cells). Under the ball latent they coincide with SOCP-exact on independent normal (Fig.~\ref{fig:validation}) and sit above Box~LP and IE~SOCP on the other five distributions, a less conservative set than any analytical baseline. Where the data is multimodal or correlated, the learned set splits into the right modes and follows the correlation (Fig.~\ref{fig:uncertainty-regions}), capturing structure that no isotropic ball, ellipsoid, or frozen mixture can, and it carries no coverage ceiling. Off the Gaussian case no analytical reference ranks conservatism exactly, and on this single affine budget the decoder offers no lower robust cost. Its advantage is a shape-adaptive set delivered under one calibration. The classical baselines, conversely, need no training, solve in under a second, and carry finite-sample out-of-sample guarantees the decoder lacks.

Together, the two latent variants show the practical advantage of the formulation: one trained decoder captures the real data distribution, and the uncertainty set is then chosen by the latent constraint rather than re-derived for each case. This decouples distribution learning from set specification in two ways: the same decoder spans different set specifications without retraining (the box and ball geometries, marginal or joint coverage, at any level $q$), and it embeds unchanged into a different \emph{type} of robust problem, as the facility-location study of Section~\ref{sec:fl-case-study} shows. The advantage is greatest on the non-Gaussian, multimodal, and correlated distributions, where no fixed geometric primitive captures both the data shape and the desired coverage.

\begin{figure}[tbp]
\centering
\includegraphics[width=0.98\textwidth]{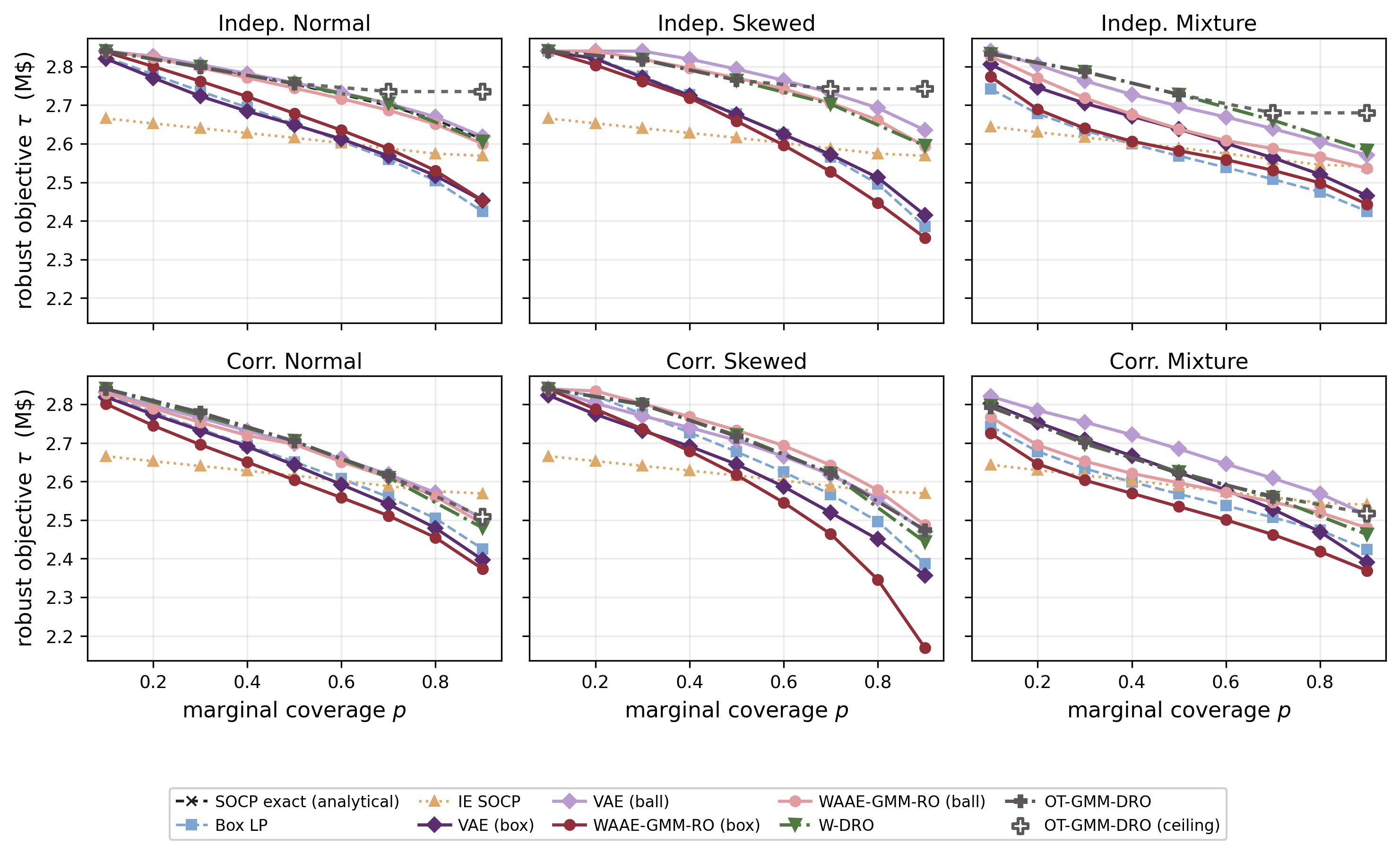}
\caption{PP quantile sweep across the six uncertainty distributions: robust objective $\tau$ (M\$) vs marginal coverage $q \in [0.1, 0.9]$. Row 1: independent variants. Row 2: correlated variants. Each decoder appears in two latent-shape variants (``box'' = $|z_j| \leq \gamma$, ``ball'' = $\|\bm{z}\|_2 \leq \gamma$), with the W-DRO and OT-GMM-DRO baselines overlaid. OT-GMM-DRO markers are hollow where it under-covers.}
\label{fig:cs-sweep}
\end{figure}

\subsubsection{Computational cost}
\label{sec:cs-pp-compute}

Computational cost has two components: a one-off per-model training cost (Table~\ref{tab:training-cost}) and a per-query cutting-plane cost (Table~\ref{tab:cs-cost}).

Table~\ref{tab:training-cost} reports the median wall-clock across the hyperparameter-search trials. WAAE-GMM-RO is the most expensive, since its adversarial critics, GMM guidance, and Point-4 RO-aware stage add per-epoch overhead beyond a plain autoencoder. This cost is paid once per distribution before deployment, so even the $431$\,s figure is amortised across all subsequent queries.

\begin{table}[tbp]
\centering
\footnotesize
\caption{Per-model training cost on the six PP distributions ($d = 6$, $N = 6000$). Median wall-clock training time per distribution on an NVIDIA RTX 2000 Ada GPU.}
\label{tab:training-cost}
\begin{tabular}{lc}
\toprule
\textbf{Model} & \textbf{Training time per distribution (s)} \\
\midrule
VAE          & 18 \\
WAAE         & 100 \\
WAAE-GMM-RO  & 431 \\
NF           & 30 \\
DDIM         & 15 \\
\bottomrule
\end{tabular}
\end{table}

The non-generative baselines, including the two distributionally-robust ones, solve in under $0.5$\,s and are omitted from Table~\ref{tab:cs-cost}. For the two MILP-embeddable decoders ($128$ binary variables), Gurobi solves each worst-case oracle to global optimality, and the cutting-plane loop converges in a few outer iterations. Table~\ref{tab:cs-cost} reports the outer-iteration count, per-iteration time, and total wall-clock at $q = 0.5$ and $q = 0.9$ as the median and $[\text{min}, \text{max}]$ over four retrain seeds. The four seeds yield density-equivalent decoders, yet their wall-clock varies sharply, spanning $60$ to $2167$\,s on the hardest cell (independent skewed at $q = 0.9$). The variation traces to the branch-and-bound geometry of each trained decoder. The bottleneck is the big-M relaxation, so a tighter ReLU encoding~\citep{Anderson2020} is a promising, as-yet-untested route to a uniform speed-up, while the correlated distributions are inexpensive throughout because the PCA rotation aligns the latent ball with the correlation structure. OBBT preprocessing (Section~\ref{sec:omlt-export}) adds ${\sim}30$--$60$\,s, amortised across queries.

\begin{table}[tbp]
\centering
\scriptsize
\caption{End-to-end cutting-plane cost on the $q$-sweep for the two MILP-embeddable decoders across the six distributions: outer-iteration count, per-iteration time, and total wall-clock at $q = 0.5$ and $q = 0.9$ (median~[min--max] over four retrain seeds).}
\label{tab:cs-cost}
\begin{tabular}{llcccccc}
\toprule
& & \multicolumn{3}{c}{$q = 0.5$} & \multicolumn{3}{c}{$q = 0.9$} \\
\cmidrule(lr){3-5} \cmidrule(lr){6-8}
\textbf{Distribution} & \textbf{Method} & iters & s/iter & total (s) & iters & s/iter & total (s) \\
\midrule
\multirow{2}{*}{Indep. Normal}  & VAE         & 4 & 2.7  & 13 [6--22]    & 7  & 11.0 & 69 [40--112]   \\
                                & WAAE-GMM-RO & 4 & 1.3  & 6 [1--405]    & 7  & 14.5 & 98 [5--1083]   \\
\midrule
\multirow{2}{*}{Indep. Skewed}  & VAE         & 4 & 4.2  & 17 [6--72]    & 12 & 11.6 & 107 [35--429]  \\
                                & WAAE-GMM-RO & 4 & 6.1  & 24 [2--81]    & 13 & 49.3 & 876 [60--2167] \\
\midrule
\multirow{2}{*}{Indep. Mixture} & VAE         & 4 & 2.6  & 12 [3--50]    & 12 & 8.5  & 98 [37--161]   \\
                                & WAAE-GMM-RO & 6 & 6.4  & 35 [14--646]  & 5  & 68.8 & 337 [51--1013] \\
\midrule
\multirow{2}{*}{Corr. Normal}   & VAE         & 4 & 2.6  & 10 [3--37]    & 4  & 8.6  & 34 [13--44]    \\
                                & WAAE-GMM-RO & 4 & 2.0  & 7 [2--23]     & 4  & 6.5  & 26 [3--129]    \\
\midrule
\multirow{2}{*}{Corr. Skewed}   & VAE         & 4 & 3.6  & 14 [6--39]    & 4  & 8.7  & 35 [18--53]    \\
                                & WAAE-GMM-RO & 4 & 4.0  & 16 [2--25]    & 4  & 7.1  & 35 [9--367]    \\
\midrule
\multirow{2}{*}{Corr. Mixture}  & VAE         & 4 & 10.0 & 41 [21--63]   & 4  & 23.9 & 94 [37--343]   \\
                                & WAAE-GMM-RO & 4 & 4.6  & 19 [9--32]    & 4  & 11.8 & 47 [14--109]   \\
\bottomrule
\end{tabular}
\end{table}


\subsection{Robust Facility Location with Demand Uncertainty}
\label{sec:fl-case-study}

We apply the generative uncertainty set framework to a multi-period robust facility location problem with uncertain demand, following the formulation of~\citep{Baron2011}. This case study extends the framework in two directions: (i)~integration with combinatorial optimisation, since the master is a MILP with binary location decisions and endogenous capacity sizing, and (ii)~a per-period latent radius schedule that adapts the spatial decoder set to Baron's deterministic temporal-growth recurrence on the uncertainty band.

\subsubsection{Problem formulation}
\label{sec:fl-formulation}

Consider a network of $N$ nodes on a two-dimensional plane, each serving as both a potential facility location and a demand point, over $T$ scheduling periods.
Node~$j$ has uncertain demand $\tilde{\xi}_{jt}$ in period~$t$.

The strategic decisions, made once before demand is observed, are $I_i \in \{0,1\}$ (open a facility at node~$i$) and $Z_{i0} \geq 0$ (installed capacity). The operational decisions, made per period, are the allocation fractions $X_{ijt} \in [0,1]$ and the production levels $Z_{it} \geq 0$. The nominal problem maximises profit:
\begin{align}
\max \quad & \tau \label{eq:fl-obj} \\
\text{s.t.} \quad
& \sum_{t,i,j} (r - d_{ij})\, \xi_{jt}\, X_{ijt} \nonumber \\
& \qquad - \sum_{t,i} c_i\, Z_{it} - \sum_{i} (C_{i0}\, Z_{i0} + K\, I_i) \geq \tau, \label{eq:fl-profit} \\
& \sum_{j} \xi_{jt}\, X_{ijt} \leq Z_{it}, && \forall\, i,\, t, \label{eq:fl-production} \\
& Z_{it} \leq Z_{i0}, && \forall\, i,\, t, \label{eq:fl-cap-link} \\
& Z_{i0} \leq M\, I_i, && \forall\, i, \label{eq:fl-open-link} \\
& \sum_{i} X_{ijt} \leq 1, && \forall\, j,\, t, \label{eq:fl-service} \\
& X_{ijt} \geq 0,\; I_i \in \{0,1\},\; Z_{i0}, Z_{it} \geq 0 && \label{eq:fl-domain}
\end{align}
Parameters follow Baron et al.\ Table~1: $N = 15$, $T = 20$, unit revenue $r = 1.0$, $K = 50\,000$, $C_{i0} = 0.1$, $c_i = 0.1$, $d_{ij} = \|\bm{p}_i - \bm{p}_j\|_2$, $\bar{\xi}_j \sim \mathrm{Uniform}[17\,500,\, 22\,500]$.
The capacity $Z_{i0}$ is a decision variable: the firm chooses capacity before demand is realised.

\subsubsection{Uncertainty model and training data}
\label{sec:fl-uncertainty}

Baron et al.\ model demand as $\tilde{\xi}_{jt} = \bar{\xi}_j (1 + \varepsilon_{jt})$ with a deterministic growth schedule for the uncertainty-band half-width,
\begin{equation}\label{eq:fl-eps-schedule}
\varepsilon_0 = 0, \qquad \varepsilon_t = \gamma + (1-\gamma)\,\varepsilon_{t-1}, \quad \gamma = 0.15
\end{equation}
(giving $\varepsilon_1 = 0.15$, $\varepsilon_{19} \approx 0.95$), and a per-period per-node box $U^B_{jt} = \bar{\xi}_j [1 - \varepsilon_t, 1 + \varepsilon_t]$. Since the allocation $X_{ijt} \geq 0$, the worst case of production constraint~\eqref{eq:fl-production} at $(i,t)$ is attained analytically at $\bar{\xi}_j (1 + \varepsilon_t)$, so the $NT$ production worst-cases reduce to closed-form linear constraints in a single master solve. Baron's ellipsoid counterpart admits an analogous SOC form.

We replace Baron's box spatial geometry with a learned set $\hat{\bm{\xi}}_t = \dec(\bm{z}_t)$ where $\bm{z}_t \in \Zz^{(\gamma_t)}$ is constrained to a ball of radius $\gamma_t$. The decoder output is the demand directly, so no $\varepsilon_t$ mixing with nominal is needed. The per-period latent radius is computed from Baron's schedule via the single mapping $\gamma_t = \Phi^{-1}\!\bigl((1 + \min(\varepsilon_t, \varepsilon_{\mathrm{cap}})) / 2\bigr)$. Here $\varepsilon_t$ plays the role of the marginal coverage parameter in eq.~\eqref{eq:gamma-per-dim}, capped at $\varepsilon_{\mathrm{cap}}$ so that $\gamma_t$ stays within a bounded radius. The $T = 20$ experiment uses $\varepsilon_{\mathrm{cap}} = 0.75$ ($\gamma_{\max} \approx 1.15$). At $T = 5$ no cap binds and $\gamma_0 = 0$, $\gamma_4 \approx 0.64$.

\begin{remark}[Per-period latent ball]\label{rem:per-period-bounds}
The schedule $\{\gamma_t\}_{t=0}^{T-1}$ is computed once from Baron's $\{\varepsilon_t\}$ at oracle construction time and stored in the oracle. Period $t = 0$ has zero radius ($\gamma_0 = 0$): the latent ball reduces to the single point $\bm{z} = \bm{0}$, which decodes to the nominal demand, so there is no worst case to search and the oracle evaluates the constraint at that point directly, without building a MISOCP. OBBT (Section~\ref{sec:omlt-export}) runs once at the widest radius $\gamma_{T-1}$ (${\sim}124$\,s for the FL decoder) so the tightened Big-M bounds are valid for every $(i, t)$ query. Individual queries at smaller $\gamma_t$ solve faster because the ball contracts but the bounds do not loosen.
\end{remark}

The decoder is trained on a multimodal demand distribution that replaces Baron's uniform $\pm \varepsilon_t$ box perturbation. For each scenario and \emph{independently} for each node $j$, a regime is drawn from a three-component categorical distribution and the demand $\xi_j$ is then sampled from a truncated Gaussian conditional on the chosen regime:
\begin{itemize}[nosep]
    \item \emph{Low:} $\mu = 0.7\, \bar{\xi}_j$, $\sigma = 0.05\, \bar{\xi}_j$, weight $= 0.3$.
    \item \emph{Normal:} $\mu = 1.0\, \bar{\xi}_j$, $\sigma = 0.10\, \bar{\xi}_j$, weight $= 0.4$.
    \item \emph{High:} $\mu = 1.3\, \bar{\xi}_j$, $\sigma = 0.15\, \bar{\xi}_j$, weight $= 0.3$.
\end{itemize}
Drawing the regime per node induces a joint distribution with $3^N$ modes (Section~\ref{sec:waae-gmm}). This multimodal spatial structure is what a box or ellipsoid cannot capture, motivating the learned decoder set used here.

The decoder is trained using the (P4) procedure of Section~\ref{sec:waae-ro}, with two FL-specific adaptations. First, feasible decisions are generated by opening a random subset of facilities, deterministically allocating each demand node to its nearest open facility (one-hot $X_{ij}$), setting $Z_{it} = \sum_j \bar{\xi}_j X_{ij}$, and installing capacity $Z_{i0} = \max_t Z_{it} \cdot (1 + s_i)$ with per-facility slack $s_i \sim U[0, 1.5]$. The utilisation of facility $i$ is then $u_i = \max_t Z_{it} / Z_{i0} = 1/(1 + s_i)$, its peak production as a fraction of installed capacity (the FL analogue of the PP budget utilisation). Decisions with $\max_i u_i < u_{\min} = 0.60$ are discarded, keeping the training set near the capacity boundary, where demand can plausibly exceed capacity and the Point-4 violation signal is informative.

Second, although the problem has $N \times T$ production constraints, two properties of our training data reduce the violation to a per-facility quantity: the training scenarios $\bm{\xi}^{(s)} \in \R^N$, each a demand vector over the $N$ nodes indexed by scenario $s$, are period-agnostic patterns (temporal growth enters only at oracle time via $\gamma_t$), and the sampler uses $X_{ijt} = X_{ij}$ and $Z_{i0}$ time-invariant by construction. Under these conditions the per-$(i,t)$ violation is independent of $t$, so we work with the per-facility
\begin{equation}\label{eq:fl-viol-per-i}
g_i(\bm{x}, \bm{\xi}) = \max\!\left(0,\; \frac{\sum_{j=1}^N X_{ij}\, \xi_j - Z_{i0}}{Z_{i0}}\right)
\end{equation}
matching the multi-constraint vector form of Section~\ref{sec:waae-ro} (eq.~\eqref{eq:cc}) with $N$ components and violation data of shape $(K, S, N)$ instead of $(K, S, N, T)$. The (P4) Constraint-consistency loss of Section~\ref{sec:waae-ro} is applied elementwise across the $N$ facilities. Pairs are flagged as infeasibility-inducing when $\max_i g_i > \epsilon_{\mathrm{vio}} = 10^{-4}$. The profit bound~\eqref{eq:fl-profit} is tracked diagnostically but, being effectively always satisfied in our parameter regime, contributes no training signal.

\subsubsection{Algorithm~\ref{alg:ro-cutting-plane} specialisation}
\label{sec:fl-solution}

The FL case specialises Algorithm~\ref{alg:ro-cutting-plane} in three ways. First, the latent radius $\gamma_t$ schedules with the period, so the constraint set $\mathcal{C} = \{(i, t)\}$ has $|\mathcal{C}| = N T$ and each outer iteration queries up to $N T$ oracles rather than one. Second, within each period only facilities with $I^{*}_{i} = 1$ in the current master solution require an oracle (active-facility pruning). Third, each per-$(i, t)$ exact oracle is a MISOCP rather than a pure MILP because the latent constraint is a ball. The decoder oracle for constraint $(i, t)$ searches the latent ball $\Zz^{(\gamma_t)}$ for the demand realisation maximising the overload,
\begin{equation}\label{eq:fl-oracle-prod}
v^*_{it} = \max_{\bm{z}_t \in \Zz^{(\gamma_t)}} \; \sum_{j=1}^N \dec(\bm{z}_t)_j \, X^*_{ijt} - Z^*_{it}
\end{equation}
which is a MISOCP over the composed decoder $h \circ b$ (Section~\ref{sec:omlt-export}) with $d_z = 15$ latent variables, $587$ hidden ReLU neurons, and one convex quadratic ball constraint $\|\bm{z}_t\|_2 \leq \gamma_t$. The feasibility check of Section~\ref{sec:oracle} is applied per $(i, t)$ before the exact MISOCP is solved (Section~\ref{sec:fl-evidence} characterises the screen's prune rate on this geometry). The feasibility cut added to the master at each violated $(i, t)$ is the per-period production cut $\sum_{j} \hat{\xi}_{jt}\, X_{ijt} \leq Z_{it}$, where $\hat{\xi}_{jt} = \dec(\bm{z}^*_t)_j$ is the worst-case demand returned by the oracle eq.~\eqref{eq:fl-oracle-prod}.

\subsubsection{Results and computational cost}
\label{sec:fl-evidence}

\begin{table}[tbp]
\centering
\footnotesize
\caption{Structural comparison of uncertainty models for robust facility location.}
\label{tab:fl-structural}
\begin{tabular}{lcccc}
\toprule
\textbf{Method} & \textbf{Spatial} & \textbf{Temporal} & \textbf{Multimodal} & \textbf{Oracles/iter} \\
\midrule
Box             & Independent     & $\varepsilon_t$ schedule & No  & 0 (analytical)  \\
Ellipsoid       & Joint (linear)  & $\varepsilon_t$ schedule & No  & 0 (analytical)  \\
Decoder (ours)  & Joint (learned) & $\gamma_t(\varepsilon_t)$ & Yes & $\leq NT$ MISOCPs \\
\bottomrule
\end{tabular}
\end{table}

All three methods share the deterministic temporal growth of eq.~\eqref{eq:fl-eps-schedule} but parameterise the spatial geometry differently (Table~\ref{tab:fl-structural}): Box treats nodes independently and Ellipsoid captures linear correlations, while the decoder is the only one capturing nonlinear multimodal spatial structure and the only one without a closed-form worst case, so it alone calls a MISOCP oracle per $(i, t)$. This added spatial expressiveness is what the profit comparison below isolates.

On the full Baron $T = 20$, $N = 15$ instance, the parallel cutting-plane driver is applied to the FL decoder (WAAE-GMM-RO, $d_z = 15$, $K = 12$ GMM components, ball latent constraint, $\varepsilon_{\mathrm{cap}} = 0.75$, $\gamma_{\max} \approx 1.15$). Table~\ref{tab:fl-t20-results} compares it against the nominal LP and Baron's box counterpart. Only WAAE-GMM-RO satisfies all five points (Table~\ref{tab:full-comparison}) and is therefore the only model embeddable here as an exact MISOCP.

\begin{table}[tbp]
\centering
\footnotesize
\caption{Facility location, $T = 20$: WAAE-GMM-RO decoder vs.\ nominal LP vs.\ Baron's box counterpart.}
\label{tab:fl-t20-results}
\begin{tabular}{lrrrr}
\toprule
\textbf{Method} & $\tau$ (profit) & Facilities open & Iterations & Wall-clock \\
\midrule
Nominal LP                   & $4{,}694{,}256$ & 11 & 1 (no oracle) & --- \\
Box (Baron)                  & $705{,}952$     & 4  & 1 (analytical) & --- \\
WAAE-GMM-RO--MISOCP (ours)   & \textbf{$4{,}555{,}060$} & 11 & 6 & 10.4\,h (6-worker parallel) \\
\bottomrule
\end{tabular}
\end{table}

The decoder set recovers $97.0\%$ of the nominal optimum, while Baron's box counterpart reaches only $15.0\%$, a $6.4\times$ profit gap. The cutting-plane loop converges in six iterations, the number of violating $(i, t)$ pairs falling $208 \to 101 \to 52 \to 15 \to 1 \to 0$. The per-iteration MISOCP cost is analysed below. The learned multimodal set therefore protects against the same demand realisations the box guards against while shedding the spatial over-conservatism behind the box's profit loss.

The strategic decisions diverge (Fig.~\ref{fig:fl-spatial-decision}): the decoder opens the same eleven facilities as Nominal and sizes each ${\sim}25\%$ larger, whereas Box consolidates into only four, each ${\sim}4\times$ the nominal-LP's average per-facility capacity (largest is $f_2$ at $166$\,k vs.\ nominal max $43$\,k). The decoder instead spreads its additional capacity across all eleven sites, which is why it pays no comparable profit penalty.

\begin{figure}[tbp]
\centering
\includegraphics[width=0.98\textwidth]{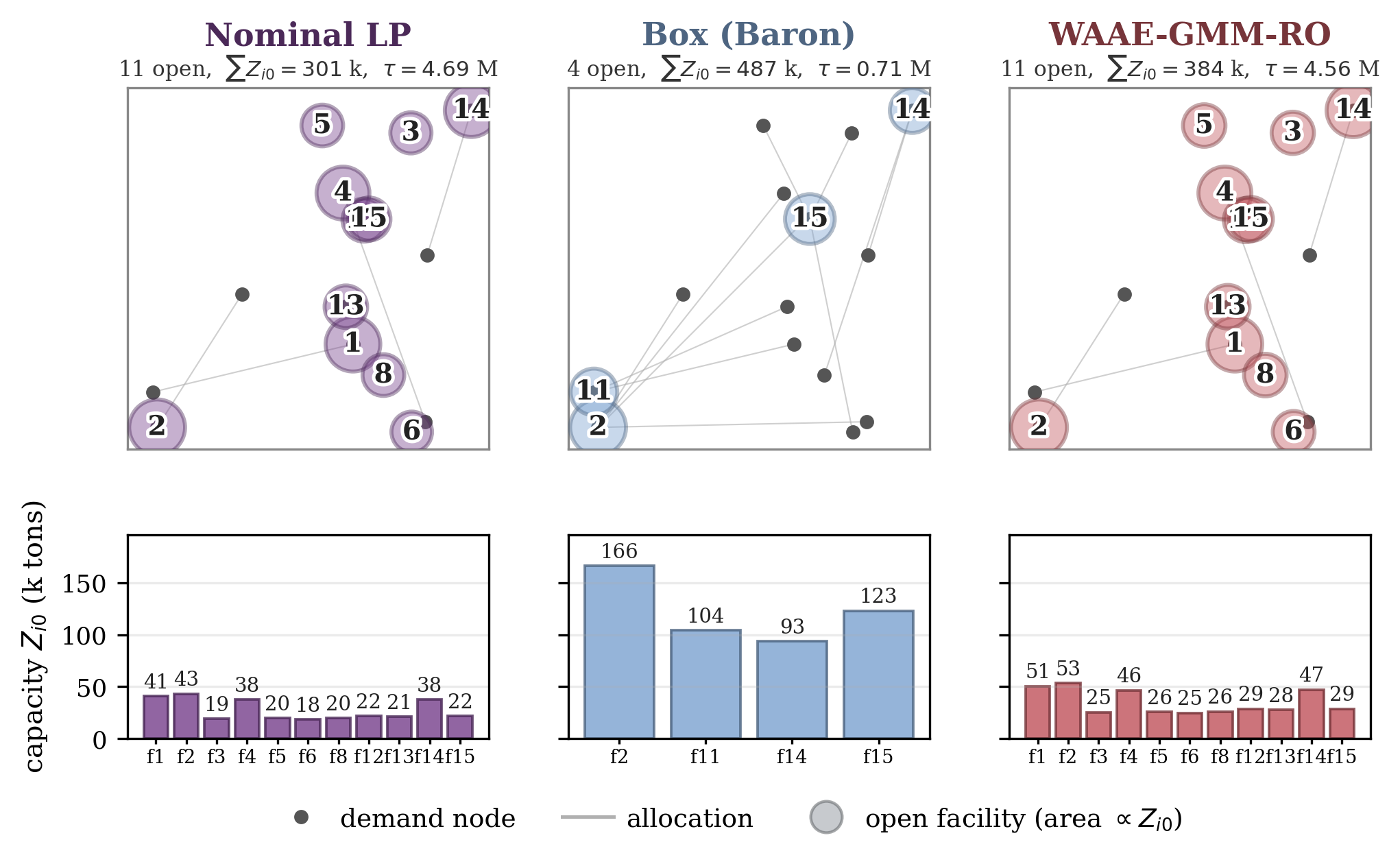}
\caption{Facility-location spatial decisions on the Baron 15-node instance ($T = 20$). Columns left to right: Nominal LP (purple), Box (Baron, blue), and WAAE-GMM-RO (red). Top: 2D node layout with open facilities sized by $\sqrt{Z_{i0}}$ (area $\propto Z_{i0}$), each panel annotated with its open count, total capacity, and profit~$\tau$. Bottom: installed capacity $Z_{i0}$ per open facility on a shared $y$-axis.}
\label{fig:fl-spatial-decision}
\end{figure}

The $T = 20$ instance has $N T = 300$ production constraints in total. Period $t = 0$ is deterministic (Remark~\ref{rem:per-period-bounds}), leaving up to $N(T-1) = 285$ stochastic constraints, and active-facility pruning further reduces this to $11 \times 19 = 209$ MISOCPs dispatched per outer iteration in our run. The per-iteration MISOCP count dominates wall-clock: the convex LP feasibility screen (Section~\ref{sec:oracle}) prunes few queries on the FL geometry, so every outer iteration solves essentially all $209$, even the final one that adds no cut, for ${\sim}1{,}254$ MISOCPs in total ($6 \times 209$). MISOCP solve time scales with the latent radius: the small early-period balls solve in seconds, while the widest radii hit the $300$\,s solver limit, averaging ${\sim}180$\,s overall. Aggregate wall-clock is reported in Table~\ref{tab:fl-t20-results}, with one-off OBBT preprocessing of ${\sim}124$\,s amortised across all queries. The $T = 5$ instance is an order of magnitude cheaper ($N(T-1) = 60$ stochastic constraints, $\gamma_t \in [0, 0.64]$, ${\sim}11$\,min per outer iteration sequentially). Tighter LP screening or MISOCP warm-start strategies are open directions for reducing wall-clock without sacrificing exactness. At the months-to-years strategic-planning horizon, however, the current solution times are well within practical budgets.

\section{Conclusions}
\label{sec:conclusion}

This paper develops Generative Robust Optimisation (GRO), a framework for robust optimisation based on encoder--decoder neural networks. We propose a five-point evaluation framework (reconstruction fidelity, distribution matching, latent regularity, robust relevance, computational tractability) that identifies the jointly necessary conditions any generative model must satisfy to serve as a reliable uncertainty set, and we instantiate an architecture, WAAE-GMM-RO, that satisfies all five points simultaneously through a PCA-preprocessed Wasserstein adversarial autoencoder, a Gaussian-mixture-guided latent regulariser, a constraint-consistency Point-4 loss, and a single-pass ReLU decoder. The trained decoder is integrated into an exact mixed-integer worst-case oracle via OMLT with optimality-based bound tightening, and embedded in a parallel GRO cutting-plane algorithm. Numerical experiments on a six-period production planning problem and a multi-period robust facility location problem demonstrate that the framework reproduces analytical SOCP ground truth on Gaussian data within $\pm 0.5\%$, yields more data-adaptive and less conservative uncertainty sets than the fixed geometric counterparts on the non-Gaussian distributions, where no analytical reference is available to calibrate conservatism exactly, and recovers $97\%$ of the nominal optimum in a $T = 20$ facility location instance where Baron's box counterpart reaches only $15\%$.

Beyond the specific results above, the framework offers two structural advantages. First, the decoder is shape-adaptive: a single ReLU MLP trained on data captures heavy-tailed, skewed, multimodal, and correlated structure that no fixed geometric primitive (box, ellipsoid, polyhedron) can express in one shape, removing the need for problem-specific set engineering. Second, the calibration procedure is uniform: each uncertainty set is the image under the decoder of a calibrated latent set $\Zz$ (a ball or box) in $\N(\bm{0}, \bm{I})$, so a single coverage level controls conservatism across all six production-planning distributions and the facility-location instance, replacing the per-problem derivations that fixed geometric counterparts require. Because the decoder is a learned distribution-to-distribution mapping, the trained model also serves as a portable computational primitive: exported as a single ONNX block, it is reusable without retraining across stochastic programming (by direct $\N(\bm{0}, \bm{I})$ sampling), scenario generation, and distributionally robust counterparts whose ambiguity geometry inherits the decoder's shape rather than being prescribed in data space.

Future work includes more efficient and expressive distribution-mapping architectures that preserve (P5) tractability, faster worst-case oracles through tighter feasibility screens and MISOCP warm-starts, and extensions to higher-dimensional uncertainty and to multi-stage robust optimisation with recourse.

\section*{Data Availability}
The data underlying this article will be shared on reasonable request to the
corresponding author.

\section*{Acknowledgments}
Financial support from EPSRC grant EP/W003317/1 (ADOPT) and UCL's Research
Excellence Scholarship is gratefully acknowledged.

\appendix

\section{Proofs}
\label{ec:proofs}

\subsection{Proof of Proposition~\ref{prop:necessity}}
\label{ec:proof-necessity}
We prove the contrapositive in each case, under
Assumptions~\ref{ass:canonical}--\ref{ass:pool}.

\paragraph{Necessity of (P1) for (I).}
By Assumptions~\ref{ass:canonical} and~\ref{ass:marginal}, coverage is
certified through the canonical encoder path and required on each marginal
$j$: a $p$-fraction of $P^*$ must be both captured by $\Zz$ through $\enc$
and returned by $\dec$ to within the reconstruction tolerance $\tau$. Write
$R_j = \{\bm{\xi} : |\xi_j - \dec(\enc(\bm{\xi}))_j| \le \tau\}$ for the
within-tolerance return set on marginal $j$. The return leg of~(I) is
$\Pr_{P^*}(R_j) \ge p$ for every $j$. If (P1) fails, so that on some
marginal $j$ the reconstruction keeps fewer than a $p$-fraction within
tolerance, then $\Pr_{P^*}(R_j) < p$ and~(I) fails on that marginal.
Faithful reconstruction (P1) is therefore necessary for~(I).

\paragraph{Necessity of (P2) for (I).}
Requirement~(I) is coverage of the true $P^*$, whereas the latent set
$\Zz$ is calibrated against the model's own generated law $\dec\#\N$, the
distribution the calibration of Section~\ref{sec:calibration} actually
sees. Treating the decoded set $\dec(\Zz)$ as covering a $p$-fraction of
$P^*$ is the inheritance premise of that calibration. (P2) carries this
certified coverage from $\dec\#\N$ to $P^*$: any fixed region is
assigned masses by $P^*$ and by $\dec\#\N$ that differ by at most
$\mathrm{TV}(P^*, \dec\#\N)$, so the level-$p$ coverage the calibration
certifies under $\dec\#\N$ transfers to $P^*$ only up to
$\mathrm{TV}(P^*, \dec\#\N)$. A (P2) failure
$\mathrm{TV}(P^*, \dec\#\N) = \delta_2 > 0$ degrades the guaranteed
$P^*$-coverage by up to $\delta_2$ and removes the level-$p$ certificate
the premise needs. The bound is one-sided: (P2) failure removes the
certificate rather than forcing realised coverage below $p$.

\paragraph{Necessity of (P3) for (I).}
(P3) failure means $q(\bm{z}) \neq \N(\bm{0}, \bm{I})$ on some marginal
$j$, so $d_{\mathrm{KS}}(q_j, \N(0,1)) > 0$. By
Theorem~\ref{thm:cal-gap}, $|C_j(p) - p| \le 2\, d_{\mathrm{KS}}(q_j, \N(0,1))$.
In the variance-inflation regime $\sigma_j > 1$, the shift-and-scale
corollary (Corollary~\ref{cor:shift-scale}) gives coverage strictly below
the level $p$, $C_j(p) = 2\Phi(\gamma_{\text{marg}}/\sigma_j) - 1 < p$, so
$\Zz$ captures less than a $p$-fraction of the encoded data on marginal
$j$ and (I) fails. Two other deviation classes do not break~(I): a
deflation $\sigma_j < 1$ gives coverage above $p$, and by the converse of
Theorem~\ref{thm:cal-gap} even-symmetric deviations can preserve
$C_j(p) = p$. (P3) is therefore necessary for~(I) against the
variance-inflation class.

\paragraph{Necessity of (P4) for (III).}
Define (P4) failure by the signed mean gap
$\bar v_{\text{gen}} - \bar v_{\text{ref}}
= K_{\text{pool}}^{-1} \sum_k (v_{\text{gen}}(\bm{x}_k) - v_{\text{ref}}(\bm{x}_k)) < 0$.
Then at least one pool decision $\bm{x}_k$ has
$v_{\text{gen}}(\bm{x}_k) < v_{\text{ref}}(\bm{x}_k)$, which negates
(III). By Assumption~\ref{ass:pool}, $v_{\text{ref}}(\bm{x}_k) > 0$, so
this is a genuine underestimation: the oracle can return
$v_{\text{gen}}(\bm{x}_k) \le 0$ and certify $\bm{x}_k$ robust-feasible
while $v_{\text{ref}}(\bm{x}_k) > 0$ exposes a training scenario
$\bm{\xi}^{(n)} \sim P^*$ with $g(\bm{x}_k, \bm{\xi}^{(n)}) > 0$.

\paragraph{Necessity of (P5) for (II).}
Sub-case A (smooth activations): by Lemma~\ref{lem:activation} no exact
MILP encoding exists, so the oracle falls outside class~(II). Sub-case B
(multi-step depth): for $\dec = f_T \circ \cdots \circ f_1$ with
$T \gg 1$, where layer $f_t$ admits a tight Big-M ReLU encoding with
$b_t$ binaries, the standard compositional encoding uses $\sum_t b_t$
binaries, an order of magnitude beyond the single-pass case for moderate
$T$ and outside the per-query regime reported in
Table~\ref{tab:cs-cost}. \hfill$\square$

\subsection{Proof of Theorem~\ref{thm:cal-gap}}
\label{ec:proof-calgap}
Fix $j$ and $p$. Let $F_j(t) = \Pr_{z_j \sim q_j}(z_j \le t)$, $\Phi$ the
standard normal CDF, and $\gamma_{\text{marg}} = \Phi^{-1}((1+p)/2)$.

\emph{Identity under $q_j = \N(0,1)$.}
$C_j(p) = \Phi(\gamma_{\text{marg}}) - \Phi(-\gamma_{\text{marg}})
= 2\Phi(\gamma_{\text{marg}}) - 1 = p$, using $\Phi(-x) = 1 - \Phi(x)$
and the definition of $\gamma_{\text{marg}}$.

\emph{KS bound.}
$C_j(p) - p = [F_j(\gamma_{\text{marg}}) - \Phi(\gamma_{\text{marg}})]
- [F_j(-\gamma_{\text{marg}}) - \Phi(-\gamma_{\text{marg}})]$. Bounding
each of the two bracketed terms by
$\sup_t |F_j(t) - \Phi(t)| = d_{\mathrm{KS}}(q_j, \N(0,1))$ and adding
gives eq.~\eqref{eq:cal-gap}.

\emph{Converse fails.}
Take $F_j(t) = \Phi(t) + \epsilon\, t^2 e^{-t^2}$. Since $t^2 e^{-t^2}$ vanishes at $\pm\infty$, $F_j(-\infty) = 0$ and
$F_j(+\infty) = 1$. Equivalently, the density perturbation
$\epsilon\, \tfrac{d}{dt}(t^2 e^{-t^2})$ integrates to zero. The perturbed
density $\phi(t) + \epsilon\, \tfrac{d}{dt}(t^2 e^{-t^2})$ is non-negative
for $|\epsilon|$ below a positive threshold, because the
perturbation-to-$\phi$ ratio is bounded on $\R$. Hence $F_j$ is a
monotone, proper CDF.
The perturbation is even, so
$F_j(\gamma_{\text{marg}}) - F_j(-\gamma_{\text{marg}}) =
\Phi(\gamma_{\text{marg}}) - \Phi(-\gamma_{\text{marg}})$, giving
$C_j(p) = p$ for every $p$ while $q_j$ is not Gaussian. \hfill$\square$

\begin{corollary}[Shift-and-scale deviation]
\label{cor:shift-scale}
If $q_j = \N(\mu_j, \sigma_j^2)$, then
$C_j(p) = \Phi\!\left((\gamma_{\text{marg}} - \mu_j)/\sigma_j\right)
- \Phi\!\left((-\gamma_{\text{marg}} - \mu_j)/\sigma_j\right)$. Pure variance
deviation $\mu_j = 0,\ \sigma_j > 1$ gives
$C_j(p) = 2\Phi(\gamma_{\text{marg}}/\sigma_j) - 1 < p$ (coverage below $p$),
whereas $\sigma_j < 1$ gives coverage above $p$. Pure shift ($\sigma_j = 1$) gives
$|C_j(p) - p|$ increasing in $|\mu_j|$.
\end{corollary}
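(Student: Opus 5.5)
The corollary is a direct computation, so I will prove its three claims in order. First I substitute the Gaussian law into the definition of realised coverage. If $q_j = \N(\mu_j,\sigma_j^2)$ then $F_j(t) = \Phi((t-\mu_j)/\sigma_j)$. Inserting this into $C_j(p) = F_j(\gamma_{\text{marg}}) - F_j(-\gamma_{\text{marg}})$ from Theorem~\ref{thm:cal-gap} gives the displayed formula immediately.

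For the pure variance deviation I set $\mu_j = 0$. The symmetry $\Phi(-x) = 1-\Phi(x)$ then reduces the formula to $C_j(p) = 2\Phi(\gamma_{\text{marg}}/\sigma_j) - 1$. The comparison with $p$ uses the identity step of the proof of Theorem~\ref{thm:cal-gap}: $p = 2\Phi(\gamma_{\text{marg}}) - 1$. Since $p \in (0,1)$, we have $\gamma_{\text{marg}} = \Phi^{-1}((1+p)/2) > 0$. Because $\Phi$ is strictly increasing, $\sigma_j > 1$ gives $\gamma_{\text{marg}}/\sigma_j < \gamma_{\text{marg}}$ and hence $C_j(p) < p$. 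By the same monotonicity, $\sigma_j < 1$ gives $C_j(p) > p$.

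For the pure shift, $\sigma_j = 1$, I define $h(\mu) = \Phi(\gamma - \mu) - \Phi(-\gamma - \mu)$ with $\gamma = \gamma_{\text{marg}}$. This function is even in $\mu$, since $h(-\mu) = \Phi(\gamma+\mu) - \Phi(-\gamma+\mu) = h(\mu)$ by the reflection identity, and $h(0) = p$. Its derivative is $h'(\mu) = \phi(\gamma+\mu) - \phi(\gamma-\mu)$. For $\mu > 0$ and $\gamma > 0$ we have $|\gamma+\mu| > |\gamma-\mu|$, so $h'(\mu) < 0$ because $\phi$ is strictly decreasing in $|t|$. Thus $h$ is strictly decreasing on $[0,\infty)$ and $h(\mu) \le p$ there. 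It follows that $|C_j(p) - p| = p - h(|\mu_j|)$ is strictly increasing in $|\mu_j|$.

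The only point needing care is the inequality $|\gamma+\mu| > |\gamma-\mu|$, which holds exactly when $\gamma\mu > 0$, and this relies on $\gamma_{\text{marg}} > 0$, i.e.\ on $p \in (0,1)$. I expect no real obstacle beyond this sign bookkeeping.
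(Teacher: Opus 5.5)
Your proposal is correct and follows the paper's proof step for step. You substitute $F_j(t)=\Phi((t-\mu_j)/\sigma_j)$, use strict monotonicity of $\Phi$ with $\gamma_{\text{marg}}>0$ for the variance case, and use the symmetry and unimodality of the normal density for the shift case; your computation $h'(\mu)=\phi(\gamma+\mu)-\phi(\gamma-\mu)<0$ for $\mu>0$ simply makes explicit the unimodality argument that the paper states in words.
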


\noindent\textit{Proof.}
For $q_j = \N(\mu_j, \sigma_j^2)$ the CDF is
$F_j(t) = \Phi((t - \mu_j)/\sigma_j)$. Substituting into
$C_j(p) = F_j(\gamma_{\text{marg}}) - F_j(-\gamma_{\text{marg}})$ gives the
stated expression. With $\mu_j = 0$ it reduces to
$2\Phi(\gamma_{\text{marg}}/\sigma_j) - 1$. Since $\Phi$ is strictly
increasing and $\gamma_{\text{marg}} > 0$, this is below
$2\Phi(\gamma_{\text{marg}}) - 1 = p$ when $\sigma_j > 1$ and above it when
$\sigma_j < 1$. For $\sigma_j = 1$, by the symmetry and unimodality of the normal density
the captured mass $C_j(p)$ over the fixed interval $[-\gamma_{\text{marg}},
\gamma_{\text{marg}}]$ is maximised at $\mu_j = 0$ and decreases as
$|\mu_j|$ grows, so $|C_j(p) - p|$ increases in $|\mu_j|$. \hfill$\square$

\subsection{Proof of Lemma~\ref{lem:activation}}
\label{ec:proof-activation}
\emph{Step 1.}
A mixed-integer linear formulation of
$\Gamma_f = \{(\bm{\zeta}, \bm{\eta}) : \bm{\eta} = f(\bm{\zeta})\}$
represents $\Gamma_f$ as the projection onto $(\bm{\zeta}, \bm{\eta})$ of
the feasible region of a finite linear system with continuous
auxiliaries $\bm{u} \in \R^q$ and binaries $\bm{a} \in \{0,1\}^b$
\citep[Definition~1]{Anderson2020}.

\emph{Step 2.}
For each fixed $\bm{a} = \bm{a}^{(\ell)}$ the system reduces to a
polyhedron $P_\ell$ in $(\bm{\zeta}, \bm{\eta}, \bm{u})$. Projecting out
the continuous auxiliaries $\bm{u}$ while retaining
$(\bm{\zeta}, \bm{\eta})$ gives a polyhedron $Q_\ell$,
and $\Gamma_f = \bigcup_\ell Q_\ell$. On the common refinement of the
$\bm{\zeta}$-images $\mathrm{proj}_{\bm{\zeta}}\, Q_\ell$ by their
active-constraint sets (finitely many full-dimensional cells) the active
set is constant and the binding equalities express $\bm{\eta}$ as an
affine function of $\bm{\zeta}$, single-valued since $f$ is a function.

\emph{Step 3.}
The refined cells are finite (at most $2^b$ binary assignments, each
refined into finitely many active sets) and cover $\R^n$ since $f$ is
total, so $f$ is affine cell-by-cell on a finite polyhedral subdivision
of $\R^n$.

\emph{Step 4.}
A real-analytic $g : \R^n \to \R^m$ that agrees with an affine map on a
non-empty open subset $O \subseteq \R^n$ agrees with it on all of $\R^n$:
the difference $\psi$ is real-analytic and vanishes on $O$, so $\psi$ and
all its partial derivatives vanish throughout $O$. The set on which
$\psi$ and all its partial derivatives vanish is closed by continuity and
open by analyticity (there $\psi$ equals its identically-zero Taylor
series locally), hence all of $\R^n$ by connectedness, and
$\psi \equiv 0$.

\emph{Step 5.}
For the activations of interest ($\tanh$, $\sigma$, $\exp$, $\sin$,
$\cos$) the scalar map $\phi$ is real-analytic on all of $\R$, so $f$ is
real-analytic on $\R^n$. Were $f$ affine on a full-dimensional cell of
Step~3 (a non-empty open set), Step~4 would force $f$ to be affine on all
of $\R^n$, contradicting that the analytic activation is a non-degenerate
component (it influences $f$ on a set of positive Lebesgue measure, so
$f$ is not globally affine). Hence $f$ is non-affine on every
full-dimensional cell on which the component is active. By Step~3, $f$ is
then not piecewise-affine on any finite polyhedral subdivision, so no
exact MILP encoding exists and an exact reformulation requires nonlinear
(MINLP) constraints. Piecewise-affine activations such as ReLU instead
admit exact Big-M encodings~\citep[Eq.~6]{Tjeng2019}. \hfill$\square$

\subsection{Proof of Corollary~\ref{cor:gmm-p3}}
\label{ec:proof-gmm}
Fix $j$. By hypotheses (i) and (ii),
$q(z_j) = \sum_{k=1}^K \pi_k\, \N(\mu_{k,j}^z, v_{k,j}^z)$. Linearity of
the integral over a finite sum gives, for any $m \ge 0$,
$\E_{q(z_j)}[z_j^m] = \sum_k \pi_k\, \E_{\N(\mu_{k,j}^z, v_{k,j}^z)}[z_j^m]$.
For $Z \sim \N(\mu, v)$, $\E[Z] = \mu$, $\E[Z^2] = v + \mu^2$,
$\E[Z^3] = \mu^3 + 3\mu v$, $\E[Z^4] = \mu^4 + 6\mu^2 v + 3 v^2$.
Substituting yields the four aggregated moments $M_{1,j}, \ldots, M_{4,j}$
of eq.~\eqref{eq:gmm-comp-loss}. Since
$\loss_{\text{comp}} = \frac{1}{n_z} \sum_{j=1}^{n_z} (M_{1,j}^2 + (M_{2,j} - 1)^2
+ M_{3,j}^2 + (M_{4,j} - 3)^2)$ is a sum of non-negative squares,
$\loss_{\text{comp}} = 0$ forces $M_{1,j} = 0$, $M_{2,j} = 1$,
$M_{3,j} = 0$, $M_{4,j} = 3$, the first four moments of $\N(0,1)$.
\hfill$\square$

\bibliographystyle{plainnat}
\bibliography{references}

\begin{thebibliography}{46}
\providecommand{\natexlab}[1]{#1}
\providecommand{\url}[1]{\texttt{#1}}
\expandafter\ifx\csname urlstyle\endcsname\relax
  \providecommand{\doi}[1]{doi: #1}\else
  \providecommand{\doi}{doi: \begingroup \urlstyle{rm}\Url}\fi

\bibitem[Anderson et~al.(2020)Anderson, Huchette, Ma, Tjandraatmadja, and
  Vielma]{Anderson2020}
Ross Anderson, Joey Huchette, Will Ma, Christian Tjandraatmadja, and Juan~Pablo
  Vielma.
\newblock Strong mixed-integer programming formulations for trained neural
  networks.
\newblock \emph{Mathematical Programming}, 183\penalty0 (1):\penalty0 3--39,
  2020.
\newblock \doi{10.1007/s10107-020-01474-5}.

\bibitem[Andrianesis et~al.(2024)Andrianesis, Bertsimas, Koukouvinos, and
  Koulouras]{Andrianesis2024}
Panagiotis Andrianesis, Dimitris Bertsimas, Thodoris Koukouvinos, and
  Angelos~Georgios Koulouras.
\newblock Ensembling wind forecasting models to construct data-driven
  uncertainty sets in robust optimization.
\newblock In \emph{IEEE Power \& Energy Society General Meeting (PESGM)}, 2024.

\bibitem[Angelopoulos and Bates(2023)]{Angelopoulos2023}
Anastasios~N. Angelopoulos and Stephen Bates.
\newblock Conformal prediction: A gentle introduction.
\newblock \emph{Foundations and Trends in Machine Learning}, 16\penalty0
  (4):\penalty0 494--591, 2023.
\newblock \doi{10.1561/2200000101}.

\bibitem[Baron et~al.(2011)Baron, Milner, and Naseraldin]{Baron2011}
Opher Baron, Joseph Milner, and Hussein Naseraldin.
\newblock Facility location: A robust optimization approach.
\newblock \emph{Production and Operations Management}, 20\penalty0
  (5):\penalty0 772--785, 2011.
\newblock \doi{10.1111/j.1937-5956.2010.01194.x}.

\bibitem[Ben-Tal and Nemirovski(1998)]{BenTalNemirovski1998}
Aharon Ben-Tal and Arkadi Nemirovski.
\newblock Robust convex optimization.
\newblock \emph{Mathematics of Operations Research}, 23\penalty0 (4):\penalty0
  769--805, 1998.

\bibitem[Ben-Tal and Nemirovski(1999)]{BenTalNemirovski1999}
Aharon Ben-Tal and Arkadi Nemirovski.
\newblock Robust solutions of uncertain linear programs.
\newblock \emph{Operations Research Letters}, 25\penalty0 (1):\penalty0 1--13,
  1999.
\newblock \doi{10.1016/S0167-6377(99)00016-4}.

\bibitem[Ben-Tal et~al.(2009)Ben-Tal, El~Ghaoui, and Nemirovski]{BenTal2009}
Aharon Ben-Tal, Laurent El~Ghaoui, and Arkadi Nemirovski.
\newblock \emph{Robust Optimization}.
\newblock Princeton University Press, 2009.

\bibitem[Bertsimas and Sim(2004)]{BertsimasSim2004}
Dimitris Bertsimas and Melvyn Sim.
\newblock The price of robustness.
\newblock \emph{Operations Research}, 52\penalty0 (1):\penalty0 35--53, 2004.

\bibitem[Bertsimas et~al.(2011)Bertsimas, Brown, and Caramanis]{Bertsimas2011}
Dimitris Bertsimas, David~B. Brown, and Constantine Caramanis.
\newblock Theory and applications of robust optimization.
\newblock \emph{SIAM Review}, 53\penalty0 (3):\penalty0 464--501, 2011.

\bibitem[Bertsimas et~al.(2018)Bertsimas, Gupta, and Kallus]{Bertsimas2018}
Dimitris Bertsimas, Vishal Gupta, and Nathan Kallus.
\newblock Data-driven robust optimization.
\newblock \emph{Mathematical Programming}, 167:\penalty0 235--292, 2018.

\bibitem[Bowman et~al.(2016)Bowman, Vilnis, Vinyals, Dai, Jozefowicz, and
  Bengio]{Bowman2016}
Samuel~R. Bowman, Luke Vilnis, Oriol Vinyals, Andrew~M. Dai, Rafal Jozefowicz,
  and Samy Bengio.
\newblock Generating sentences from a continuous space.
\newblock In \emph{Proceedings of the 20th SIGNLL Conference on Computational
  Natural Language Learning (CoNLL)}, pages 10--21, 2016.

\bibitem[Brenner et~al.(2025)Brenner, Khorramfar, Sun, and Amin]{Brenner2025}
Aron Brenner, Rahman Khorramfar, Jennifer Sun, and Saurabh Amin.
\newblock A deep generative learning approach for two-stage adaptive robust
  optimization.
\newblock In \emph{International Conference on Learning Representations
  (ICLR)}, 2025.

\bibitem[Ceccon et~al.(2022)Ceccon, Jalving, Haddad, Thebelt, Tsay, Laird, and
  Misener]{Ceccon2022}
Francesco Ceccon, Jordan Jalving, Joshua Haddad, Alexander Thebelt, Calvin
  Tsay, Carl~D. Laird, and Ruth Misener.
\newblock {OMLT}: {O}ptimization \& {M}achine {L}earning {T}oolkit.
\newblock \emph{Journal of Machine Learning Research}, 23\penalty0
  (349):\penalty0 1--8, 2022.

\bibitem[Chenreddy et~al.(2022)Chenreddy, Bandi, and Delage]{Chenreddy2022}
Abhilash~Reddy Chenreddy, Nymisha Bandi, and Erick Delage.
\newblock Data-driven conditional robust optimization.
\newblock In \emph{Advances in Neural Information Processing Systems
  (NeurIPS)}, 2022.

\bibitem[Dempster et~al.(1977)Dempster, Laird, and Rubin]{Dempster1977EM}
Arthur~P. Dempster, Nan~M. Laird, and Donald~B. Rubin.
\newblock Maximum likelihood from incomplete data via the {EM} algorithm.
\newblock \emph{Journal of the Royal Statistical Society: Series B},
  39\penalty0 (1):\penalty0 1--38, 1977.

\bibitem[Dilokthanakul et~al.(2016)Dilokthanakul, Mediano, Garnelo, Lee,
  Salimbeni, Arulkumaran, and Shanahan]{Dilokthanakul2016GMVAE}
Nat Dilokthanakul, Pedro A.~M. Mediano, Marta Garnelo, Matthew C.~H. Lee, Hugh
  Salimbeni, Kai Arulkumaran, and Murray Shanahan.
\newblock Deep unsupervised clustering with {G}aussian mixture variational
  autoencoders.
\newblock \emph{Preprint, arXiv:1611.02648}, 2016.
\newblock URL \url{https://arxiv.org/abs/1611.02648}.

\bibitem[Fairbrother et~al.(2022)Fairbrother, Turner, and
  Wallace]{Fairbrother2022}
Jamie Fairbrother, Amanda Turner, and Stein~W. Wallace.
\newblock Problem-driven scenario generation: an analytical approach for
  stochastic programs with tail risk measure.
\newblock \emph{Mathematical Programming}, 191\penalty0 (1):\penalty0 141--182,
  2022.
\newblock \doi{10.1007/s10107-019-01451-7}.

\bibitem[Gao et~al.(2024)Gao, Chen, and Kleywegt]{Gao2024}
Rui Gao, Xi~Chen, and Anton~J. Kleywegt.
\newblock Wasserstein distributionally robust optimization and variation
  regularization.
\newblock \emph{Operations Research}, 72\penalty0 (3):\penalty0 1177--1191,
  2024.
\newblock \doi{10.1287/opre.2022.2383}.

\bibitem[Goerigk and Kurtz(2023)]{Goerigk2023}
Marc Goerigk and Jannis Kurtz.
\newblock Data-driven robust optimization using deep neural networks.
\newblock \emph{Computers \& Operations Research}, 151:\penalty0 106087, 2023.

\bibitem[Goodfellow et~al.(2014)Goodfellow, Pouget-Abadie, Mirza, Xu,
  Warde-Farley, Ozair, Courville, and Bengio]{Goodfellow2014}
Ian~J. Goodfellow, Jean Pouget-Abadie, Mehdi Mirza, Bing Xu, David
  Warde-Farley, Sherjil Ozair, Aaron Courville, and Yoshua Bengio.
\newblock Generative adversarial nets.
\newblock In \emph{Advances in Neural Information Processing Systems
  (NeurIPS)}, 2014.

\bibitem[Gulrajani et~al.(2017)Gulrajani, Ahmed, Arjovsky, Dumoulin, and
  Courville]{Gulrajani2017}
Ishaan Gulrajani, Faruk Ahmed, Martin Arjovsky, Vincent Dumoulin, and Aaron
  Courville.
\newblock Improved training of {W}asserstein {GAN}s.
\newblock In \emph{Advances in Neural Information Processing Systems
  (NeurIPS)}, 2017.

\bibitem[{Gurobi Optimization, LLC}(2024)]{Gurobi}
{Gurobi Optimization, LLC}.
\newblock \emph{{Gurobi Optimizer Reference Manual}}, 2024.
\newblock URL \url{https://www.gurobi.com}.

\bibitem[Ho et~al.(2020)Ho, Jain, and Abbeel]{Ho2020}
Jonathan Ho, Ajay Jain, and Pieter Abbeel.
\newblock Denoising diffusion probabilistic models.
\newblock In \emph{Advances in Neural Information Processing Systems
  (NeurIPS)}, 2020.

\bibitem[Hong et~al.(2021)Hong, Huang, and Lam]{Hong2021}
L.~Jeff Hong, Zhiyuan Huang, and Henry Lam.
\newblock Learning-based robust optimization: {P}rocedures and statistical
  guarantees.
\newblock \emph{Management Science}, 67\penalty0 (6):\penalty0 3447--3467,
  2021.
\newblock \doi{10.1287/mnsc.2020.3640}.

\bibitem[Jiang et~al.(2017)Jiang, Zheng, Tan, Tang, and Zhou]{Jiang2017VaDE}
Zhuxi Jiang, Yin Zheng, Huachun Tan, Bangsheng Tang, and Hanning Zhou.
\newblock Variational deep embedding: An unsupervised and generative approach
  to clustering.
\newblock In \emph{Proceedings of the Twenty-Sixth International Joint
  Conference on Artificial Intelligence ({IJCAI}-17)}, pages 1965--1972, 2017.
\newblock \doi{10.24963/ijcai.2017/273}.

\bibitem[Kammammettu et~al.(2024)Kammammettu, Yang, and Li]{Kammammettu2024}
Sanjula Kammammettu, Shu-Bo Yang, and Zukui Li.
\newblock Distributionally robust optimization using optimal transport for
  {G}aussian mixture models.
\newblock \emph{Optimization and Engineering}, 25\penalty0 (3):\penalty0
  1571--1596, 2024.
\newblock \doi{10.1007/s11081-023-09856-2}.

\bibitem[Kingma and Welling(2014)]{KingmaWelling2014}
Diederik~P. Kingma and Max Welling.
\newblock Auto-encoding variational {B}ayes.
\newblock In \emph{International Conference on Learning Representations
  (ICLR)}, 2014.

\bibitem[Kuhn et~al.(2019)Kuhn, Esfahani, Nguyen, and
  Shafieezadeh-Abadeh]{Kuhn2019}
Daniel Kuhn, Peyman~Mohajerin Esfahani, Viet~Anh Nguyen, and Soroosh
  Shafieezadeh-Abadeh.
\newblock Wasserstein distributionally robust optimization: {T}heory and
  applications in machine learning.
\newblock In \emph{INFORMS Tutorials in Operations Research}, pages 130--166.
  INFORMS, 2019.

\bibitem[Kuhn et~al.(2025)Kuhn, Shafiee, and
  Wiesemann]{KuhnShafieeWiesemann2025}
Daniel Kuhn, Soroosh Shafiee, and Wolfram Wiesemann.
\newblock Distributionally robust optimization.
\newblock \emph{Acta Numerica}, 34:\penalty0 579--804, 2025.
\newblock \doi{10.1017/S0962492924000084}.

\bibitem[Li et~al.(2012)Li, Tang, and Floudas]{Li2012}
Zukui Li, Qiuhua Tang, and Christodoulos~A. Floudas.
\newblock A comparative theoretical and computational study on robust
  counterpart optimization: {II}. {P}robabilistic guarantees on constraint
  satisfaction.
\newblock \emph{Industrial \& Engineering Chemistry Research}, 51\penalty0
  (19):\penalty0 6769--6788, 2012.
\newblock \doi{10.1021/ie201651s}.

\bibitem[Loger et~al.(2024)Loger, Dolgui, Lehu{\'e}d{\'e}, and
  Massonnet]{Loger2024}
Beno{\^i}t Loger, Alexandre Dolgui, Fabien Lehu{\'e}d{\'e}, and Guillaume
  Massonnet.
\newblock Approximate kernel learning uncertainty set for robust combinatorial
  optimization.
\newblock \emph{INFORMS Journal on Computing}, 36\penalty0 (3):\penalty0
  900--917, 2024.
\newblock \doi{10.1287/ijoc.2022.0330}.

\bibitem[Lu et~al.(2025)Lu, Ding, Wu, and Yuan]{Lu2025GMM}
Weiguo Lu, Deng Ding, Fengyan Wu, and Gangnan Yuan.
\newblock An efficient {G}aussian mixture model and its application to neural
  networks.
\newblock \emph{Knowledge-Based Systems}, 310:\penalty0 112942, 2025.
\newblock \doi{10.1016/j.knosys.2024.112942}.

\bibitem[Lu and Lu(2020)]{LuLu2020}
Yulong Lu and Jianfeng Lu.
\newblock A universal approximation theorem of deep neural networks for
  expressing probability distributions.
\newblock In \emph{Advances in Neural Information Processing Systems
  (NeurIPS)}, 2020.

\bibitem[Makhzani et~al.(2016)Makhzani, Shlens, Jaitly, Goodfellow, and
  Frey]{Makhzani2015}
Alireza Makhzani, Jonathon Shlens, Navdeep Jaitly, Ian Goodfellow, and Brendan
  Frey.
\newblock Adversarial autoencoders.
\newblock In \emph{International Conference on Learning Representations
  ({ICLR}) Workshop}, 2016.

\bibitem[Marousi and Charitopoulos(2025)]{Marousi2025}
Asimina Marousi and Vassilis~M. Charitopoulos.
\newblock Global robust optimisation for non-convex quadratic programs:
  {A}pplication to pooling problems.
\newblock \emph{Systems and Control Transactions}, 4:\penalty0 1592--1597,
  2025.
\newblock \doi{10.69997/sct.168949}.

\bibitem[Mohajerin~Esfahani and Kuhn(2018)]{MohajerinEsfahani2018}
Peyman Mohajerin~Esfahani and Daniel Kuhn.
\newblock Data-driven distributionally robust optimization using the
  {W}asserstein metric: performance guarantees and tractable reformulations.
\newblock \emph{Mathematical Programming}, 171\penalty0 (1):\penalty0 115--166,
  2018.
\newblock \doi{10.1007/s10107-017-1172-1}.

\bibitem[Papamakarios et~al.(2021)Papamakarios, Nalisnick, Rezende, Mohamed,
  and Lakshminarayanan]{Papamakarios2021}
George Papamakarios, Eric Nalisnick, Danilo~Jimenez Rezende, Shakir Mohamed,
  and Balaji Lakshminarayanan.
\newblock Normalizing flows for probabilistic modeling and inference.
\newblock \emph{Journal of Machine Learning Research}, 22\penalty0
  (57):\penalty0 1--64, 2021.

\bibitem[Ravindran(2008)]{Ravindran2008}
A.~Ravi Ravindran, editor.
\newblock \emph{Operations Research Methodologies}.
\newblock CRC Press, Boca Raton, FL, 2008.
\newblock ISBN 9781420091823.

\bibitem[Rezende and Viola(2018)]{Rezende2018Taming}
Danilo~Jimenez Rezende and Fabio Viola.
\newblock Taming {VAE}s.
\newblock \emph{arXiv:1810.00597}, 2018.

\bibitem[Soyster(1973)]{Soyster1973}
Allen~L. Soyster.
\newblock Technical note---convex programming with set-inclusive constraints
  and applications to inexact linear programming.
\newblock \emph{Operations Research}, 21\penalty0 (5):\penalty0 1154--1157,
  1973.

\bibitem[Stratigakos and Andrianesis(2025)]{Stratigakos2025}
Akylas Stratigakos and Panagiotis Andrianesis.
\newblock Learning data-driven uncertainty set partitions for robust and
  adaptive energy forecasting with missing data.
\newblock \emph{arXiv preprint arXiv:2503.20410}, 2025.

\bibitem[Tjeng et~al.(2019)Tjeng, Xiao, and Tedrake]{Tjeng2019}
Vincent Tjeng, Kai~Y. Xiao, and Russ Tedrake.
\newblock Evaluating robustness of neural networks with mixed integer
  programming.
\newblock In \emph{International Conference on Learning Representations
  (ICLR)}, 2019.

\bibitem[Tolstikhin et~al.(2018)Tolstikhin, Bousquet, Gelly, and
  Sch{\"o}lkopf]{Tolstikhin2018}
Ilya Tolstikhin, Olivier Bousquet, Sylvain Gelly, and Bernhard Sch{\"o}lkopf.
\newblock Wasserstein auto-encoders.
\newblock In \emph{International Conference on Learning Representations
  (ICLR)}, 2018.

\bibitem[Tomczak and Welling(2018)]{Tomczak2018VampPrior}
Jakub~M. Tomczak and Max Welling.
\newblock {VAE} with a {V}amp{P}rior.
\newblock In \emph{AISTATS}, pages 1214--1223, 2018.

\bibitem[Wiesemann et~al.(2014)Wiesemann, Kuhn, and Sim]{WiesemannKuhnSim2014}
Wolfram Wiesemann, Daniel Kuhn, and Melvyn Sim.
\newblock Distributionally robust convex optimization.
\newblock \emph{Operations Research}, 62\penalty0 (6):\penalty0 1358--1376,
  2014.
\newblock \doi{10.1287/opre.2014.1314}.

\bibitem[Xu et~al.(2024)Xu, Lee, Cheng, and Xie]{Xu2024flowdro}
Chen Xu, Jonghyeok Lee, Xiuyuan Cheng, and Yao Xie.
\newblock Flow-based distributionally robust optimization.
\newblock \emph{IEEE Journal on Selected Areas in Information Theory},
  5:\penalty0 62--77, 2024.
\newblock \doi{10.1109/JSAIT.2024.3370699}.

\end{thebibliography}

\end{document}